%% file: main.tex
\newif\ifarxiv
\arxivtrue 
\ifdefined\arxivbuild\arxivtrue\fi

\ifarxiv
  \PassOptionsToClass{nonacm}{aamas}      
\else
  \PassOptionsToClass{anonymous}{aamas}
\fi
\documentclass[sigconf]{aamas}

\usepackage{bbding}
\usepackage{balance}
\usepackage{adjustbox}
\usepackage{enumitem}
\usepackage{multirow}
\usepackage{tikz}
\usetikzlibrary{positioning,arrows.meta,calc,fit,backgrounds}

\makeatletter
\gdef\@copyrightpermission{
  \begin{minipage}{0.2\columnwidth}
   \href{https://creativecommons.org/licenses/by/4.0/}{\includegraphics[width=0.90\textwidth]{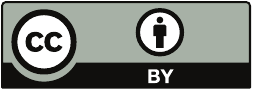}}
  \end{minipage}\hfill
  \begin{minipage}{0.8\columnwidth}
   \href{https://creativecommons.org/licenses/by/4.0/}{This work is licensed under a Creative Commons Attribution International 4.0 License.}
  \end{minipage}
  \vspace{5pt}
}
\makeatother
\ifarxiv
  \setcopyright{none}
  \makeatletter
  \AtBeginDocument{%
    \fancypagestyle{standardpagestyle}{%
      \fancyhf{}\renewcommand{\headrulewidth}{\z@}\renewcommand{\footrulewidth}{\z@}%
      \fancyhead[LO]{\@headfootfont\shorttitle}%
      \fancyhead[RE]{\@headfootfont\@shortauthors}%
      \fancyfoot[C]{\footnotesize\thepage}}%
    \fancypagestyle{firstpagestyle}{%
      \fancyhf{}\renewcommand{\headrulewidth}{\z@}\renewcommand{\footrulewidth}{\z@}%
      \fancyfoot[C]{\footnotesize\thepage}}%
    \pagestyle{standardpagestyle}}
  \makeatother
\else
  \setcopyright{ifaamas}
\fi
\acmConference[AAMAS 2027]{Proc.\@ of the 26th International Conference
on Autonomous Agents and Multiagent Systems (AAMAS 2027)}{3--7 May 2027}{Hanoi, Vietnam}{M.~Baldoni, F.~Fang, W.~Yeoh, N.~Yorke-Smith (eds.)}
\copyrightyear{2027}
\acmYear{2027}
\acmDOI{}
\acmISBN{}
\acmSubmissionID{1536}
\submissionType{Research Paper Track}

\newcommand{\cmark}{\textcolor{teal}{\ding{51}}}
\newcommand{\xmark}{\textcolor{gray!60}{\ding{55}}}
\newcommand{\pmark}{\textcolor{orange}{\textbf{$\bullet$}}}
\newcommand{\pending}[1]{\textcolor{red}{[#1]}}
\newcommand{\Pa}{\mathrm{Pa}}
\newcommand{\dox}{\mathrm{do}}
\newcommand{\nbn}{\texttt{NBN}}
\newcommand{\lib}[1]{\texttt{#1}}
\newcommand{\figfile}[3][\textwidth]{\IfFileExists{#2}{\includegraphics[width=#1]{#2}}{\fbox{\parbox[c][#3][c]{0.97\textwidth}{\centering\pending{figure slot: \texttt{\detokenize{#2}}}}}}}

\title[Neural Bayesian Networks]{Neural Bayesian Networks \\for Modular and Scalable Decision-Making Under Uncertainty}

\ifarxiv
\author{Giovanni Briglia\textsuperscript{\Envelope}}
\affiliation{\institution{University of Modena and Reggio Emilia}\city{Reggio Emilia}\country{Italy}}
\email{giovanni.briglia@unimore.it}

\author{Marco Lippi}
\affiliation{\institution{University of Florence}\city{Florence}\country{Italy}}
\email{marco.lippi@unifi.it}

\author{Stefano Mariani}
\affiliation{\institution{University of Modena and Reggio Emilia}\city{Reggio Emilia}\country{Italy}}
\email{stefano.mariani@unimore.it}

\author{Franco Zambonelli}
\affiliation{\institution{University of Modena and Reggio Emilia}\city{Reggio Emilia}\country{Italy}}
\email{franco.zambonelli@unimore.it}
\else
\author{Anonymous Author(s)}
\affiliation{\institution{Anonymous Institution}\city{}\country{}}
\email{}
\fi

\begin{abstract}
    Bayesian Networks (BNs) are established tools for letting agents reason under uncertainty in sequential decision-making.
    They split a joint distribution into local conditional mechanisms arranged on a directed acyclic graph, which makes them modular, interpretable, and ready to perform probabilistic inference.
    Their classical formulation, however, has limitations in both expressiveness and scale: each conditional is usually stored as a table or a linear-Gaussian form, and inference evaluates the graph one node and one query at a time.
    In this paper we introduce \emph{Neural Bayesian Networks} (NBNs), in which each conditional is a differentiable, tensor-valued mechanism, while the graph and its factorization are unchanged. 
    NBNs retain the modularity and local mechanisms of BNs, can represent any learnable conditional distribution, and run natively on GPUs.
    We also release \nbn, a PyTorch-based library for learning and inference in NBNs, together with a public benchmark.
    Our evaluation shows gains in both expressiveness and scale: NBNs fit distributions that classical BN libraries struggle to represent, and on large networks and large batches of queries they run two orders of magnitude faster thanks to GPU parallelization. 
    \ifarxiv \nbn{} is available at \url{https://github.com/Giovannibriglia/NeuralBayesianNetworks}.\fi
\end{abstract}

\keywords{Bayesian networks; Decision Making; Neural Networks; Model-Based Agents}

\begin{document}
\maketitle




\section{Introduction}
\label{sec:intro}

An agent that acts under uncertainty would benefit from a model of how its environment responds to its actions to reason about probabilistic outcomes reliably. 
Model-based reinforcement learning, planning, and diagnosis all can leverage a transition model that can be learned from data and queried under evidence \cite{kaelbling1998planning,ha2018world,levine2020offlinerl}. 
If the dynamics of the environment are known, at least in the sense of which variables influence which, the decision-making problem can be written as a factored Markov decision process (F-MDP) \cite{boutilier1995exploiting,kearns1999factored,guestrin2003efficient}, whose transition model is a Bayesian network (BN) over the state variables. 
The BN is then the tool that lets the agent reason under uncertainty: it can predict the next state, and it can explain an observation in terms of the state that produced it. 
This matters most when the state is high-dimensional, because the factored model stays compact where a flat one does not. 

Classical BNs, however, were mostly designed for discrete variables and small graphs \cite{pearl1988probabilistic,koller2009probabilistic}. A conditional probability table grows combinatorially with the parents and is undefined when a parent is continuous. 
Additionally, both exact and approximate inference evaluate the conditionals one node and one sample at a time. These limitations stem from representing each conditional as a scalar function and from evaluating the graph sequentially. 
They are thus inherited by mainstream BN software packages (Table~\ref{tab:bn_comparison}).

We propose \emph{Neural Bayesian Network} (NBN) as a new tool for autonomous agents when dealing with decision-making under uncertainty. 
NBNs keep the structural properties of BNs (i.e., the graph) while changing the conditionals to leverage more expressive and scalable representations based on neural networks.
In an NBN each local conditional $p(x_i \mid x_{\Pa(i)})$ is a differentiable, tensor-valued \emph{mechanism} that takes a batch of parent configurations and returns the full conditional distribution for each row (Figure~\ref{fig:overview}). The graph and the factorization are exactly those of a BN. What changes is their computational representation: the mechanisms can be tabular, parametric, neural or non-parametric; every one of them exposes density, sampling, and gradients through one unified interface; and a whole layer of the graph is evaluated in one tensor operation on a GPU, for thousands of samples and evidence rows at once.

\begin{figure*}[t]
\centering
\input{figures/fig_overview.tex}
\caption{From a classical BN to an NBN. Left: each node stores a scalar conditional, evaluated one node and one configuration at a time. Right: each node stores a tensor-valued differentiable mechanism $\mathcal{M}_i$ that maps a batch of $B$ parent configurations to a batch of conditional distributions. The DAG and the factorization are unchanged.}
\label{fig:overview}
\end{figure*}

This advancement enables BNs to be used efficiently where they were not practical before: on continuous and non-Gaussian variables without reasonable discretization available, on networks of thousands of nodes, and for thousands of queries answered in one call. 
As such, NBNs make BNs seamless to integrate with learning-based agent architectures (e.g., reinforcement learning and cognitive architectures) and, more generally, with any machine learning toolchain: the NBN model is a differentiable module that is trained by gradient descent and runs on the same computational accelerators as a neural network (e.g., a GPU), while keeping the explicit graph, modular representation, and local mechanisms factorisation of the BNs' symbolic model.

Our contributions are:
\begin{enumerate}[leftmargin=*,nosep]
  \item \textbf{Representation.} We define NBNs (Section~\ref{sec:nbn}) and show that they are semantically equivalent to classical BNs, end-to-end differentiable, and whose inference scales with the number of active nodes instead of samples or queries.
  \item \textbf{Library.} We release \nbn, a PyTorch implementation with tabular, linear-Gaussian, neural and non-parametric mechanisms and with exact, sampling-based and amortized inference engines, all batched on the GPU.
  \item \textbf{Benchmark.} We release a benchmark for BN learning and inference and compare \nbn{} with state-of-the-art BN frameworks on discrete, Gaussian, and non-Gaussian networks of different dimensions (Section~\ref{sec:exp}). 
\end{enumerate}
Although NBNs do not yet win all comparisons, the benchmark confirms their potential with regards to scalability and expressiveness, while also pointing to further assessment and development.

The paper is organised as follows. Section~\ref{sec:nbn} introduces NBNs, contrasts them with BNs, and define their properties. Section~\ref{sec:exp} presents the design of the benchmark and Section~\ref{sec:results} its results. Section~\ref{sec:related} reviews related works, and Section~\ref{sec:conc} concludes.

\section{Neural Bayesian Networks}
\label{sec:nbn}


A Neural Bayesian Network (NBN) is a BN whose conditional distributions are learnable modules. The graph acts as the architecture of the model: each node hosts one module, its \emph{mechanism}, which receives the values of the node's parents and returns the distribution of the node (Figure~\ref{fig:overview}). Learning fits the mechanisms to data and inference runs them along the graph, both on batches, as in a neural network.

Similarly to a BN, an NBN over variables $X = (X_1,\dots,X_d)$ is a directed acyclic graph (DAG) $\mathcal{G} = (V,E)$ with $V = \{1,\dots,d\}$ and one conditional distribution $p_i(x_i \mid x_{\Pa(i)})$ per node, where $\Pa(i)$ are the parents of $i$ in $\mathcal{G}$ \cite{pearl1988probabilistic,koller2009probabilistic}. 
The joint distribution is the product of the conditionals,
\begin{equation}
\label{eq:bn-factorization}
p(x_1,\dots,x_d) = \prod_{i=1}^{d} p_i\!\left(x_i \mid x_{\Pa(i)}\right),
\end{equation}
so each variable is independent of its non-descendants given its parents.
%
Differently from a BN, where a conditional is a conditional probability table (CPT) or a linear-Gaussian form, an NBN admits arbitrary representations of a conditional distribution: tabular entries, parametric and non-parametric expressions, and neural network based approximations, possibly mixed in the same graph.

Additionally, in a BN, \emph{parameter learning} estimates the conditionals one at a time using the log-likelihood $\sum_{n}\sum_i \log p_{i}(x^{(n)}_i \mid x^{(n)}_{\Pa(i)})$ based on counting for a CPT and on the least squares method for a linear-Gaussian form \cite{heckerman1998tutorial}. 
In an NBN, instead, every conditional is differentiable, so the same log-likelihood can also be maximized by gradient descent on batches of data---for all the conditionals at once, and for those that admit no closed-form estimate, too.
Also \emph{inference} proceeds one variable at a time in a BN, by computing the posterior $p(x_Q \mid x_E)$ of query variables $Q$ given observed evidence $x_E$ using variable elimination at a cost exponential in the treewidth of the graph \cite{DBLP:journals/ai/Cooper90}. 
In an NBN, instead, each conditional is evaluated once for a whole batch of samples and evidence assignments, so that many queries are answered in a single pass over the graph, by exact elimination where the treewidth allows it and by sampling elsewhere.
As we will show in the following, BNs are special case of NBNs when the closed-form methods are feasible.

\nbn{} is written in PyTorch \cite{paszke2019pytorch}. The network is an \texttt{nn.Module}. A user builds it from a DAG and a per-node mechanism assignment, calls \texttt{fit} once, and then \texttt{query}. 
The benchmark, its configuration, and the complete results are released with the library.\footnote{\ifarxiv\url{https://github.com/Giovannibriglia/NeuralBayesianNetworks}\else Link withheld for review. Repository provided as supplementary material.\fi}

\subsection{Motivation}
\label{ssec:example}

BNs are a common ``world model'' for agents that decide under uncertainty: they are the transition models of factored Markov decision processes in planning and reinforcement learning~\cite{boutilier1995exploiting,kearns1999factored,guestrin2003efficient}, and the diagnostic models used in troubleshooting, risk analysis, and healthcare \cite{DBLP:conf/uai/BreeseH96,weber2012bn,kyrimi2021bnhealth}. 

Consider a decision-making agent that has to deal with a Markov decision process (MDP) formalised as $(\mathcal{S}, \mathcal{A}, P, R, \gamma)$ with state $s \in \mathcal{S}$, action $a \in \mathcal{A}$ and transition kernel $P(s' \mid s, a)$. 
The MDP treats the state as a single object, so $P$ is one distribution over the joint next state for every pair $(s,a)$, and its size grows exponentially with the number of state variables. 
If we know which current variables each next-state variable depends on, we can move to a factored MDP \cite{boutilier1995exploiting,kearns1999factored}, which is much lighter (Figure~\ref{fig:fmdp}):
\begin{equation}
\label{eq:factored-mdp}
P(s' \mid s, a) \;=\; \prod_{j=1}^{n} p\!\left(s'_j \mid \Pa(s'_j)\right),
\end{equation}
where the parents $\Pa(s'_j)$ of each next-state variable are a few of the current state and action variables. This is a two-slice BN over $X = (s, a, s')$. 
The same formulation applies to agents that face non-sequential scenarios, such as while doing diagnosis, where the BN has a single slice. 

\begin{figure}[t]
\centering
\input{figures/fig_fmdp.tex}
\caption{From an MDP to a factored MDP. When it is known which variables influence which, the transition kernel over the joint state becomes a two-slice BN with one conditional per next-state variable.}
\label{fig:fmdp}
\end{figure}

The agent has to deal with two problems:
\begin{enumerate}[leftmargin=*,nosep]
  \item[\textbf{P1}] \textbf{Learning.} Given $\mathcal{G}$ and a dataset $\mathcal{D}$ of $N$ samples, estimate every conditional of Eq.~\ref{eq:bn-factorization}, for discrete and for continuous variables, \emph{without fixing the shape of the conditional in advance}.
  \item[\textbf{P2}] \textbf{Inference.} Given the learned model, compute $p(x_Q \mid x_E)$ for query variables $Q$ and evidence $x_E$, \emph{for many evidence assignments at once}.
\end{enumerate}
In this work we assume that the structure is given and focus on parameter learning and inference, while leaving structure learning and the sampling of trajectories for planning to future work. 
More precisely, we make the standard assumptions of parameter learning in BNs. (A1)~The graph $\mathcal{G}$ is given and the data distribution factorizes according to it. (A2)~The data are complete, so every variable is observed in every sample, and the samples are independent and identically distributed. (A3)~Each variable is a discrete variable with finitely many values or a real-valued scalar.
All statements below are for a BN over $d$ variables and cover both cases.

Classical BNs solve P1 and P2 only in part, because of three limitations of scalar conditionals evaluated one at a time. 
(L1)~\emph{Representation, which limits P1.} A CPT for $X_i$ has $K_i \prod_{j\in\Pa(i)} K_j$ entries, with $K_j$ the number of values of $X_j$, and is undefined for a continuous parent. Discretizing a parent introduces a binning parameter and loses information, and the linear-Gaussian alternative cannot represent multi-modal or heteroscedastic conditionals. Both fix the shape of the outcome uncertainty before seeing the data: a table needs the continuous outcome discretized, a Gaussian has one mode and a spread that does not depend on the parents. 
(L2)~\emph{Sequential evaluation, which limits P1 and P2.} A likelihood evaluation on one sample visits the $d$ nodes in order, and each visit is a scalar lookup or a scalar density. Nothing is shared across samples or across queries, so $S$ samples for $B$ evidence assignments cost $\Theta(d \cdot S \cdot B)$ scalar operations. 
(L3)~\emph{Exact inference, which limits P2.} Variable elimination on a network of treewidth $w$ costs $O(d\,c^{w})$ with $c$ being the maximum cardinality \cite{DBLP:journals/ai/Cooper90}, and it exists only for discrete and linear-Gaussian networks. Sampling avoids the exponent but inherits (L2).

A solution to P1 therefore needs conditionals that are not restricted to tables and linear-Gaussian forms (L1) and that are fit on batches of data (L2). A solution to P2 needs an evaluation whose cost does not grow with $S \cdot B$ (L2), with exact inference where the treewidth allows it and batched sampling elsewhere (L3). 
As described in Section \ref{subsec:nbn-formulation}, NBNs provide all of these properties.

\subsection{Formal definition and Properties}
\label{subsec:nbn-formulation}

Let $\mathcal{G}$ be a DAG over $X_1,\dots,X_d$ with $X_i \in \mathcal{X}_i$. An NBN replaces each conditional by a parametrised mapping
\begin{equation}
\label{eq:nbn-mechanism}
p_{\phi_i}\!\left(x_i \mid x_{\Pa(i)}\right) = \mathcal{M}_i\!\left(x_{\Pa(i)};\,\phi_i\right),
\end{equation}
where $\mathcal{M}_i$ takes a batch $X_{\Pa(i)} \in \mathbb{R}^{B \times |\Pa(i)|}$ of parent configurations and returns, for every row, a distribution over $\mathcal{X}_i$ with a density $\log p_{\phi_i}$, a sampler, and where it exists a reparametrised sampler. The joint is still Eq.~\ref{eq:bn-factorization}:
\begin{equation}
\label{eq:nbn-joint}
p_\phi(X) = \prod_{i=1}^d p_{\phi_i}\!\left(X_i \mid X_{\Pa(i)}\right).
\end{equation}
Two design choices are worth emphasising here. First, $\mathcal{M}_i$ returns the full conditional, not a sample or a point estimate, so learning and inference reach every mechanism through one interface. Second, $\phi_i$ are tensors on which autograd works, so a table, a linear-Gaussian form and a neural network are instances of the same class and can be mixed in one graph. All nodes at the same depth of $\mathcal{G}$ are evaluated in one batched call.


With this representation, NBNs support the following properties. 

\begin{proposition}[Semantic equivalence]
\label{prop:semantic-equivalence}
Let $p$ be a BN on $\mathcal{G}$ and $p_\phi$ an NBN on $\mathcal{G}$ with $p_{\phi_i}(x_i \mid x_{\Pa(i)}) = p_i(x_i \mid x_{\Pa(i)})$ for all $i$ and all arguments. Then $p_\phi = p$, and $p_\phi$ satisfies the global Markov property of $\mathcal{G}$.
\end{proposition}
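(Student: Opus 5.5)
The proof splits into two parts: an equality of joints, which follows directly from the factorization, and the global Markov property, which we reduce to the classical result for BNs.

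\emph{Equality of the joints.} By Eq.~\ref{eq:nbn-joint}, $p_\phi(x) = \prod_{i=1}^d p_{\phi_i}(x_i \mid x_{\Pa(i)})$. By hypothesis each factor coincides pointwise with $p_i(x_i \mid x_{\Pa(i)})$. Substituting factor by factor gives Eq.~\ref{eq:bn-factorization}, hence $p_\phi(x) = p(x)$ for every $x \in \mathcal{X}_1\times\dots\times\mathcal{X}_d$. The only care needed concerns assumption (A3): for continuous coordinates the conditionals are densities with respect to Lebesgue measure, and for discrete ones they are mass functions with respect to counting measure. The product is then a density with respect to the product measure, and pointwise equality of densities gives equality of the induced distributions.

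We also want to confirm that $p_\phi$ is a genuine joint distribution and not just a formal product, since the subsequent Markov argument uses marginals. Take a topological order of $\mathcal{G}$ and integrate (or sum) out variables from the last one backward. Each $p_{\phi_i}(\cdot \mid x_{\Pa(i)})$ is a normalized distribution, because $\mathcal{M}_i$ returns a distribution over $\mathcal{X}_i$ for every row. The innermost integral therefore equals $1$, and by Fubini--Tonelli (nonnegative integrands) the total mass is $1$.

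\emph{Global Markov property.} The key step is to reduce this to the classical theorem for BNs rather than reprove it. Since $p_\phi$ factorizes according to $\mathcal{G}$ in the sense of Eq.~\ref{eq:bn-factorization}, the standard result applies (Lauritzen; Koller and Friedman, Thm.~3.2 and its continuous analogue). This result states that recursive factorization along a DAG implies the directed global Markov property: $A \perp B \mid C$ whenever $C$ d-separates $A$ from $B$ in $\mathcal{G}$. Alternatively, one can note that $p_\phi = p$ and that $p$ satisfies the property because it is a BN. We state both routes, since the first shows the conclusion does not depend on the mechanism class.

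The main obstacle is the mixed discrete/continuous case, where textbook proofs are often written only for strictly positive discrete distributions. I would handle it via the moralization route: the ancestral set $\mathrm{An}(A\cup B\cup C)$ has a marginal that again factorizes over the induced sub-DAG, again by integrating out sinks as above. This factorization gives a factorization over the moral graph, and separation in that undirected graph yields conditional independence through the factorization criterion. That criterion requires no positivity, only a dominating product measure, which (A3) provides.
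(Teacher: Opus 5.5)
Your proposal is correct and follows the same route as the paper: both joints are the product of Eq.~\ref{eq:bn-factorization} over identical factors, and the global Markov property follows from the factorization alone via the standard BN theorem. Your normalization check and your moralization argument for mixed discrete/continuous variables are sound elaborations of what the paper delegates to \cite{koller2009probabilistic}. One citation fix: the d-separation statement you need is the soundness theorem of Section~3.3 there, whereas Thm.~3.2 only gives the local independencies.
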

\begin{proof}
Both joints are the product in Eq.~\ref{eq:bn-factorization} of the same factors. The Markov property depends only on the factorization \cite{koller2009probabilistic}.
\end{proof}
Every classical BN is an NBN with tabular or linear-Gaussian mechanisms, and everything that follows from the factorization (e.g., conditional independences) holds for NBNs. 

\begin{proposition}[Differentiability]
\label{prop:differentiability}
If every $\log p_{\phi_i}$ is differentiable in $\phi_i$, the log-likelihood $\ell(\phi;x) = \sum_i \log p_{\phi_i}(x_i \mid x_{\Pa(i)})$ is differentiable in $\phi$, $\nabla_{\phi_i}\ell$ depends only on the term of node $i$, and the whole gradient is one backward pass.
\end{proposition}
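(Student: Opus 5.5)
The plan is to exploit the fact that the log-likelihood is a finite sum of per-node terms, each depending on its own parameter block. Write $\phi = (\phi_1,\dots,\phi_d)$ and treat the data point $x$ as fixed. Here I assume, as the construction in Eq.~\ref{eq:nbn-mechanism} implicitly does, that the parameter blocks of different mechanisms are disjoint, i.e.\ no weights are shared across nodes. The term $\ell_i(\phi_i) := \log p_{\phi_i}(x_i \mid x_{\Pa(i)})$ then depends on $\phi$ only through $\phi_i$, because the parent values $x_{\Pa(i)}$ are fixed data and not functions of $\phi$. Differentiability of $\ell = \sum_i \ell_i$ follows because each $\ell_i \circ \pi_i$ is differentiable in $\phi$, where $\pi_i$ is the linear projection $\phi \mapsto \phi_i$. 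A finite sum of differentiable functions is differentiable.

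For locality, compute the block gradient directly:
\[
\nabla_{\phi_i}\ell = \sum_{j=1}^d \nabla_{\phi_i}\ell_j(\phi_j) = \nabla_{\phi_i}\ell_i(\phi_i),
\]
since $\nabla_{\phi_i}\ell_j = 0$ for $j \neq i$. So the gradient with respect to $\phi_i$ depends only on node $i$'s term, namely on $x_i$, $x_{\Pa(i)}$ and $\phi_i$. This is the same decomposition that makes classical parameter learning separable node by node, and it extends to a dataset or minibatch by summing over samples.

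For the ``one backward pass'' claim, I would appeal to reverse-mode automatic differentiation. The computation graph of $\ell$ is a DAG whose single scalar output is the sum node. Reverse-mode differentiation returns the gradient of one scalar output with respect to all inputs in a single backward sweep, at a cost within a constant factor of the forward evaluation. Because the blocks are disjoint, the sweep simply routes the upstream gradient $1$ to each $\ell_i$ and back-propagates through each mechanism independently. Batching over $B$ rows does not change this, since the batched loss is still one scalar.

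I expect the main obstacle to be stating the hypotheses precisely rather than any calculation. The claim is false if parameters are tied across nodes, in which case gradients accumulate across terms. It also fails if evidence values are treated as outputs of other mechanisms, for example reparametrised samples, as opposed to fixed data under the complete-data assumption (A2). I would therefore make both conditions explicit. I would also note that for discrete $x_i$ the term $\log p_{\phi_i}$ is a lookup of a logit or probability that is differentiable in $\phi_i$, so assumption (A3) is covered.
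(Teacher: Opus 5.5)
Your proof is correct and takes the same approach as the paper's: $\ell$ is a finite sum of terms, each depending on a single block $\phi_i$, so the block gradients decouple, and reverse-mode differentiation of the scalar sum gives the whole gradient in one backward pass. Your explicit hypotheses are what the paper's one-line proof leaves implicit: parameter blocks that are disjoint and not tied across nodes, and parent values treated as fixed data rather than as reparametrised samples.
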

\begin{proof}
$\ell$ is a sum of terms depending on one $\phi_i$; reverse-mode differentiation of the sum evaluates all terms in a single pass.
\end{proof}
Parameter learning (P1) is therefore gradient descent on $-\ell$, per node since the terms are independent (closed form where the family admits it), or jointly when a downstream differentiable objective couples the nodes.

\begin{proposition}[Cost of inference]
\label{prop:complexity}
Consider a query $p_\phi(x_Q \mid x_E)$ answered by ancestral simulation with $S$ samples and $B$ evidence assignments, and let $A = \mathrm{Anc}(Q \cup E)$ be the active nodes. An NBN issues $O(\mathrm{depth}(\mathcal{G}[A]))$ sequential tensor operations, each of cost $c(\mathcal{M}, B\cdot S)$, so the wall-clock time is $O(\mathrm{depth}(\mathcal{G}[A]) \cdot c(\mathcal{M}, BS))$; a classical implementation issues $\Theta(|A| \cdot B \cdot S)$ scalar operations. For a fixed family, $c(\mathcal{M}, BS)$ is constant in $BS$ until the device saturates.
\end{proposition}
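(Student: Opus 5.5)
The argument is a counting argument on the ancestral ordering of the active subgraph, followed by a direct count for the classical implementation. First we restrict attention to $A = \mathrm{Anc}(Q \cup E)$. By the factorization in Eq.~\ref{eq:nbn-joint}, marginalizing out the non-ancestors of $Q \cup E$ leaves the product of the conditionals of the nodes in $A$ unchanged. Each such marginalized node is a leaf of the remaining graph and integrates to one, so removing leaves one at a time gives the claim. Hence $p_\phi(x_Q \mid x_E)$ depends only on the mechanisms $\mathcal{M}_i$ with $i \in A$, and ancestral simulation need only visit $\mathcal{G}[A]$. Since $A$ is closed under ancestors, $\mathcal{G}[A]$ is a DAG whose sources are sources of $\mathcal{G}$.

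Next we partition $A$ into layers $L_0,\dots,L_D$ with $D = \mathrm{depth}(\mathcal{G}[A])$. A node belongs to $L_k$ if the longest directed path ending at it has length $k$. Every parent of a node in $L_k$ then lies in some $L_{k'}$ with $k'<k$. Consequently, once the samples of $L_0,\dots,L_{k-1}$ are available as tensors of shape $(B\cdot S)\times|\cdot|$, all mechanisms in $L_k$ can be evaluated together. We stack the $S$ particles for each of the $B$ evidence rows into one batch of size $BS$. Each node is then either sampled or clamped to its evidence column, with its likelihood weight $\log p_{\phi_i}(x_i\mid x_{\Pa(i)})$ accumulated for $i\in E$. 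Each layer is a constant number of batched calls: gather parents, call $\mathcal{M}_i$, sample or clamp, and add to the log-weights. Nodes of the same family are fused into one call as described after Eq.~\ref{eq:nbn-joint}. The final normalization of the weights and the readout of $Q$ is one more reduction over the $S$ axis.

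This gives $O(D)$ sequential tensor operations, each of cost $c(\mathcal{M},BS)$, and so the stated wall-clock bound. For the classical implementation we count directly. Ancestral sampling with likelihood weighting must produce a value for every $(i,b,s)\in A\times[B]\times[S]$, and each such value requires at least one scalar lookup or density evaluation. It needs only $O(1)$ of them given bounded in-degree, or $O(|\Pa(i)|)$ otherwise, which gives $\Theta(|A|\,B\,S)$.

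The main obstacle is the last sentence, that $c(\mathcal{M},BS)$ is constant in $BS$ until the device saturates. This is a statement about a parallel cost model rather than a mathematical fact. I would make it precise by adopting a PRAM- or work-span-style model with $P$ processing units. A batched mechanism of a fixed family has per-row work $w_{\mathcal{M}}$ independent of $BS$ and span $s_{\mathcal{M}}$, for example $O(\log)$ depth for a fixed-size MLP, table gather, or Gaussian. By Brent's bound its time is then $O(w_{\mathcal{M}}BS/P + s_{\mathcal{M}})$, which is $O(s_{\mathcal{M}})$, constant in $BS$, whenever $BS\,w_{\mathcal{M}}\le P\,s_{\mathcal{M}}$. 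That condition is exactly the non-saturation regime. Beyond it the cost grows linearly in $BS/P$. I would state this caveat explicitly, along with a second one: exact-elimination queries are outside the scope of this proposition and are governed by (L3).
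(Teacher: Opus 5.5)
Your proposal is correct and takes the same approach as the paper: prune the barren nodes outside $A$, evaluate each depth level of $\mathcal{G}[A]$ in one batched call over the $B\cdot S$ rows, and count (node, sample, row) triplets for the classical implementation. Your work-span (Brent) treatment of the saturation clause is an addition rather than a different route: the paper's proof does not justify that sentence at all and treats it as a property of the hardware.
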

\begin{proof}
Nodes outside $A$ are barren and pruned. Within $A$, all nodes at the same depth have their parents available and are evaluated in one call over the $B\cdot S$ rows; there are $\mathrm{depth}(\mathcal{G}[A])$ such levels. The scalar count is the number of (node, sample, row) triplets.
\end{proof}
The claim is not that NBNs change the dependence on the graph, but that for inference (P2) they remove dependence on $S$ and $B$ up to device saturation, so one or a thousand queries cost the same---on a GPU the saturation point is far, on a CPU it is near.

These gains do not come at the cost of sacrificing the usual BN properties that are behind their adoption: \emph{modularity} is preserved since each mechanism has its own parameters and can be inspected, refit, or replaced without touching the others; the  \emph{explicit structure} stands still (in the DAG), which a designer can read and edit; 
and the \emph{``one model for every query''} workflow is also there, since a fitted NBN answers $p(x_Q \mid x_E)$ for any $Q$ and $E$ just like BNs. 

Other routes to expressive and scalable probabilistic models give up at least one of them:  
see Section \ref{sec:related}. 

\subsection{Learning and Inference Methods}
\label{subsec:nbn-taxonomy}

Learning mechanisms differ in how much of the conditional is fixed in advance. \emph{Parametric} mechanisms fix its form and have a closed-form fit: a probability table for a discrete child, by counting under a Dirichlet prior \cite{heckerman1998tutorial}, and the linear-Gaussian form $\mathcal{N}(Wx{+}b,\sigma^2)$ for a continuous one \cite{lauritzen1989graphical}, by ridge regression \cite{hoerl1970ridge}. The \emph{neural} mechanism fixes the output form but not its dependence on the parents: a Mixture Density Network (MDN) \cite{bishop1994mdn} maps the parents with a multilayer perceptron to the parameters of a Gaussian mixture, fit by gradient descent on $-\ell$. It is still parametric, but linear in the number of parents and multi-modal. \emph{Non-parametric} mechanisms fix no form and grow with the data: KNN is a local mixture over the $k{=}\lceil\sqrt N\rceil$ training rows with the closest parents \cite{stone1977consistent}, and FlexCode \cite{izbicki2017converting} expands the conditional density in a cosine basis, one regression per coefficient.

Three inference engines answer $p_\phi(x_Q \mid x_E)$. \emph{Variable Elimination} (VE) \cite{zhang1994simple,obermeyer2019tensor} is exact and discrete-only: it drops the nodes that cannot affect the query and sums out the others by multiplying their tables in one batched tensor contraction, so $B$ evidence rows cost one pass. Being exponential in the induced width, it is our exact reference, not a scalable method.
\emph{Likelihood Weighting} (LW) \cite{fung1989weighing,shachter1989simulation} simulates $S$ samples with the evidence clamped and averages the query node, each sample weighted by the probability of the evidence; it needs only a sampler and a density per mechanism, so it works with every family. \emph{Amortized Variational Inference} (AVI) \cite{gershman2014amortized,le2017inference} avoids the samples LW wastes when the evidence is unlikely: a recognition network  $q_\psi(x_{-E}\mid x_E)$, trained once per model on the evidence lower bound \cite{kingma2014auto}, proposes samples close to the posterior for every query at constant cost; it falls back to LW when the proposal is worse than LW's own Monte-Carlo noise.


\section{Benchmark Design}
\label{sec:exp}

To the best of our knowledge, a fully reproducible benchmark for BNs has not been provided so far. The \lib{bnlearn} repository \cite{scutari2010bnlearn} offers domains and their structures, and it is the usual source of test networks, but the benchmark configuration is unspecified: which queries to ask, how to obtain the ground truth, which metrics to report, or how to count a method that fails or does not finish is not established. 
In this way, results are hardly comparable thus advancements difficult to identify. Hence, we built one and release it with \nbn. A benchmark is a configuration file that fixes the problems, the queries, the methods and the budgets, and the harness writes one row per problem, seed, method, query and metric, failures included. Its problems come from two sources: (i)~the real domains of \lib{bnlearn}, and (ii)~synthetic domains that we generate.
Synthetic networks are needed to define \textit{whatever} network structure (varying nodes and edges configuration) with \textit{whatever} underlying conditional probabilities (discrete, continuous gaussian and continuous non-gaussian), and evaluate sample efficiency, scalability and accuracy across \textit{whatever} BN.

We use the benchmark on \nbn to answer three research questions:
\begin{enumerate}[label=\textbf{RQ\arabic*}, wide=0pt, leftmargin=*, align=left, itemsep=0pt, parsep=0pt, topsep=0pt]
  \item[\textbf{RQ1.}] Where the classical BN assumptions hold, does \nbn{} match the classical BN libraries?
  \item[\textbf{RQ2.}] Where they do not hold, how much accuracy is gained?
  \item[\textbf{RQ3.}] Does the cost model of Proposition~\ref{prop:complexity} show up as wall-clock time when many queries are answered at once?
\end{enumerate}
The numerical settings that follow are fields of the configuration file. We report here those needed to read the results, and all the others in Appendix~\ref{app:design}.

\paragraph{Problems.}
A problem is a triplet of family, number of nodes $n$, and seed.
The \emph{family} is the type of the conditionals of the true model: \emph{discrete}, where every conditional is a CPT; \emph{linear-Gaussian}; or \emph{non-Gaussian}, where every conditional is a set of components forming a Gaussian mixture (that a linear-Gaussian model cannot represent correctly).
The seed draws a sparse random DAG over the $n$ nodes and then the parameters of every conditional.
From the true model we draw a training set ($20\,480$ rows), a test set that supplies the evidence values of the queries, and a reference set that supplies the ground truth.
Every problem is repeated over 5 seeds.
The real networks are the 13 networks of Section~\ref{subsec:exp-bnlearn}. 
We define a \emph{cell} as the combination of one method on one problem.

\paragraph{Queries.}
A query has two components: a target variable, and an evidence set, which pairs each observed variable with its value. The evidence values are always copied from a row of the test set, so the evidence is a configuration that the true model produces.
Three selectors choose targets and evidence variables, one per purpose:
\begin{enumerate}
    \item \emph{Random} (accuracy): the target and the evidence variables are drawn uniformly for the problem, and the query is asked for the evidence values of $128$ test rows; it measures the typical error over evidence values.
    \item \emph{Topological} (real networks): targets are chosen by their role in the graph, among the most connected nodes (hubs), the nodes whose removal disconnects the graph (cut nodes), and the deep nodes with few descendants (terminals). Each target is asked in the \emph{prediction} direction, with evidence among its ancestors, or in the \emph{diagnosis} direction, with evidence among its descendants.
    \item \emph{Heaviest} (scalability and speed): only the top-ranked hubs, cut nodes, and terminals are queried, each in both directions, once with the evidence observed (\emph{full}) and once with it marginalized (\emph{empty}). 
\end{enumerate}

\paragraph{Reference and metrics.}
For a discrete target the reference posterior $\hat{p}$ is its empirical distribution among the reference samples that agree with the evidence, and the metric is the total variation (TV) between $\hat{p}$ and the answer $q$, $\mathrm{TV}(\hat{p}, q) = \tfrac{1}{2}\sum_{x_Q}\lvert \hat{p}(x_Q) - q(x_Q) \rvert$~\cite{DBLP:books/daglib/0035708}. 
For a continuous target the reference is a sample from $p(x_Q \mid x_E)$ obtained by likelihood weighting on the true model, and the metric is the 1-Wasserstein distance (W1) between the empirical distribution $\hat{\mu}$ of this sample and the empirical distribution $\nu$ of the samples returned by the method, $W_1(\hat{\mu}, \nu) = \inf_{\gamma \in \Pi(\hat{\mu}, \nu)} \mathbb{E}_{(x, x') \sim \gamma}\lvert x - x' \rvert$, where $\Pi(\hat{\mu}, \nu)$ is the set of couplings of $\hat{\mu}$ and $\nu$~\cite{gabriel2019computational}.
In both cases the reference is a finite sample and not the exact posterior, so the measured accuracy has a floor that all methods share. On the discrete networks of the accuracy scenario it is a TV of about $0.06$ at 10 nodes and $0.08$ from 50 nodes on, which are the values exact engines reach in Figure~\ref{fig:inference}. We use a sampled reference because an exact one is not available for every family and size: variable elimination on the true model is exponential in the treewidth, and non-Gaussian conditionals have no closed-form posterior. Computing exact references when tractable is left for future work.
The main text reports this accuracy metric (TV or W1) for the accuracy, scalability, and real-network scenarios, and the time per query for the speed batching scenario, measured as the wall-clock time of one call divided by the number of answered queries. Complete fit and query times are in Appendix~\ref{app:results}.

\paragraph{Budgets and failures.}
Every cell has a budget of 1200\,s for the fit, a separate budget of 1200\,s for its queries, and a memory cap: $80\%$ of RAM ($\sim 100$\,GB on our server, that is a more favourable scenarios for cpu-based libraries e.g. \lib{pgmpy}.), and the 12\,GB of the GPU. It ends with a status: \texttt{ok}, \texttt{timeout}, \texttt{oom} or \texttt{not\_supported}. In the figures a bar is the interquartile mean (IQM) over the seeds a method solved, as \cite{agarwal2021deep} suggests; $f/n$ is printed when $f$ of $n$ seeds failed, and a missing bar indicates the failure status.

\paragraph{Scenarios.}
Each scenario is one configuration. \emph{Accuracy} fixes the query load (128 random queries) and varies the family and the size, $n \in \{10, 50, 100, 500, 1000\}$: on the discrete and the linear-Gaussian family the classical model is correct, which isolates RQ1, and on the non-Gaussian family it is misspecified, which isolates RQ2. \emph{Scalability} pushes the network size to $n \in \{10, 10^2, 10^3, 10^4\}$ on the heaviest queries and records where each method stops. \emph{Batching} fixes the network size ($n{=}50$) and varies only the number $B \in \{1, 128, 512, 1024\}$ of queries answered in one call, which is the variable of Proposition~\ref{prop:complexity} (RQ3). \emph{Real networks} repeat the accuracy scenario on published structures with topological queries. \emph{Learning} scores the fitted conditionals without any query and addresses P1.

\paragraph{Methods and hardware.}
The comparison baselines are \lib{pgmpy} with maximum-likelihood and with Bayesian CPTs, both queried by its variable elimination; \lib{pgmpy} linear-Gaussian, which conditions the joint Gaussian in closed form; \lib{pomegranate}, which is discrete only and uses its built-in loopy belief propagation; and \lib{Pyro}, where the BN is written as a program with smoothed tables or linear-Gaussian conditionals and queried by importance sampling from the prior with $50$ particles (the largest number its sequential sampler affords within the budgets). On continuous networks \lib{pgmpy} and \lib{Pyro} always fit linear-Gaussian conditionals, so on the non-Gaussian family they are the misspecified classical model. 

\nbn{} runs every applicable pair of a mechanism and an engine, with $2048$ particles for the sampling engines ($16$ in the batching scenario, which measures time only). To keep the figures readable, each panel shows all applicable external baselines plus the best \nbn{} methods of the family, selected by accuracy. Appendix~\ref{app:results} reports every method on every scenario. 
All runs use one server with a 20-core CPU, 126\,GB of RAM and one GPU with 12\,GB of memory; \nbn{} and \lib{pomegranate} run on the GPU, \lib{pgmpy} on the CPU, and \lib{Pyro} on the CPU except in the batching scenario.

\section{Experimental Results}
\label{sec:results}


Section~\ref{sec:nbn} makes two claims: an NBN keeps what a BN offers, while enhancing expressiveness of conditionals and scaling up inference performance. 
The results support both, within limits that the benchmark makes visible. 
In terms of our research questions:
\begin{enumerate}[label=\textbf{RQ\arabic*}, wide=0pt, leftmargin=*, align=left, itemsep=0pt, parsep=0pt, topsep=0pt]
  \item[\textbf{RQ1.}] \emph{No loss where the classical model is right} (Proposition~\ref{prop:semantic-equivalence}). On discrete networks, exact inference in \nbn{} returns the same TV as \lib{pgmpy} at every size, and on real networks it matches or improves it on seven out of eight; learning is identical as well. The exception is inference on linear-Gaussian networks, where the closed form of \lib{pgmpy} is more accurate than our sampled posterior (W1 $0.039$ against $0.074$ at 1000 nodes).
  \item[\textbf{RQ2.}] \emph{Expressiveness pays where it is wrong.} On non-Gaussian networks the MDN has between one third and $60\%$ of the W1 of a linear-Gaussian model, and achieves better held-out likelihood: where the outcome has several modes or a state-dependent spread, the linear-Gaussian family misstates the uncertainty while the MDN learns its shape.
  \item[\textbf{RQ3.}] \emph{The cost model holds on the GPU} (Proposition~\ref{prop:complexity}). With the sampling engines the time per query falls by two orders of magnitude between one query and a batch of 1024, down to tens of microseconds.
\end{enumerate}
While \lib{pgmpy} achieves top accuracy wherever one of its two families is the right model, it cannot represent non-Gaussian conditionals and answers one query at a time. 
\lib{pomegranate} leverages the GPU, but its loopy belief propagation returns undefined posteriors or exceeds budget on all networks but the smallest. 
\lib{Pyro} runs a generic sequential importance sampler, which the budget limits to $50$ particles: it has the highest error on all synthetic families and times out from 500 nodes onwards.
%
The performance of \nbn{} can still be improved, and the benchmark shows where. The sampling engines pay a Monte-Carlo error where a closed form exists, the MDN exceeds the fit budget between 100 and 500 nodes, and the nearest-neighbour mechanism exhausts the GPU memory at 500. 

Below, \nbn{} methods are named by their mechanisms and engines: cat+VE is categorical table queried by variable elimination; likewise for linear-Gaussian (LG) and MDN mechanisms, and for likelihood weighting (LW) and amortized variational inference (AVI).

\subsection{Inference accuracy (RQ1, RQ2)}
\label{subsec:exp-accuracy}

Figure~\ref{fig:inference} shows the accuracy of the 128 queries for each family.

\begin{figure*}[t]
\centering
\figfile{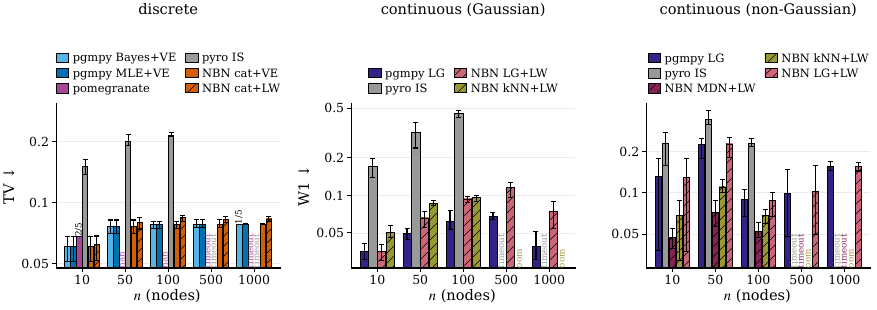}{1.6in}
\caption{Inference accuracy against network size on synthetic networks (TV for discrete, W1 for continuous, lower is better). All applicable external baselines are shown, plus the selected \nbn{} methods per family. A missing bar is a method that failed on every seed, with its status printed; $f/n$ marks $f$ failed seeds out of $n$.}
\label{fig:inference}
\end{figure*}

\paragraph{Discrete.}
cat+VE and \lib{pgmpy} return the same TV at every size 
because they run exact inference on the same maximum-likelihood tables: this is Proposition~\ref{prop:semantic-equivalence}, measured. Their common value is the floor of the reference (Section~\ref{sec:exp}). The sampling engines of \nbn{} stay within $0.01$ of it, so on discrete networks approximation costs little.

\paragraph{Linear-Gaussian.}
This is the one family where a baseline is ahead: 
\lib{pgmpy} conditions the joint Gaussian in closed form, while \nbn{} samples. The gap grows with the size because, in deeper networks, fewer prior samples agree with the evidence: the median effective sample size falls from all $2048$ particles at 10 nodes to $13\%$ of them at 1000. The error is due to the engine, not the model, and a closed-form engine for all-Gaussian networks would remove it.

\paragraph{Non-Gaussian.}
Here the model is decisive: 
every linear-Gaussian method has the same error, in \nbn{} or in \lib{pgmpy} and whatever the engine, because the family cannot represent the data. 
The MDN has one third to $60\%$ of that error and kNN one half to three quarters. 
The price is paid at fit time (Table~\ref{tab:app_inference_continuous_nongauss_fit_time} in Appendix~\ref{app:inference}): the MDN exceeds the fit budget at 500 nodes, where kNN runs out of memory, so beyond 100 nodes only the misspecified models remain.

\subsection{Scalability}
\label{subsec:exp-scalability}

\begin{table}[!b]
\centering\footnotesize\setlength{\tabcolsep}{3pt}
\caption{Accuracy on the heaviest queries with full evidence against network size, up to $10\,000$ nodes (IQM over seeds, lower is better). 
$(f/n)$ marks $f$ failed seeds out of $n$. 
Bold: fewest failed seeds, then best value. Times in Appendix~\ref{app:scalability}.\label{tab:scalability}}
\input{figures/main_tables/scalability.tex}
\end{table}

Table~\ref{tab:scalability} pushes the heaviest queries to $10\,000$ nodes.
Accuracy stays in the same range at every size; what this scenario reveals is why each method stops and how long a query takes. 
Exact inference is bounded by the elimination itself. \lib{pgmpy} completes every seed but needs tens of seconds per query from 1000 nodes; cat+VE answers in $0.2$ to $0.4$\,s, two orders of magnitude faster, and exceeds the 12\,GB of the GPU on two of five seeds at $10\,000$ nodes. The sampling engines have no such wall: cat+LW and LG+LW finish every seed at $10\,000$ nodes, where LG+LW is also faster than the closed form of \lib{pgmpy} ($2.6$\,s against $17.5$\,s per query), being the fastest up to 1000 nodes. The flexible mechanisms stop earlier due to the cost of learning: at 1000 nodes the MDN exceeds the fit budget and kNN the memory. 

\subsection{Batched queries (RQ3)}
\label{subsec:exp-batch}

\begin{figure*}[t]
\centering
\figfile{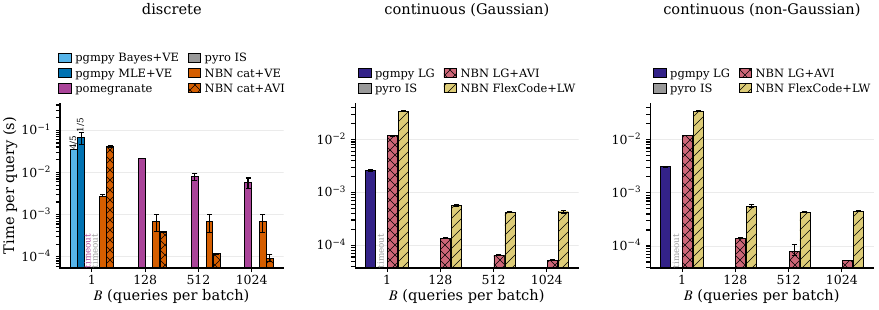}{1.6in}
\caption{Time per query vs. batch size $B$ (log scale) on 50-node networks, full evidence. Non-batching methods are at $B{=}1$ only.}
\label{fig:batch}
\end{figure*}

Proposition~\ref{prop:complexity} postulates that the cost of a batch does not grow with its size until the device saturates, so the time per query should fall as $1/B$. We fix 50-node networks and ask each heaviest query for the evidence values of $B$ different rows (Figure~\ref{fig:batch}). 
The sampling engines follow the prediction closely up to $B{=}128$, where a query costs about one hundredth of a single one, and then flattens as the GPU saturates: at $B{=}1024$ the gain is $456\times$ for cat+AVI and $227\times$ for LG+AVI, down to $0.09$ and $0.05$\,ms per query. Exact cat+VE is the fastest method on one query ($2.8$\,ms) but gains only $4\times$: every factor of the elimination carries the batch axis, so its work grows with $B$ and batching removes only the per-call overhead. Among the baselines only \lib{pomegranate} batches, and at $B{=}1024$ it is $60$ times slower than cat+AVI; \lib{pgmpy} answers one query at a time. The gain is in the batch, not in the latency of one query: on a single continuous query the closed form of \lib{pgmpy} takes $2.6$\,ms and \nbn{} $12$\,ms. An agent that asks one question at a time is better served by the classical code; one that scores many candidate actions or particles at once is the case NBNs are built for.

\subsection{Real networks}
\label{subsec:exp-bnlearn}

Table~\ref{tab:bnlearn} repeats the accuracy comparison on thirteen \lib{bnlearn} networks: eight discrete ones, from 8 to 1041 nodes, two Gaussian ones, and three conditional linear-Gaussian (CLG) ones, which mix discrete and continuous variables.

\begin{table}[!b]
\centering\footnotesize\setlength{\tabcolsep}{3pt}
\caption{Accuracy on \lib{bnlearn} networks, grouped by kind and ordered by node count $n$ (IQM over seeds; TV for discrete, W1 for Gaussian and CLG, lower is better). 
Each block names the method of each library; -- denotes no applicable engine. 
\lib{pgmpy} with Bayesian estimates and the other \nbn{} engines are in Appendix~\ref{app:bnlearn}.\label{tab:bnlearn}}
\input{figures/main_tables/bnlearn.tex}
\end{table}

The conclusions of the synthetic scenarios carry over. On the discrete networks cat+VE has the lowest TV on five, including the two largest, and is within $0.001$ of the best on two more; \lib{pgmpy} is better on \texttt{mildew} only. Amortized inference suffers more here than on synthetic networks (up to four times the TV of exact inference), so the exact engine is the one to use whenever the treewidth allows it. On the Gaussian networks \nbn{} is again close to the closed form of \lib{pgmpy}. The CLG networks are where coverage matters: \lib{pgmpy} and \lib{pomegranate} have no applicable engine, and the best \nbn{} mechanism is below \lib{Pyro} on all three. Which mechanism is best, however, depends on the network: the MDN wins on \texttt{sangiovese} and \texttt{mehra} and has six times the error of \lib{Pyro} on \texttt{healthcare}, where kNN wins. The choice of the mechanism is a modelling decision, and the benchmark is the tool to make it.

\subsection{Parameter learning and sample efficiency}
\label{subsec:exp-learning}

The learning scenarios score the fitted conditionals with no query: by the parameter-recovery TV on discrete networks, the distance between learned and true rows of the CPTs, and by held-out negative log-likelihood (NLL) per node on continuous ones 
(Appendix~\ref{app:design}).

\begin{table}[!b]
\caption{Sample efficiency on 50-node networks against the number of training rows $n_{\mathrm{tr}}$ (IQM over seeds, lower is better): parameter-recovery TV on discrete networks, held-out NLL per node on continuous ones. \emph{timeout}: the 1200\,s fit budget was exceeded on every seed. 
Fit times and the network-size sweep are in Appendices~\ref{app:param_learning} and~\ref{app:learning_curves}.\label{tab:learning}}
\centering\footnotesize\setlength{\tabcolsep}{3pt}
\input{figures/main_tables/learning.tex}
\end{table}

When the model is correct, \nbn{} learns what competitors do, as the estimator is the same: the tabular mechanism equals the max-likelihood tables of \lib{pgmpy} and \lib{pomegranate} at all sizes, its smoothed variant equals the tables of \lib{Pyro}, and the linear-Gaussian mechanism equals the least squares of \lib{pgmpy} (Table~\ref{tab:learning}; Appendix~\ref{app:param_learning}). It is also faster: a 1000-node network is fitted in $2.1$\,s (tables) and $2.6$\,s (linear-Gaussian), against $9.8$\,s and $4.0$\,s for \lib{pgmpy}. The accuracy drop when data is Gaussian is negligible: the NLL of the MDN is within $0.004$ of the exact estimator.

When the model is wrong, more data does not repair it (Table~\ref{tab:learning}). On 50-node non-Gaussian networks the NLL of the linear-Gaussian models stays at $1.26$--$1.27$ from $4096$ to $65\,536$ rows, while the MDN is at $1.20$ with $4096$ rows already and kNN improves steadily to $1.21$. Flexibility is thus sample-efficient but not compute-efficient: the fit time of the MDN grows linearly with the rows and nodes, exceeding the budget at $65\,536$ rows and from 100 nodes (Appendix~\ref{app:learning_curves}).

\section{Related Works}
\label{sec:related}



NBNs extend classical BNs in two directions, the \emph{expressiveness} of the conditionals and the efficiency and \emph{scale} of learning and inference; we review the works that pursue each of them.

\paragraph{Expressiveness.}
Kernel density estimation was used for continuous conditionals in Bayesian classifiers \cite{john1995estimating}, and non-parametric BNs based on kernels or copulas followed \cite{margaritis2005distribution,ickstadt2010nonparametric,hanea2015non}. Semiparametric BNs \cite{atienza2022semiparametric,atienza2022hybrid} mix linear-Gaussian and kernel conditionals in one network; surveys of inference with continuous variables are \cite{mori2016inference,salmeron2018review}. Neural structural causal models replace each structural equation with a neural density, typically a normalizing flow \cite{pawlowski2020dscm,khemakhem2021causal,javaloy2023cnf,xia2021ncm,geffner2022deci}. Closest to ours in name, the Neural Bayesian Network Understudy \cite{rabaey2022neural} trains a single Neural network to approximate the conditional probabilities of a given BN, with its independences encoded through training. These works fix one family of conditionals and one inference procedure. At the other extreme, deep generative models such as normalizing flows \cite{rezende2015flow,papamakarios2017maf} and neural world models \cite{ha2018world} are expressive but hide the dependencies in the weights of one network: there is no local conditional to inspect or replace, and the posterior given arbitrary evidence is not directly available.
An NBN changes the representation rather than adding a family. With tensor-valued mechanisms the BN is kept exactly (Proposition~\ref{prop:semantic-equivalence}); tabular, linear-Gaussian, neural and non-parametric \cite{izbicki2017converting} mechanisms can be mixed in one graph and trained by gradient, also jointly with a downstream objective (Proposition~\ref{prop:differentiability}); and any mechanism that can sample and evaluate a density is queried by the same batched engine, with no family-specific algorithm (Proposition~\ref{prop:complexity}).

\paragraph{Efficiency and scalability.}
Exact inference has been parallelized on multi-core CPUs \cite{jiang2023fast} and on GPUs \cite{zheng2011belief}, and written as tensor contractions \cite{obermeyer2019tensor}, but only for discrete networks and at a cost that stays exponential in the treewidth. Probabilistic circuits \cite{choi2020pc,peharz2020einsum} keep inference tractable and batched, but trade the per-variable conditionals for latent sum and product units. When exact inference is too expensive, a BN is queried by sampling. Likelihood weighting (Section~\ref{subsec:nbn-taxonomy}) is simple and general, but its samples are drawn without looking at the evidence, so most of them are wasted when the evidence is unlikely. Amortized inference \cite{gershman2014amortized,le2017inference} trains a neural network to propose samples that already agree with the evidence and reuses it for every later query.
We do not propose a new estimator: NBNs change how these estimators are executed. All the nodes at the same depth of the graph are evaluated in one tensor operation over every particle and every evidence assignment, so the cost of a query stops growing with the number of samples and of queries (Proposition~\ref{prop:complexity}). 

\paragraph{Libraries.}
Table~\ref{tab:bn_comparison} summarizes NBNs competitors. Classical BN libraries (\lib{pgmpy} \cite{ankan2024pgmpy}, \lib{pyAgrum} \cite{ducamp2020agrum}, \lib{bnlearn} \cite{scutari2010bnlearn}, \lib{CausalNex} \cite{beaumont2021causalnex}) implement tabular or linear-Gaussian conditionals with exact inference; they run on the CPU and accept several queries in one call but execute them one by one, which are the limitations (L1) and (L2) of Section~\ref{ssec:example}. \lib{pomegranate} \cite{schreiber2018pomegranate} is PyTorch-based and runs on the GPU, but its BN inference is limited to discrete networks and to loopy belief propagation, which is not guaranteed to converge on graphs with loops. Probabilistic programming systems (\lib{Pyro} \cite{bingham2019pyro}, \lib{PyMC} \cite{abril2023pymc}) support any distribution and GPU execution, but they do not expose the BN as an object: the graph, the per-node conditionals, and the inference procedure over them have to be written by hand for every model, and a generic sampler cannot eliminate variables exactly, prune the nodes that a query does not need, or evaluate a level of the graph at once. \lib{DoWhy} \cite{sharma2020dowhy,blobaum2024dowhy} estimates the causal effect of a treatment on an outcome and does not answer posterior queries over arbitrary variables.

To the best of our knowledge, no existing library combines an explicit DAG with differentiable conditionals of arbitrary family and GPU-batched learning and inference, which is the gap \nbn{} fills. Our benchmark compares against one representative per class: \lib{pgmpy} for the classical libraries, \lib{pomegranate} as the PyTorch-based one, and \lib{Pyro} as the probabilistic programming system. 

\begin{table}[t]
  \centering\footnotesize\renewcommand{\arraystretch}{0.92}
  \caption{Bayesian-network and probabilistic-modeling libraries. Disc.: discrete; LG: linear-Gaussian; NG:
  continuous non-Gaussian. Batch: the API accepts many queries in one call. GPU: BN queries run in parallel on the
  accelerator. Diff.: differentiable conditionals. $\dox$: native interventions. \cmark{} supported, \pmark{}
  partial, \xmark{} absent.\label{tab:bn_comparison}}
  \setlength{\tabcolsep}{3.5pt}
  \begin{tabular}{l|ccc|cc|cc}
  \toprule
  \multirow{2}{*}{\textbf{Library}} & \multicolumn{3}{c|}{\textbf{Families}} &
  \multicolumn{2}{c|}{\textbf{Compute}} & \multicolumn{2}{c}{\textbf{Capabilities}} \\
   & Disc. & LG & NG & Batch & GPU & Diff. & $\dox$ \\
  \midrule
  pgmpy \cite{ankan2024pgmpy}            & \cmark & \cmark & \xmark & \cmark & \xmark & \xmark & \pmark \\
  pyAgrum \cite{ducamp2020agrum}         & \cmark & \xmark & \xmark & \cmark & \xmark & \xmark & \xmark \\
  bnlearn \cite{scutari2010bnlearn}      & \cmark & \cmark & \xmark & \cmark & \xmark & \xmark & \xmark \\
  CausalNex \cite{beaumont2021causalnex} & \cmark & \xmark & \xmark & \cmark & \xmark & \xmark & \cmark \\
  pomegranate \cite{schreiber2018pomegranate} & \cmark & \cmark & \pmark & \cmark & \pmark & \xmark & \xmark \\
  \midrule
  PyMC \cite{abril2023pymc}              & \cmark & \cmark & \cmark & \pmark & \cmark & \pmark & \xmark \\
  Pyro \cite{bingham2019pyro}            & \cmark & \cmark & \cmark & \cmark & \cmark & \cmark & \pmark \\
  DoWhy \cite{sharma2020dowhy}           & \cmark & \cmark & \pmark & \xmark & \xmark & \xmark & \cmark \\
  \midrule
  \textbf{\nbn{} (ours)}                 & \cmark & \cmark & \cmark & \cmark & \cmark & \cmark & \cmark \\
  \bottomrule
  \end{tabular}
\end{table}

\section{Conclusions and Future Works}
\label{sec:conc}

Neural Bayesian Networks (NBNs) are an extension to Bayesian networks that combine explicit graphical structure with learnable, tensor-valued conditional mechanisms. 
NBNs preserve the probabilistic factorization and modularity of BNs while bringing heterogeneous conditional models into a common framework for learning and GPU-batched inference. 
The accompanying PyTorch library and reproducible benchmark provide a basis for evaluating the trade-offs between expressiveness, accuracy, and computational cost.
%
%
NBNs provide to autonomous agents a tool for decision-making under uncertainty that combines structured world models with the flexibility and computational tools of modern machine learning. 
Explicit dependencies and separately replaceable mechanisms support adaptation of individual model components, while batched inference serve agents that evaluate many candidate actions or hypotheses. 
Evaluation of NBNs within agent decision making tasks (e.g.\ causal reinforcement learning), removing the assumption of having the graph available, evaluating sampling, and considering CPU execution along GPU are amongst the future works we are considering.

\balance
\bibliographystyle{ACM-Reference-Format}
\bibliography{sample}

\clearpage\onecolumn\appendix
\input{appendix_design}
\input{appendix}

\end{document}

%% file: figures/fig_overview.tex
\begin{tikzpicture}[
  font=\scriptsize,
  >={Latex[length=1.6mm]},
  var/.style={circle,draw,thick,minimum size=14pt,inner sep=0pt,fill=white},
  mech/.style={draw,rounded corners=2pt,fill=red!6,inner sep=3pt,align=center},
  tab/.style={draw,fill=gray!8,inner sep=2pt,align=center},
  lbl/.style={align=center},
  ten/.style={draw,fill=red!12,minimum width=9mm,minimum height=5.5mm,inner sep=0pt},
]
\begin{scope}
  \node[var] (s1) at (0,1.5) {$s_1$};
  \node[var] (s2) at (1.2,1.5) {$s_2$};
  \node[var] (a)  at (2.4,1.5) {$a$};
  \node[var] (t1) at (0.6,0.4) {$s'_1$};
  \node[var] (t2) at (1.8,0.4) {$s'_2$};
  \draw[->] (s1)--(t1); \draw[->] (s2)--(t1); \draw[->] (s2)--(t2); \draw[->] (a)--(t2); \draw[->] (a)--(t1);
  \node[tab] (cpt) at (4.7,1.0) {%
    \begin{tabular}{@{}c|cc@{}}
      $s_1,s_2,a$ & $s'_1{=}0$ & $s'_1{=}1$\\ \hline
      $0,0,0$ & $.92$ & $.08$\\
      $0,0,1$ & $.35$ & $.65$\\
      $\vdots$ & $\vdots$ & $\vdots$\\
      \multicolumn{3}{@{}c@{}}{$K_1 K_2 K_a$ rows, $K_{1}'$ columns}
    \end{tabular}};
  \draw[->,dashed,gray] (t1.east) -- (cpt.west);
  \node[lbl,anchor=north,text width=6.2cm] at (3.0,-0.1) {%
    \textbf{Classical BN}\\
    scalar CPD per node: a table, or $\mathcal{N}(w^\top x{+}b,\sigma^2)$\\
    fit by counting (MLE); queried by exact inference\\
    one node, one sample, one query at a time};
\end{scope}
\draw[->,very thick] (6.5,0.9) -- (7.4,0.9);
\node[lbl] at (6.95,1.45) {same DAG\\ same $\prod_i p_i$};
\begin{scope}[xshift=7.9cm]
  \node[var] (s1) at (0,1.5) {$s_1$};
  \node[var] (s2) at (1.2,1.5) {$s_2$};
  \node[var] (a)  at (2.4,1.5) {$a$};
  \node[var,fill=red!10] (t1) at (0.6,0.4) {$s'_1$};
  \node[var,fill=red!10] (t2) at (1.8,0.4) {$s'_2$};
  \draw[->] (s1)--(t1); \draw[->] (s2)--(t1); \draw[->] (s2)--(t2); \draw[->] (a)--(t2); \draw[->] (a)--(t1);
  \node[ten] (in3) at (3.95,0.85) {};
  \node[ten] (in2) at ($(in3)+(-0.09,0.09)$) {};
  \node[ten] (in)  at ($(in2)+(-0.09,0.09)$) {\tiny $B{\times}|\mathrm{Pa}|$};
  \node[mech,minimum height=9mm] (M) at (6.05,1.0) {$\mathcal{M}_1(\,\cdot\,;\phi_1)$\\[-1pt] \tiny table $\cdot$ linear-Gaussian\\[-2pt] \tiny MDN $\cdot$ kNN $\cdot$ series};
  \node[ten] (o3) at (8.15,0.85) {};
  \node[ten] (o2) at ($(o3)+(-0.09,0.09)$) {};
  \node[ten] (o)  at ($(o2)+(-0.09,0.09)$) {\tiny $B$ distr.};
  \draw[->] (in3.east) -- (M.west);
  \draw[->] (M.east) -- (o3.west);
  \draw[->,dashed,gray] (t1.east) -- (in3.south west);
  \node[lbl,anchor=north,font=\tiny] at (8.05,0.55) {\texttt{log\_prob}, \texttt{sample}\\ \texttt{rsample}, $\nabla_{\phi}$};
  \node[lbl,anchor=north,font=\tiny] at (3.9,0.55) {one call per\\ graph layer};
  \node[lbl,anchor=north,text width=8.6cm] at (4.1,-0.1) {%
    \textbf{Neural BN}\\
    tensor-valued differentiable mechanism per node, any family\\
    fit by gradient descent (or closed form); queried by VE / LW / AVI\\
    one tensor op per graph layer: all samples and all queries at once, on the GPU};
\end{scope}
\end{tikzpicture}

%% file: figures/fig_fmdp.tex
\begin{tikzpicture}[
  font=\scriptsize,
  >={Latex[length=1.5mm]},
  var/.style={circle,draw,thick,minimum size=13pt,inner sep=0pt,fill=white},
  act/.style={rectangle,draw,thick,minimum size=12pt,inner sep=0pt,fill=gray!12},
  big/.style={circle,draw,thick,minimum size=24pt,inner sep=0pt,fill=white},
  lbl/.style={align=center},
]
\begin{scope}
  \node[big] (s)  at (0,1.25) {$s$};
  \node[act,minimum size=16pt] (a) at (0,0.15) {$a$};
  \node[big] (sp) at (1.75,1.25) {$s'$};
  \draw[->,thick] (s) -- (sp);
  \draw[->,thick] (a) -- (sp);
  \node[lbl,anchor=north] at (0.85,-0.35) {\textbf{MDP}\\ one kernel $P(s'\mid s,a)$\\ over the joint state};
\end{scope}
\draw[->,very thick] (2.55,0.75) -- (3.25,0.75);
\node[lbl] at (2.9,1.2) {known\\ structure};
\begin{scope}[xshift=3.9cm]
  \node[var] (s1) at (0,1.8)  {$s_1$};
  \node[var] (s2) at (0,1.15) {$s_2$};
  \node[var] (s3) at (0,0.5)  {$s_3$};
  \node[act] (a)  at (0,-0.15) {$a$};
  \node[var] (t1) at (1.9,1.8)  {$s'_1$};
  \node[var] (t2) at (1.9,1.15) {$s'_2$};
  \node[var] (t3) at (1.9,0.5)  {$s'_3$};
  \draw[->] (s1) -- (t1);
  \draw[->] (s2) -- (t1);
  \draw[->] (s2) -- (t2);
  \draw[->] (s3) -- (t3);
  \draw[->] (a)  -- (t2);
  \draw[->] (a)  -- (t3);
  \node[lbl,font=\tiny,gray] at (0,2.2) {time $t$};
  \node[lbl,font=\tiny,gray] at (1.9,2.2) {time $t{+}1$};
  \node[lbl,anchor=north] at (0.95,-0.35) {\textbf{Factored MDP}\\ $\prod_j p\big(s'_j \mid \mathrm{Pa}(s'_j)\big)$\\ one small conditional per variable};
\end{scope}
\end{tikzpicture}

%% file: figures/main_tables/scalability.tex
\begin{tabular}{@{}lrrrr@{}}
\toprule
Method & $n{=}10$ & $n{=}100$ & $n{=}1000$ & $n{=}10000$ \\
\midrule
\multicolumn{5}{@{}l}{\emph{Discrete, TV}} \\
\lib{pgmpy} Bayes+VE & 0.0491 & 0.0578 & 0.0645 & 0.0323\,{\scriptsize (1/5)} \\
\lib{pgmpy} MLE+VE & 0.0438 & \textbf{0.0406} & 0.0582 & \textbf{0.0484} \\
\lib{pomegranate} & 0.0276\,{\scriptsize (2/5)} & \emph{nan} & \emph{nan} & \emph{timeout} \\
\lib{Pyro} IS & 0.120 & 0.188 & 0.151 & 0.141\,{\scriptsize (1/5)} \\
\nbn{} cat+VE & \textbf{0.0435} & 0.0476 & \textbf{0.0516} & 0.0376\,{\scriptsize (2/5)} \\
\nbn{} cat+LW & 0.0495 & 0.0600 & 0.0619 & 0.0586 \\
\midrule
\multicolumn{5}{@{}l}{\emph{Linear-Gaussian, W1}} \\
\lib{pgmpy} LG & 0.0303 & 0.0553 & \textbf{0.0378} & \textbf{0.0466} \\
\lib{Pyro} IS & 0.138 & 0.330 & 0.338 & 0.226\,{\scriptsize (1/5)} \\
\nbn{} LG+LW & \textbf{0.0291} & \textbf{0.0526} & 0.0652 & 0.0560 \\
\nbn{} MDN+LW & 0.0326 & 0.0534 & \emph{timeout} & \emph{timeout} \\
kNN+LW & 0.0557 & 0.102 & \emph{oom} & \emph{oom} \\
\midrule
\multicolumn{5}{@{}l}{\emph{Non-Gaussian, W1}} \\
\lib{pgmpy} LG & 0.115 & 0.142 & \textbf{0.0830} & 0.101 \\
\lib{Pyro} IS & 0.232 & 0.193 & 0.175 & 0.209\,{\scriptsize (1/5)} \\
\nbn{} LG+LW & 0.125 & 0.137 & 0.0917 & \textbf{0.0992} \\
\nbn{} MDN+LW & \textbf{0.0504} & 0.0414\,{\scriptsize (1/5)} & \emph{timeout} & \emph{timeout} \\
\nbn{} kNN+LW & 0.0755 & \textbf{0.0779} & \emph{oom} & \emph{oom} \\
\bottomrule
\end{tabular}

%% file: figures/main_tables/bnlearn.tex
\begin{tabular}{@{}lrrrrr@{}}
\toprule
Network ($n$) & \lib{pgmpy} & \lib{pomegr.} & \lib{Pyro} & \nbn{} & \nbn{} \\
\midrule
\emph{Discrete, TV} & \emph{MLE+VE} & \emph{BP} & \emph{IS} & \emph{cat+VE} & \emph{cat+AVI} \\
\texttt{asia} (8) & \textbf{0.00343} & 0.00380 & 0.0336 & 0.00354 & 0.00478 \\
\texttt{sachs} (11) & 0.0128 & 0.0122 & 0.0818 & \textbf{0.0120} & 0.0162 \\
\texttt{mildew} (35) & \textbf{0.0325} & \emph{nan} & 0.287 & 0.0465 & 0.0703 \\
\texttt{barley} (48) & 0.0275 & \emph{nan} & 0.215 & \textbf{0.0240} & 0.0401 \\
\texttt{hailfinder} (56) & \textbf{0.0203} & 0.0208 & 0.136 & 0.0206 & 0.0266 \\
\texttt{diabetes} (413) & 0.0393\,{\scriptsize (1/5)} & \emph{nan} & \emph{timeout} & \textbf{0.0343} & 0.128 \\
\texttt{link} (724) & 0.00197 & \emph{timeout} & \emph{timeout} & \textbf{0.00174} & 0.00612 \\
\texttt{munin} (1041) & 0.00464 & \emph{timeout} & \emph{timeout} & \textbf{0.00401} & 0.0105 \\
\midrule
\emph{Gaussian, W1} & \emph{LG} & \emph{--} & \emph{IS} & \emph{LG+AVI} & \emph{kNN+LW} \\
\texttt{ecoli70} (46) & \textbf{0.0418} & -- & 0.245 & 0.0541 & 0.111 \\
\texttt{arth150} (107) & \textbf{0.0210} & -- & 0.107 & 0.0212 & 0.0418 \\
\midrule
\emph{CLG, W1} & \emph{--} & \emph{--} & \emph{IS} & \emph{MDN+LW} & \emph{kNN+LW} \\
\texttt{healthcare} (7) & -- & -- & 106 & 625 & \textbf{23.4} \\
\texttt{sangiovese} (15) & -- & -- & 0.0583 & \textbf{0.0242} & 0.0280 \\
\texttt{mehra} (24) & -- & -- & $3.74{\cdot}10^{7}$ & $\mathbf{1.89{\cdot}10^{6}}$ & $2.41{\cdot}10^{6}$ \\
\bottomrule
\end{tabular}

%% file: figures/main_tables/learning.tex
\begin{tabular}{@{}lrrrr@{}}
\toprule
Method & $n_{\mathrm{tr}}{=}4096$ & $n_{\mathrm{tr}}{=}16384$ & $n_{\mathrm{tr}}{=}32768$ & $n_{\mathrm{tr}}{=}65536$ \\
\midrule
\multicolumn{5}{@{}l}{\emph{Discrete, parameter-recovery TV}} \\
\lib{pgmpy} Bayes & 0.0977 & 0.0474 & 0.0335 & 0.0236 \\
\lib{pgmpy} MLE & 0.0979 & 0.0474 & 0.0336 & 0.0236 \\
\lib{pomegranate} & 0.0979 & 0.0474 & 0.0336 & 0.0236 \\
\lib{Pyro} & \textbf{0.0847} & \textbf{0.0453} & \textbf{0.0328} & \textbf{0.0233} \\
\nbn{} cat & 0.0979 & 0.0474 & 0.0336 & 0.0236 \\
\nbn{} cat (Bayes) & \textbf{0.0847} & \textbf{0.0453} & \textbf{0.0328} & \textbf{0.0233} \\
\midrule
\multicolumn{5}{@{}l}{\emph{Gaussian, NLL per node}} \\
\lib{pgmpy} LG & \textbf{0.943} & \textbf{0.942} & \textbf{0.944} & \textbf{0.945} \\
\nbn{} LG & 0.935\,{\scriptsize (1/5)} & \textbf{0.942} & \textbf{0.944} & \textbf{0.945} \\
\nbn{} MDN & 0.953 & 0.946 & 0.958\,{\scriptsize (1/5)} & \emph{timeout} \\
\nbn{} kNN & 1.01 & 0.988 & 0.981 & 0.966\,{\scriptsize (1/5)} \\
\midrule
\multicolumn{5}{@{}l}{\emph{Non-Gaussian, NLL per node}} \\
\lib{pgmpy} LG & 1.26 & 1.27 & 1.27 & 1.27 \\
\nbn{} LG & 1.26 & 1.27 & 1.27 & 1.27 \\
\nbn{} MDN & \textbf{1.20} & \textbf{1.19} & \textbf{1.19} & \emph{timeout} \\
\nbn{} kNN & 1.25 & 1.23 & 1.22 & \textbf{1.21} \\
\bottomrule
\end{tabular}

%% file: appendix_design.tex
\section{Benchmark Design -- Extended Version}\label{app:design}

This appendix gives the settings of the benchmark that Section~\ref{sec:exp} leaves out. All of them are fields of the released configuration files or defaults of the released code, and can be changed.

\paragraph{Synthetic generator.}
The seed draws a DAG over $n$ ordered nodes, where each earlier node becomes a parent of a later one with probability $0.2$ and a node keeps at most four parents. The cap bounds the size of a conditional as $n$ grows: with four values per variable, a CPT has at most $4^5 = 1024$ entries. In the \emph{discrete} family every variable has four values and every row of a CPT is the softmax of standard-normal logits. In the \emph{linear-Gaussian} family every node has weights of variance $1/|\Pa|$, a standard-normal bias and a noise scale drawn uniformly in $[0.3, 1]$. In the \emph{non-Gaussian} family every node is a three-component Gaussian mixture: the component means are linear in the parents, with weights of standard deviation $0.3/\sqrt{|\Pa|}$ and offsets $0.3\,(z_1 + 0.7\,|z_2|)$ for standard-normal $z_1, z_2$, which are skewed; the component scales are uniform in $[0.3, 0.7]$; and the mixing weights are drawn from a flat Dirichlet distribution and do not depend on the parents.

\paragraph{Data.}
The training set has $20\,480$ rows in the accuracy, scalability and learning scenarios and $4096$ rows in the batching scenario; the sample-efficiency study uses $4096$, $16\,384$, $32\,768$ and $65\,536$ rows. On real networks it has $10\,240$, $20\,480$, $40\,960$ or $81\,920$ rows, for networks with up to $10^3$, $10^4$, $10^5$ or more parameters. The test set has $4096$ rows ($1024$ in the batching scenario and on real networks). The five seeds are $0$ to $4$.

\paragraph{Query selectors.}
\emph{Random}: one target and its evidence variables are drawn uniformly for the problem, and the query is asked for the evidence values of $128$ test rows. \emph{Topological} ($128$ queries per cell): targets are allocated by structural role, $30\%$ to hubs, $30\%$ to cut nodes, $30\%$ to terminals and $10\%$ at random. Hubs are ranked by the size of their Markov blanket, that is, the set of their parents, their children and the other parents of their children. Cut nodes are the articulation points of the moral graph: the undirected graph obtained by linking the parents of every node and dropping the direction of the edges, which a cut node disconnects when it is removed. Terminals are ranked by their depth minus their number of descendants. Hubs, cut nodes and random targets are asked in the prediction or in the diagnosis direction with equal probability, and terminals in the prediction direction only. The evidence is taken along the longest path ($40\%$ of the queries), inside the Markov blanket ($40\%$) or at random ($20\%$). \emph{Heaviest}: the top-ranked hub, cut node and terminal, each in the prediction and in the diagnosis direction and with full and with empty evidence, for at most twelve queries per cell (eight when the graph has no cut node); the evidence is taken along the longest path for hubs and terminals and inside the Markov blanket for cut nodes. In the batching scenario each of these queries is asked for the evidence values of $1024$ rows.

\paragraph{Reference.}
For a discrete target the reference set has $5000$ samples on synthetic networks and as many samples as the training set on real ones; a query whose evidence agrees with fewer than ten reference samples is not scored. For a continuous target the reference is a sample of $2000$ draws from $p(x_Q \mid x_E)$, obtained by likelihood weighting on the true model. Both references are finite samples, which gives the floor of Section~\ref{sec:exp}. For a discrete target it comes from the few reference samples that agree with the evidence, so it varies with the network and with the query. The floor is the average error of an exact answer against the sampled reference and not a bound: an approximate method can fall slightly below the exact engines.

\paragraph{Learning metrics.}
The learning scenarios score the fitted conditionals directly, without queries. On discrete networks the metric is the parameter-recovery TV: the total variation between a learned and a true row of a CPT, averaged over the parent configurations with their true probabilities and then over the nodes. Continuous families have no common parameters to compare, since a linear-Gaussian model, an MDN and a kNN estimator do not share a parameter space, so we score the fitted density itself with the held-out negative log-likelihood (NLL) per node: minus the log-density of a test row under the fitted network, averaged over the test rows and divided by $n$. The NLL is a strictly proper scoring rule. Its expectation is the entropy of the true conditionals plus the Kullback--Leibler divergence from the true to the fitted model, so on the same problem the difference between two methods is a difference in divergence from the truth, and it applies to every mechanism that exposes a density. We prefer it to two alternatives. W1, the metric of the inference scenarios, compares samples of a posterior and therefore needs a query and an engine, which would mix the error of learning with the error of inference. The calibration error PIT-KS (the Kolmogorov--Smirnov distance between the probability integral transform of held-out values and the uniform distribution) needs no query, but calibration is necessary and not sufficient: a model that ignores the parents and returns the marginal of each node is perfectly calibrated. Our runs show the difference: on 50-node non-Gaussian networks with $4096$ rows FlexCode has less than half the PIT-KS of the linear-Gaussian model ($0.028$ against $0.065$, close to the $0.023$ of the MDN) and the worst NLL of the three ($1.42$ against $1.26$ and $1.20$). PIT-KS is reported for every method in Appendix~\ref{app:param_learning}.

\paragraph{Batches with empty evidence.}
With empty evidence (Appendix~\ref{app:speed}) the $B$ queries of a batch are the same marginal query, so \nbn{} answers it once and the time per query is the time of one query divided by $B$; we do not count this as a batching gain.

\paragraph{Particles.}
The sampling engines of \nbn{} use $2048$ particles in the accuracy, scalability and real-network scenarios and $16$ in the batching scenario, which measures time only. \lib{Pyro} uses $50$ particles in every scenario.

\paragraph{Budgets and execution.}
The fit budget and the query budget are 1200\,s each; the query budget is 600\,s in the batching scenario. The memory cap of a cell is $80\%$ of the memory that is free on the machine when the cell starts. Each cell runs in its own process, so that a timeout or an out-of-memory failure ends that cell only.

\paragraph{Reporting.}
In the figures the error bar of a method spans the smallest and the largest value that enters its interquartile mean, and the value axes are logarithmic. Each panel has its own legend; a method has the same colour in every figure, and for \nbn{} the colour is the mechanism and the hatch is the engine. Each panel of the main text shows all applicable external baselines plus two \nbn{} methods, the best parametric or neural one and the best non-parametric one (the two best on discrete networks), selected once per family on the accuracy metric of the scenario. On the continuous families of the accuracy and scalability scenarios the linear-Gaussian mechanism is shown as well, so that the \nbn{} method that reaches the largest networks is always visible. The tables of Appendix~\ref{app:results} report the IQM $\pm \frac{1}{2}$ of the interquartile range.

\paragraph{Hardware.}
All runs use one server with an Intel Xeon E5-2640 v4 CPU at 2.40\,GHz (20 cores), 126\,GB of RAM and an NVIDIA Titan X (Pascal) GPU with 12\,GB of memory.

%% file: appendix.tex
\clearpage
\section{Complete results}\label{app:results}

This appendix reports every method on every scenario of Section~\ref{sec:exp}, including the mechanisms and engines that the main text does not discuss (normalizing-flow, kernel-density and neural categorical conditionals, and the amortized importance-sampling engine AIS), and the fit and query times of the scenarios where the main text reports accuracy only. Methods carry the names of the configuration files, \texttt{library-model-engine}; for \nbn{} the model is the mechanism of the conditionals. In the figures a bar is the interquartile mean (IQM) over the seeds a method solved and the error bar spans the smallest and the largest value that enters that mean; $f/n$ above a bar means that $f$ of $n$ seeds failed, and a missing bar carries the failure status. The value axes are logarithmic. Each panel has its own legend; a method has the same colour in every figure, and for \nbn{} the colour is the mechanism and the hatch is the engine. In the tables a cell is IQM $\pm$ half the interquartile range, followed by $(f/n)$ when $f$ of the $n$ seeds failed for any reason (time budget, memory, NaN output, error); a status alone (\texttt{timeout}, \texttt{oom}, \texttt{nan}) means that no seed was solved, and -- that the method was not run in that configuration. In each column the best method is in bold: the one with the fewest failed seeds and, among those, the best IQM. In the learning scenarios the negative log-likelihood is the mean over the held-out rows, divided by the number of nodes.

\subsection{Inference accuracy}\label{app:inference}

\noindent\begin{minipage}{\textwidth}\centering\footnotesize
\includegraphics[width=\textwidth]{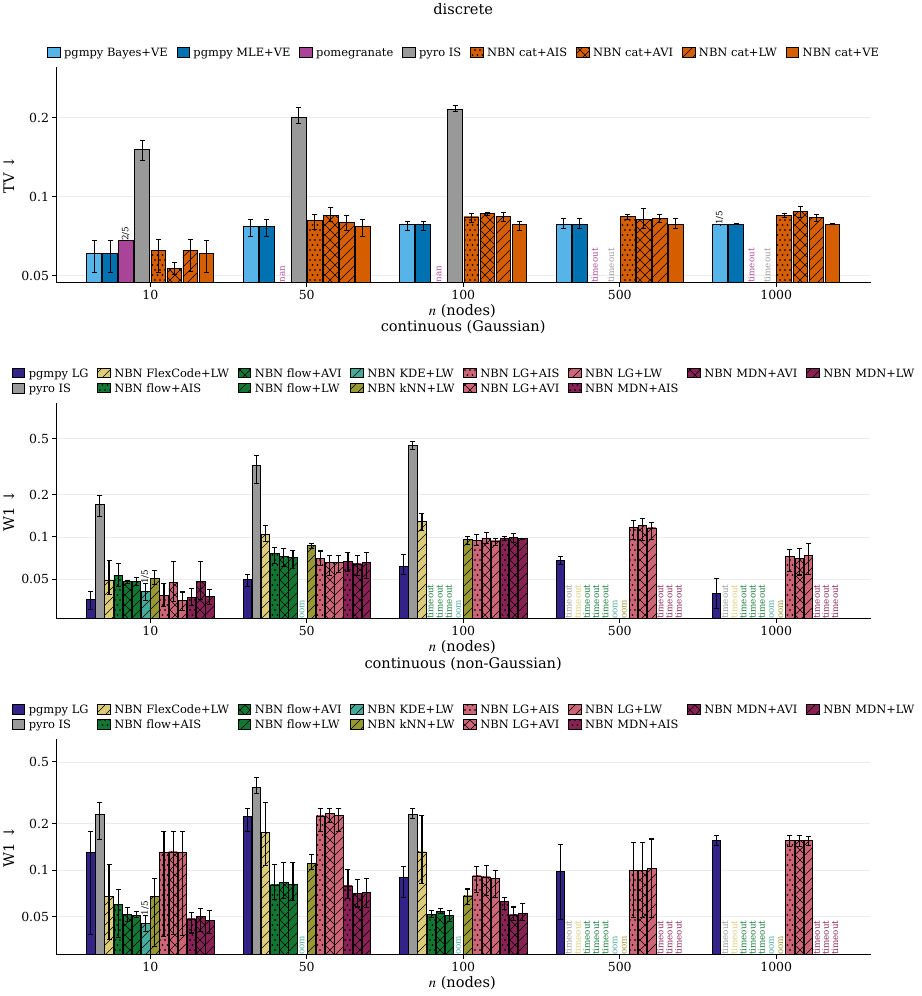}
\captionof{figure}{Inference accuracy, all methods: accuracy (TV on discrete targets, W1 on continuous ones) against network size.}\label{fig:app_inference_accuracy}
\end{minipage}\par\bigskip

\noindent\begin{minipage}{\textwidth}\centering\footnotesize
\includegraphics[width=\textwidth]{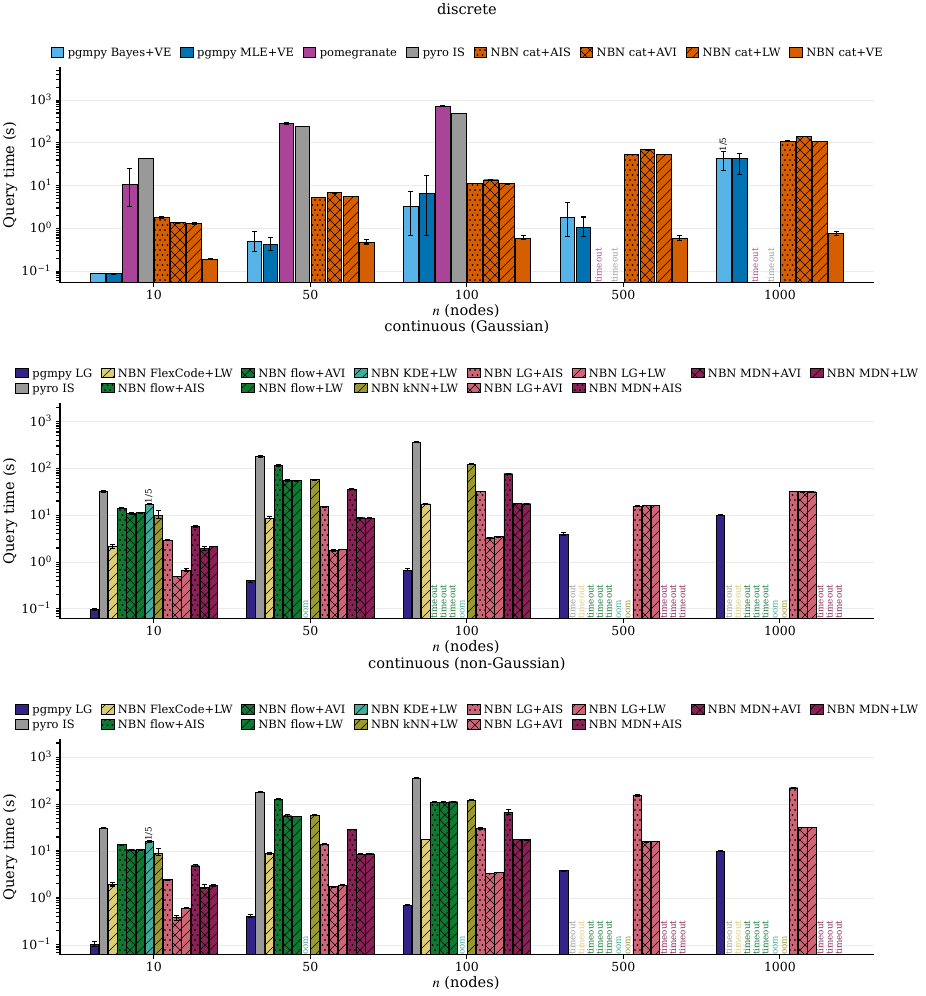}
\captionof{figure}{Inference accuracy, all methods: total query time (s) against network size.}\label{fig:app_inference_total_query_time}
\end{minipage}\par\bigskip

\noindent\begin{minipage}{\textwidth}\centering\footnotesize
\includegraphics[width=\textwidth]{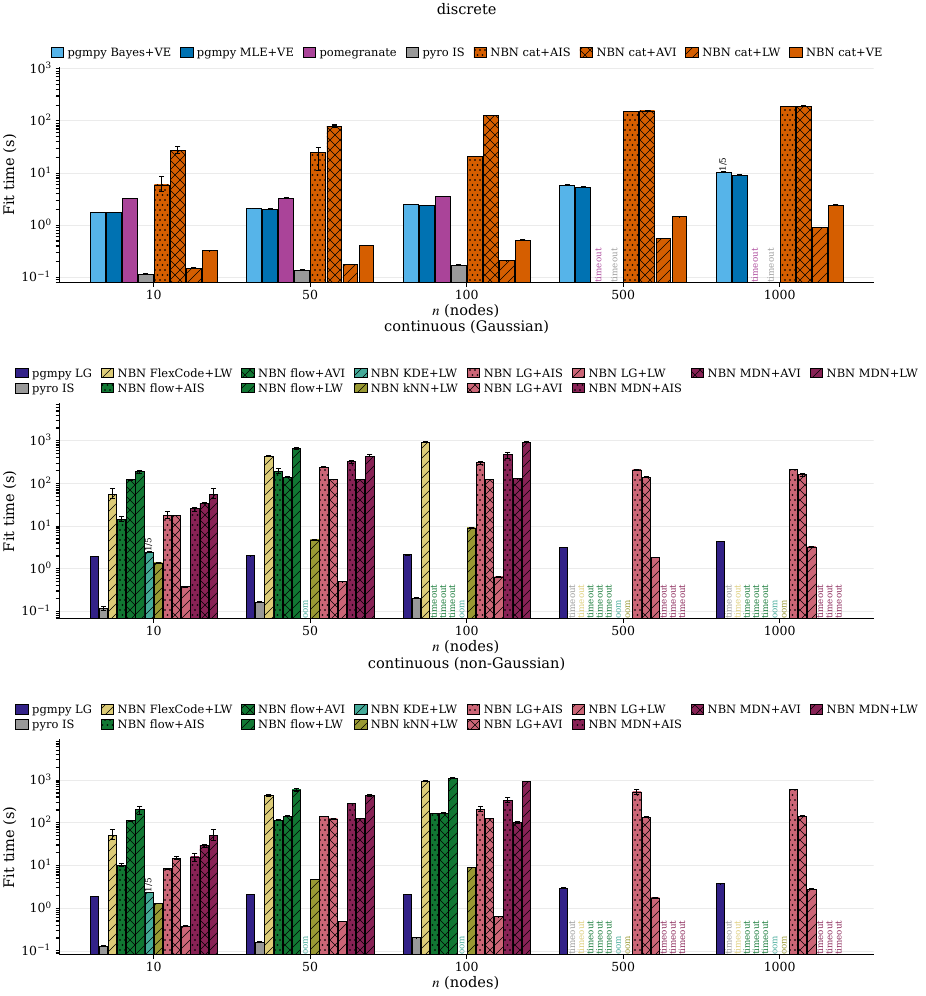}
\captionof{figure}{Inference accuracy, all methods: fit time (s) against network size.}\label{fig:app_inference_fit_time}
\end{minipage}\par\bigskip

\noindent\begin{minipage}{\textwidth}\centering\footnotesize
\captionof{table}{Inference accuracy, discrete networks: TV (lower is better), against network size.}\label{tab:app_inference_discrete_tv_per_node}\par\smallskip
\adjustbox{max width=\textwidth}{\begin{tabular}{lrrrrr}
\toprule
Method & $n=10$ & $n=50$ & $n=100$ & $n=500$ & $n=1000$ \\
\midrule
\texttt{pgmpy-bayes-ve} & 0.061$\pm$0.0083 & \textbf{0.0766$\pm$0.0059} & \textbf{0.0783$\pm$0.0031} & \textbf{0.0783$\pm$0.0035} & 0.078$\pm$0.0014 (1/5) \\
\texttt{pgmpy-mle-ve} & 0.061$\pm$0.0083 & \textbf{0.0766$\pm$0.0059} & \textbf{0.0783$\pm$0.0031} & \textbf{0.0783$\pm$0.0035} & \textbf{0.0782$\pm$0.00035} \\
\texttt{pomegranate-discrete-ve} & 0.0678$\pm$0.0054 (2/5) & nan & nan & timeout & timeout \\
\texttt{pyro-empirical-importance} & 0.151$\pm$0.013 & 0.201$\pm$0.013 & 0.215$\pm$0.0054 & timeout & timeout \\
\texttt{nbn-cat-ais} & 0.0625$\pm$0.0085 & 0.0812$\pm$0.0052 & 0.0836$\pm$0.0032 & 0.0838$\pm$0.0018 & 0.0845$\pm$0.0014 \\
\texttt{nbn-cat-avi} & \textbf{0.0534$\pm$0.0026} & 0.0845$\pm$0.0051 & 0.086$\pm$0.0012 & 0.082$\pm$0.0071 & 0.0875$\pm$0.0043 \\
\texttt{nbn-cat-lw} & 0.0624$\pm$0.0082 & 0.0799$\pm$0.0053 & 0.0842$\pm$0.003 & 0.0825$\pm$0.0027 & 0.0831$\pm$0.0024 \\
\texttt{nbn-cat-ve} & 0.061$\pm$0.0083 & \textbf{0.0766$\pm$0.0059} & \textbf{0.0783$\pm$0.0031} & \textbf{0.0783$\pm$0.0035} & \textbf{0.0782$\pm$0.00035} \\
\bottomrule
\end{tabular}}
\end{minipage}\par\bigskip

\noindent\begin{minipage}{\textwidth}\centering\footnotesize
\captionof{table}{Inference accuracy, discrete networks: total query time in seconds, against network size.}\label{tab:app_inference_discrete_total_query_time}\par\smallskip
\adjustbox{max width=\textwidth}{\begin{tabular}{lrrrrr}
\toprule
Method & $n=10$ & $n=50$ & $n=100$ & $n=500$ & $n=1000$ \\
\midrule
\texttt{pgmpy-bayes-ve} & 0.0893$\pm$0.0018 & 0.504$\pm$0.27 & 3.33$\pm$3.3 & 1.85$\pm$1.7 & 42.9$\pm$39 (1/5) \\
\texttt{pgmpy-mle-ve} & \textbf{0.0882$\pm$0.0028} & \textbf{0.43$\pm$0.16} & 6.63$\pm$8.1 & 1.05$\pm$0.6 & 42.5$\pm$19 \\
\texttt{pomegranate-discrete-ve} & 10.8$\pm$11 & 289$\pm$16 & 711$\pm$24 & timeout & timeout \\
\texttt{pyro-empirical-importance} & 43.1$\pm$0.81 & 243$\pm$0.31 & 490$\pm$3.4 & timeout & timeout \\
\texttt{nbn-cat-ais} & 1.81$\pm$0.081 & 5.27$\pm$0.026 & 11.1$\pm$0.13 & 54.2$\pm$0.54 & 109$\pm$1.7 \\
\texttt{nbn-cat-avi} & 1.36$\pm$0.031 & 6.78$\pm$0.055 & 13.6$\pm$0.075 & 69.3$\pm$1.8 & 140$\pm$0.92 \\
\texttt{nbn-cat-lw} & 1.31$\pm$0.046 & 5.59$\pm$0.066 & 11$\pm$0.18 & 54.2$\pm$0.57 & 108$\pm$0.79 \\
\texttt{nbn-cat-ve} & 0.193$\pm$0.0065 & 0.48$\pm$0.058 & \textbf{0.594$\pm$0.059} & \textbf{0.578$\pm$0.083} & \textbf{0.747$\pm$0.067} \\
\bottomrule
\end{tabular}}
\end{minipage}\par\bigskip

\noindent\begin{minipage}{\textwidth}\centering\footnotesize
\captionof{table}{Inference accuracy, discrete networks: fit time in seconds, against network size.}\label{tab:app_inference_discrete_fit_time}\par\smallskip
\adjustbox{max width=\textwidth}{\begin{tabular}{lrrrrr}
\toprule
Method & $n=10$ & $n=50$ & $n=100$ & $n=500$ & $n=1000$ \\
\midrule
\texttt{pgmpy-bayes-ve} & 1.77$\pm$0.012 & 2.09$\pm$0.0073 & 2.53$\pm$0.02 & 5.91$\pm$0.16 & 10.5$\pm$0.33 (1/5) \\
\texttt{pgmpy-mle-ve} & 1.77$\pm$0.0069 & 2.04$\pm$0.0044 & 2.4$\pm$0.026 & 5.41$\pm$0.052 & 9.17$\pm$0.053 \\
\texttt{pomegranate-discrete-ve} & 3.23$\pm$0.024 & 3.34$\pm$0.026 & 3.56$\pm$0.042 & timeout & timeout \\
\texttt{pyro-empirical-importance} & \textbf{0.116$\pm$0.001} & \textbf{0.137$\pm$0.0011} & \textbf{0.17$\pm$0.0055} & timeout & timeout \\
\texttt{nbn-cat-ais} & 5.93$\pm$2.1 & 24.6$\pm$10 & 20.8$\pm$0.022 & 153$\pm$0.59 & 187$\pm$2.5 \\
\texttt{nbn-cat-avi} & 27.1$\pm$4.3 & 79.1$\pm$5 & 127$\pm$0.29 & 156$\pm$1.6 & 193$\pm$3.1 \\
\texttt{nbn-cat-lw} & 0.151$\pm$0.0015 & 0.177$\pm$0.00027 & 0.209$\pm$0.0014 & \textbf{0.556$\pm$0.0032} & \textbf{0.907$\pm$0.0044} \\
\texttt{nbn-cat-ve} & 0.324$\pm$0.0012 & 0.41$\pm$0.0018 & 0.521$\pm$0.0062 & 1.46$\pm$0.013 & 2.44$\pm$0.017 \\
\bottomrule
\end{tabular}}
\end{minipage}\par\bigskip

\noindent\begin{minipage}{\textwidth}\centering\footnotesize
\captionof{table}{Inference accuracy, linear-Gaussian networks: W1 (lower is better), against network size.}\label{tab:app_inference_continuous_lg_w1_per_node}\par\smallskip
\adjustbox{max width=\textwidth}{\begin{tabular}{lrrrrr}
\toprule
Method & $n=10$ & $n=50$ & $n=100$ & $n=500$ & $n=1000$ \\
\midrule
\texttt{pgmpy-lg-predict} & 0.0359$\pm$0.0053 & \textbf{0.0495$\pm$0.0051} & \textbf{0.0617$\pm$0.011} & \textbf{0.0676$\pm$0.0043} & \textbf{0.0394$\pm$0.01} \\
\texttt{pyro-empirical-importance} & 0.17$\pm$0.029 & 0.321$\pm$0.071 & 0.449$\pm$0.027 & timeout & timeout \\
\texttt{nbn-flexcode-lw} & 0.0489$\pm$0.015 & 0.104$\pm$0.013 & 0.129$\pm$0.018 & timeout & timeout \\
\texttt{nbn-flow-ais} & 0.0534$\pm$0.01 & 0.0763$\pm$0.0093 & timeout & timeout & timeout \\
\texttt{nbn-flow-avi} & 0.048$\pm$0.001 & 0.0729$\pm$0.011 & timeout & timeout & timeout \\
\texttt{nbn-flow-lw} & 0.0485$\pm$0.0033 & 0.0712$\pm$0.011 & timeout & timeout & timeout \\
\texttt{nbn-kde-lw} & 0.0408$\pm$0.0087 (1/5) & oom & oom & oom & oom \\
\texttt{nbn-knn-lw} & 0.0506$\pm$0.0059 & 0.0864$\pm$0.004 & 0.096$\pm$0.0058 & oom & oom \\
\texttt{nbn-lg-ais} & 0.0384$\pm$0.0073 & 0.0699$\pm$0.0082 & 0.0947$\pm$0.0085 & 0.117$\pm$0.018 & 0.0723$\pm$0.012 \\
\texttt{nbn-lg-avi} & 0.0477$\pm$0.016 & 0.0659$\pm$0.01 & 0.0983$\pm$0.0093 & 0.121$\pm$0.021 & 0.0703$\pm$0.015 \\
\texttt{nbn-lg-lw} & \textbf{0.0353$\pm$0.0053} & 0.0659$\pm$0.0092 & 0.0937$\pm$0.0054 & 0.116$\pm$0.015 & 0.0744$\pm$0.018 \\
\texttt{nbn-mdn-ais} & 0.037$\pm$0.0051 & 0.0665$\pm$0.011 & 0.0981$\pm$0.0029 & timeout & timeout \\
\texttt{nbn-mdn-avi} & 0.0481$\pm$0.016 & 0.0643$\pm$0.01 & 0.099$\pm$0.0073 & timeout & timeout \\
\texttt{nbn-mdn-lw} & 0.0376$\pm$0.0046 & 0.0661$\pm$0.013 & 0.0971$\pm$0.00068 & timeout & timeout \\
\bottomrule
\end{tabular}}
\end{minipage}\par\bigskip

\noindent\begin{minipage}{\textwidth}\centering\footnotesize
\captionof{table}{Inference accuracy, linear-Gaussian networks: total query time in seconds, against network size.}\label{tab:app_inference_continuous_lg_total_query_time}\par\smallskip
\adjustbox{max width=\textwidth}{\begin{tabular}{lrrrrr}
\toprule
Method & $n=10$ & $n=50$ & $n=100$ & $n=500$ & $n=1000$ \\
\midrule
\texttt{pgmpy-lg-predict} & \textbf{0.0963$\pm$0.0032} & \textbf{0.399$\pm$0.019} & \textbf{0.676$\pm$0.035} & \textbf{3.98$\pm$0.24} & \textbf{10.1$\pm$0.25} \\
\texttt{pyro-empirical-importance} & 32.1$\pm$0.98 & 179$\pm$5.5 & 367$\pm$2 & timeout & timeout \\
\texttt{nbn-flexcode-lw} & 2.15$\pm$0.22 & 8.69$\pm$0.4 & 17.4$\pm$0.25 & timeout & timeout \\
\texttt{nbn-flow-ais} & 14.2$\pm$0.21 & 117$\pm$4.7 & timeout & timeout & timeout \\
\texttt{nbn-flow-avi} & 11$\pm$0.4 & 55.3$\pm$1.7 & timeout & timeout & timeout \\
\texttt{nbn-flow-lw} & 11.3$\pm$0.32 & 54.3$\pm$1.5 & timeout & timeout & timeout \\
\texttt{nbn-kde-lw} & 17.2$\pm$2.2 (1/5) & oom & oom & oom & oom \\
\texttt{nbn-knn-lw} & 10$\pm$2 & 57.2$\pm$1.1 & 122$\pm$1.8 & oom & oom \\
\texttt{nbn-lg-ais} & 2.92$\pm$0.064 & 15.1$\pm$0.39 & 31.7$\pm$0.31 & 15.6$\pm$0.2 & 32.2$\pm$0.23 \\
\texttt{nbn-lg-avi} & 0.495$\pm$0.0075 & 1.76$\pm$0.057 & 3.26$\pm$0.024 & 16.2$\pm$0.23 & 31.7$\pm$0.36 \\
\texttt{nbn-lg-lw} & 0.675$\pm$0.051 & 1.88$\pm$0.02 & 3.46$\pm$0.063 & 16.2$\pm$0.25 & 31.3$\pm$0.16 \\
\texttt{nbn-mdn-ais} & 5.75$\pm$0.16 & 36.1$\pm$1.3 & 76.1$\pm$0.71 & timeout & timeout \\
\texttt{nbn-mdn-avi} & 1.92$\pm$0.16 & 8.74$\pm$0.1 & 17.5$\pm$0.062 & timeout & timeout \\
\texttt{nbn-mdn-lw} & 2.15$\pm$0.034 & 8.64$\pm$0.11 & 17.4$\pm$0.16 & timeout & timeout \\
\bottomrule
\end{tabular}}
\end{minipage}\par\bigskip

\noindent\begin{minipage}{\textwidth}\centering\footnotesize
\captionof{table}{Inference accuracy, linear-Gaussian networks: fit time in seconds, against network size.}\label{tab:app_inference_continuous_lg_fit_time}\par\smallskip
\adjustbox{max width=\textwidth}{\begin{tabular}{lrrrrr}
\toprule
Method & $n=10$ & $n=50$ & $n=100$ & $n=500$ & $n=1000$ \\
\midrule
\texttt{pgmpy-lg-predict} & 1.95$\pm$0.028 & 2$\pm$0.0094 & 2.12$\pm$0.016 & 3.18$\pm$0.026 & 4.34$\pm$0.04 \\
\texttt{pyro-empirical-importance} & \textbf{0.12$\pm$0.012} & \textbf{0.164$\pm$0.0047} & \textbf{0.205$\pm$0.00058} & timeout & timeout \\
\texttt{nbn-flexcode-lw} & 54.4$\pm$16 & 437$\pm$17 & 928$\pm$18 & timeout & timeout \\
\texttt{nbn-flow-ais} & 14.6$\pm$1.9 & 196$\pm$26 & timeout & timeout & timeout \\
\texttt{nbn-flow-avi} & 122$\pm$3.1 & 141$\pm$1.4 & timeout & timeout & timeout \\
\texttt{nbn-flow-lw} & 189$\pm$15 & 678$\pm$36 & timeout & timeout & timeout \\
\texttt{nbn-kde-lw} & 2.45$\pm$0.12 (1/5) & oom & oom & oom & oom \\
\texttt{nbn-knn-lw} & 1.37$\pm$0.0097 & 4.73$\pm$0.013 & 9.06$\pm$0.022 & oom & oom \\
\texttt{nbn-lg-ais} & 18$\pm$3.1 & 238$\pm$5.1 & 305$\pm$27 & 207$\pm$1.1 & 212$\pm$1.2 \\
\texttt{nbn-lg-avi} & 18$\pm$0.19 & 123$\pm$0.055 & 125$\pm$0.69 & 142$\pm$3.7 & 162$\pm$9.7 \\
\texttt{nbn-lg-lw} & 0.377$\pm$0.0072 & 0.496$\pm$0.0013 & 0.644$\pm$0.0067 & \textbf{1.84$\pm$0.015} & \textbf{3.24$\pm$0.012} \\
\texttt{nbn-mdn-ais} & 25.4$\pm$3.2 & 322$\pm$21 & 479$\pm$73 & timeout & timeout \\
\texttt{nbn-mdn-avi} & 34.6$\pm$0.76 & 125$\pm$0.35 & 129$\pm$0.072 & timeout & timeout \\
\texttt{nbn-mdn-lw} & 55.3$\pm$15 & 441$\pm$28 & 930$\pm$11 & timeout & timeout \\
\bottomrule
\end{tabular}}
\end{minipage}\par\bigskip

\noindent\begin{minipage}{\textwidth}\centering\footnotesize
\captionof{table}{Inference accuracy, non-Gaussian networks: W1 (lower is better), against network size.}\label{tab:app_inference_continuous_nongauss_w1_per_node}\par\smallskip
\adjustbox{max width=\textwidth}{\begin{tabular}{lrrrrr}
\toprule
Method & $n=10$ & $n=50$ & $n=100$ & $n=500$ & $n=1000$ \\
\midrule
\texttt{pgmpy-lg-predict} & 0.131$\pm$0.07 & 0.223$\pm$0.036 & 0.0901$\pm$0.02 & \textbf{0.0982$\pm$0.049} & 0.156$\pm$0.012 \\
\texttt{pyro-empirical-importance} & 0.228$\pm$0.058 & 0.343$\pm$0.04 & 0.228$\pm$0.016 & timeout & timeout \\
\texttt{nbn-flexcode-lw} & 0.068$\pm$0.037 & 0.175$\pm$0.083 & 0.131$\pm$0.071 & timeout & timeout \\
\texttt{nbn-flow-ais} & 0.0599$\pm$0.019 & 0.0801$\pm$0.022 & 0.0516$\pm$0.0028 & timeout & timeout \\
\texttt{nbn-flow-avi} & 0.0517$\pm$0.0056 & 0.083$\pm$0.023 & 0.0539$\pm$0.002 & timeout & timeout \\
\texttt{nbn-flow-lw} & 0.051$\pm$0.0026 & 0.0809$\pm$0.024 & \textbf{0.0508$\pm$0.0043} & timeout & timeout \\
\texttt{nbn-kde-lw} & 0.0454$\pm$0.0079 (1/5) & oom & oom & oom & oom \\
\texttt{nbn-knn-lw} & 0.068$\pm$0.028 & 0.11$\pm$0.012 & 0.0681$\pm$0.0079 & oom & oom \\
\texttt{nbn-lg-ais} & 0.13$\pm$0.07 & 0.224$\pm$0.035 & 0.0917$\pm$0.017 & 0.0998$\pm$0.051 & \textbf{0.155$\pm$0.013} \\
\texttt{nbn-lg-avi} & 0.131$\pm$0.07 & 0.232$\pm$0.024 & 0.0903$\pm$0.019 & 0.0991$\pm$0.05 & \textbf{0.155$\pm$0.012} \\
\texttt{nbn-lg-lw} & 0.129$\pm$0.07 & 0.226$\pm$0.036 & 0.0878$\pm$0.016 & 0.102$\pm$0.055 & \textbf{0.155$\pm$0.011} \\
\texttt{nbn-mdn-ais} & 0.0483$\pm$0.007 & 0.0791$\pm$0.017 & 0.0627$\pm$0.0054 & timeout & timeout \\
\texttt{nbn-mdn-avi} & 0.0499$\pm$0.0084 & \textbf{0.0707$\pm$0.015} & 0.0513$\pm$0.0053 & timeout & timeout \\
\texttt{nbn-mdn-lw} & \textbf{0.0475$\pm$0.008} & 0.0715$\pm$0.016 & 0.0522$\pm$0.007 & timeout & timeout \\
\bottomrule
\end{tabular}}
\end{minipage}\par\bigskip

\noindent\begin{minipage}{\textwidth}\centering\footnotesize
\captionof{table}{Inference accuracy, non-Gaussian networks: total query time in seconds, against network size.}\label{tab:app_inference_continuous_nongauss_total_query_time}\par\smallskip
\adjustbox{max width=\textwidth}{\begin{tabular}{lrrrrr}
\toprule
Method & $n=10$ & $n=50$ & $n=100$ & $n=500$ & $n=1000$ \\
\midrule
\texttt{pgmpy-lg-predict} & \textbf{0.105$\pm$0.012} & \textbf{0.416$\pm$0.031} & \textbf{0.7$\pm$0.013} & \textbf{3.79$\pm$0.061} & \textbf{9.92$\pm$0.14} \\
\texttt{pyro-empirical-importance} & 30.7$\pm$0.55 & 180$\pm$6 & 358$\pm$3.2 & timeout & timeout \\
\texttt{nbn-flexcode-lw} & 1.93$\pm$0.19 & 8.72$\pm$0.37 & 17.4$\pm$0.33 & timeout & timeout \\
\texttt{nbn-flow-ais} & 13.5$\pm$0.14 & 126$\pm$2.6 & 110$\pm$2.1 & timeout & timeout \\
\texttt{nbn-flow-avi} & 10.6$\pm$0.34 & 56$\pm$2.9 & 110$\pm$1.6 & timeout & timeout \\
\texttt{nbn-flow-lw} & 10.6$\pm$0.22 & 54.8$\pm$0.25 & 111$\pm$1.5 & timeout & timeout \\
\texttt{nbn-kde-lw} & 16$\pm$2.1 (1/5) & oom & oom & oom & oom \\
\texttt{nbn-knn-lw} & 9.07$\pm$1.7 & 57.4$\pm$1.4 & 121$\pm$1.5 & oom & oom \\
\texttt{nbn-lg-ais} & 2.44$\pm$0.029 & 14.1$\pm$0.027 & 30.5$\pm$1.4 & 153$\pm$7.7 & 220$\pm$3.1 \\
\texttt{nbn-lg-avi} & 0.391$\pm$0.044 & 1.73$\pm$0.0097 & 3.27$\pm$0.022 & 15.7$\pm$0.2 & 31.5$\pm$0.28 \\
\texttt{nbn-lg-lw} & 0.612$\pm$0.0093 & 1.88$\pm$0.033 & 3.48$\pm$0.063 & 16$\pm$0.14 & 31.1$\pm$0.0079 \\
\texttt{nbn-mdn-ais} & 4.82$\pm$0.24 & 28.6$\pm$0.22 & 67$\pm$8.1 & timeout & timeout \\
\texttt{nbn-mdn-avi} & 1.71$\pm$0.14 & 8.61$\pm$0.032 & 17.5$\pm$0.22 & timeout & timeout \\
\texttt{nbn-mdn-lw} & 1.84$\pm$0.098 & 8.69$\pm$0.099 & 17.4$\pm$0.23 & timeout & timeout \\
\bottomrule
\end{tabular}}
\end{minipage}\par\bigskip

\noindent\begin{minipage}{\textwidth}\centering\footnotesize
\captionof{table}{Inference accuracy, non-Gaussian networks: fit time in seconds, against network size.}\label{tab:app_inference_continuous_nongauss_fit_time}\par\smallskip
\adjustbox{max width=\textwidth}{\begin{tabular}{lrrrrr}
\toprule
Method & $n=10$ & $n=50$ & $n=100$ & $n=500$ & $n=1000$ \\
\midrule
\texttt{pgmpy-lg-predict} & 1.92$\pm$0.0021 & 2.07$\pm$0.027 & 2.11$\pm$0.025 & 2.98$\pm$0.063 & 3.75$\pm$0.021 \\
\texttt{pyro-empirical-importance} & \textbf{0.131$\pm$0.0025} & \textbf{0.162$\pm$0.00045} & \textbf{0.204$\pm$0.0004} & timeout & timeout \\
\texttt{nbn-flexcode-lw} & 51.2$\pm$15 & 433$\pm$15 & 932$\pm$24 & timeout & timeout \\
\texttt{nbn-flow-ais} & 10.2$\pm$0.7 & 115$\pm$2.7 & 166$\pm$3 & timeout & timeout \\
\texttt{nbn-flow-avi} & 111$\pm$1.2 & 142$\pm$2.1 & 166$\pm$2.1 & timeout & timeout \\
\texttt{nbn-flow-lw} & 207$\pm$43 & 588$\pm$44 & 1.11e+03$\pm$15 & timeout & timeout \\
\texttt{nbn-kde-lw} & 2.31$\pm$0.11 (1/5) & oom & oom & oom & oom \\
\texttt{nbn-knn-lw} & 1.32$\pm$0.0052 & 4.74$\pm$0.0095 & 9.02$\pm$0.015 & oom & oom \\
\texttt{nbn-lg-ais} & 8.35$\pm$0.14 & 140$\pm$0.58 & 202$\pm$24 & 525$\pm$73 & 606$\pm$0.84 \\
\texttt{nbn-lg-avi} & 14.7$\pm$1.2 & 123$\pm$0.025 & 125$\pm$0.73 & 136$\pm$1 & 143$\pm$1.3 \\
\texttt{nbn-lg-lw} & 0.383$\pm$0.0055 & 0.495$\pm$0.0022 & 0.627$\pm$0.0054 & \textbf{1.73$\pm$0.032} & \textbf{2.79$\pm$0.068} \\
\texttt{nbn-mdn-ais} & 15.8$\pm$3.2 & 288$\pm$0.45 & 339$\pm$45 & timeout & timeout \\
\texttt{nbn-mdn-avi} & 29.1$\pm$2.6 & 126$\pm$0.64 & 100$\pm$5.4 & timeout & timeout \\
\texttt{nbn-mdn-lw} & 50.1$\pm$16 & 431$\pm$16 & 920$\pm$19 & timeout & timeout \\
\bottomrule
\end{tabular}}
\end{minipage}\par\bigskip

\clearpage

\subsection{Scalability}\label{app:scalability}

\noindent\begin{minipage}{\textwidth}\centering\footnotesize
\includegraphics[width=\textwidth]{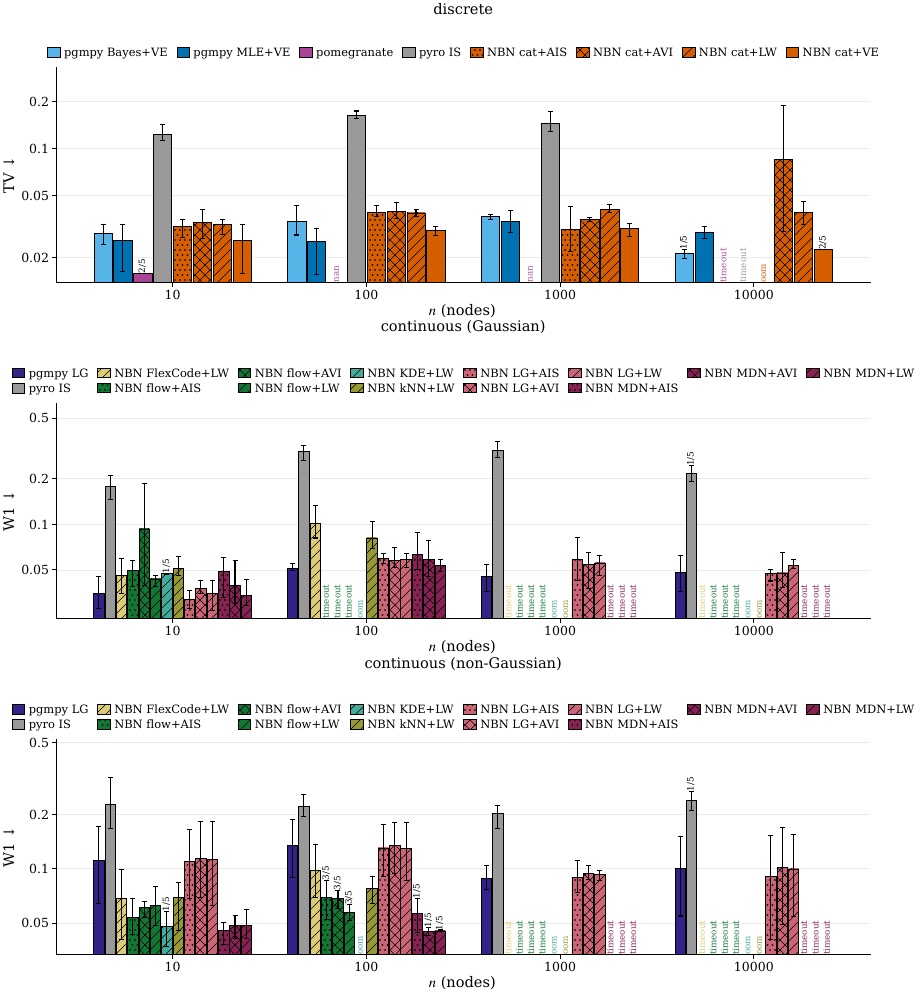}
\captionof{figure}{Scalability, all methods: accuracy (TV on discrete targets, W1 on continuous ones) against network size.}\label{fig:app_scalability_accuracy}
\end{minipage}\par\bigskip

\noindent\begin{minipage}{\textwidth}\centering\footnotesize
\includegraphics[width=\textwidth]{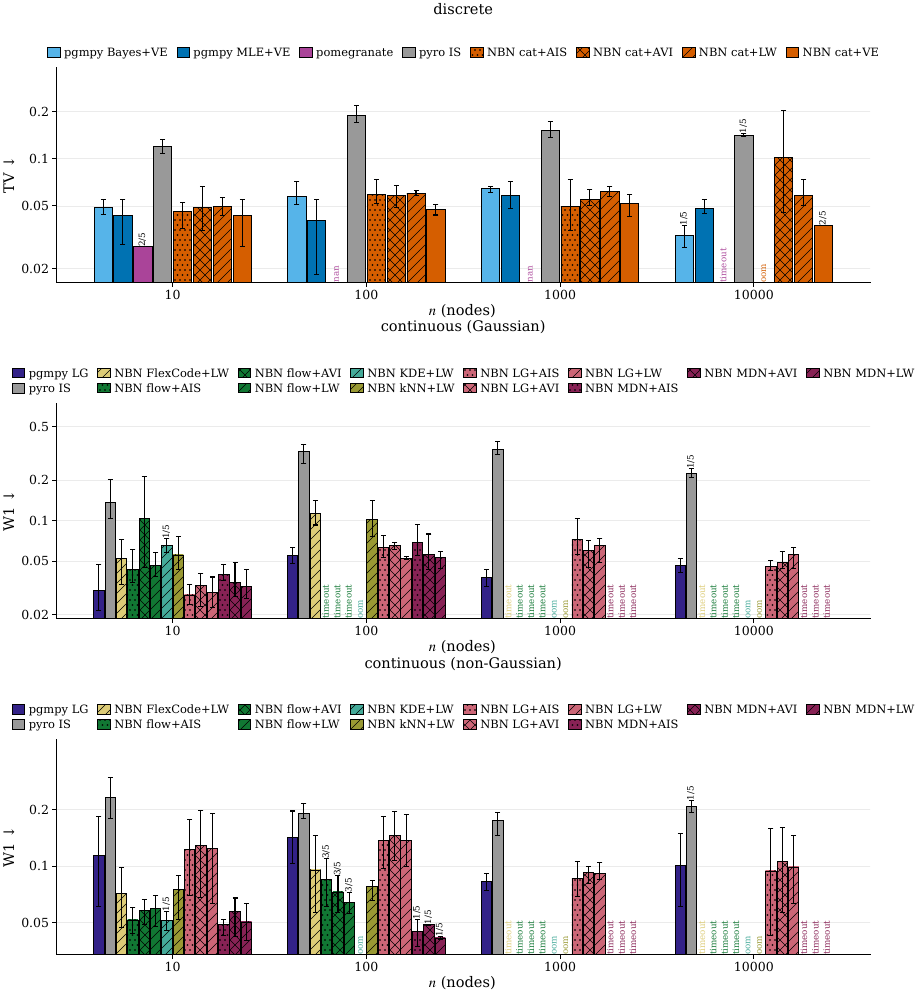}
\captionof{figure}{Scalability, all methods: accuracy with full evidence against network size.}\label{fig:app_scalability_accuracy_full}
\end{minipage}\par\bigskip

\noindent\begin{minipage}{\textwidth}\centering\footnotesize
\includegraphics[width=\textwidth]{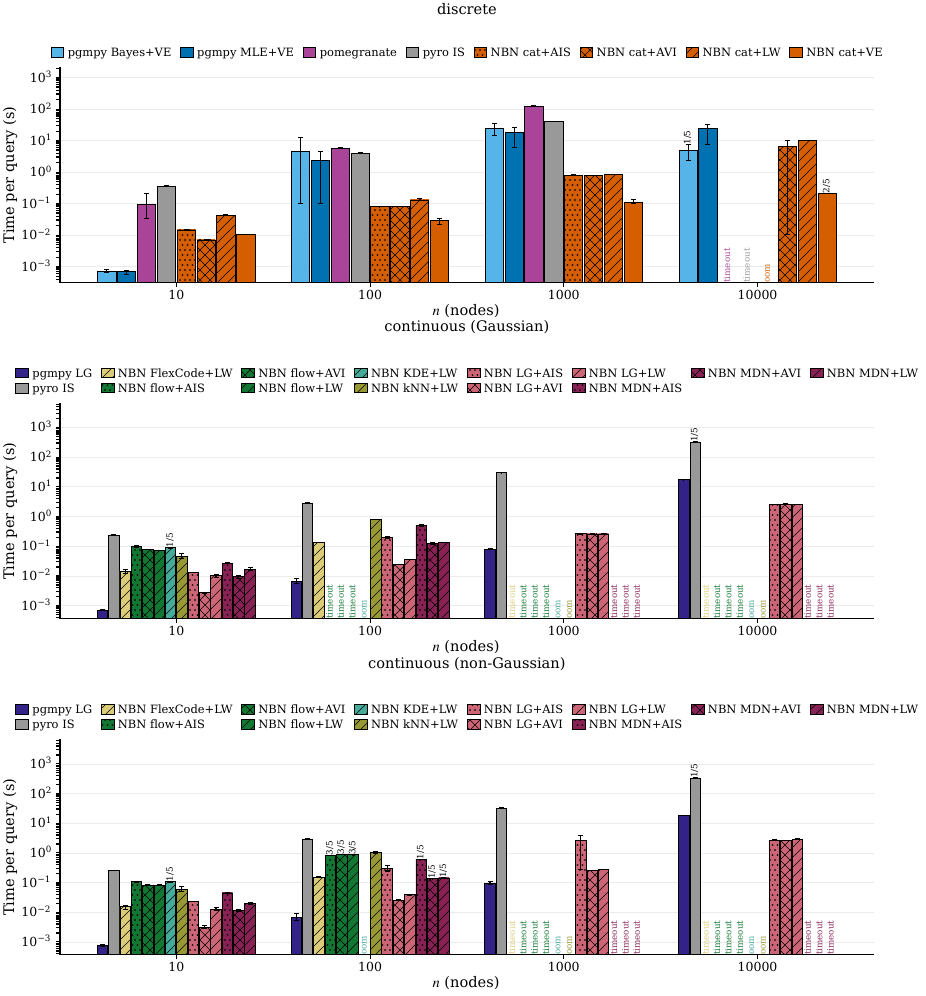}
\captionof{figure}{Scalability, all methods: time per query (s), both evidence modes against network size.}\label{fig:app_scalability_query_time}
\end{minipage}\par\bigskip

\noindent\begin{minipage}{\textwidth}\centering\footnotesize
\includegraphics[width=\textwidth]{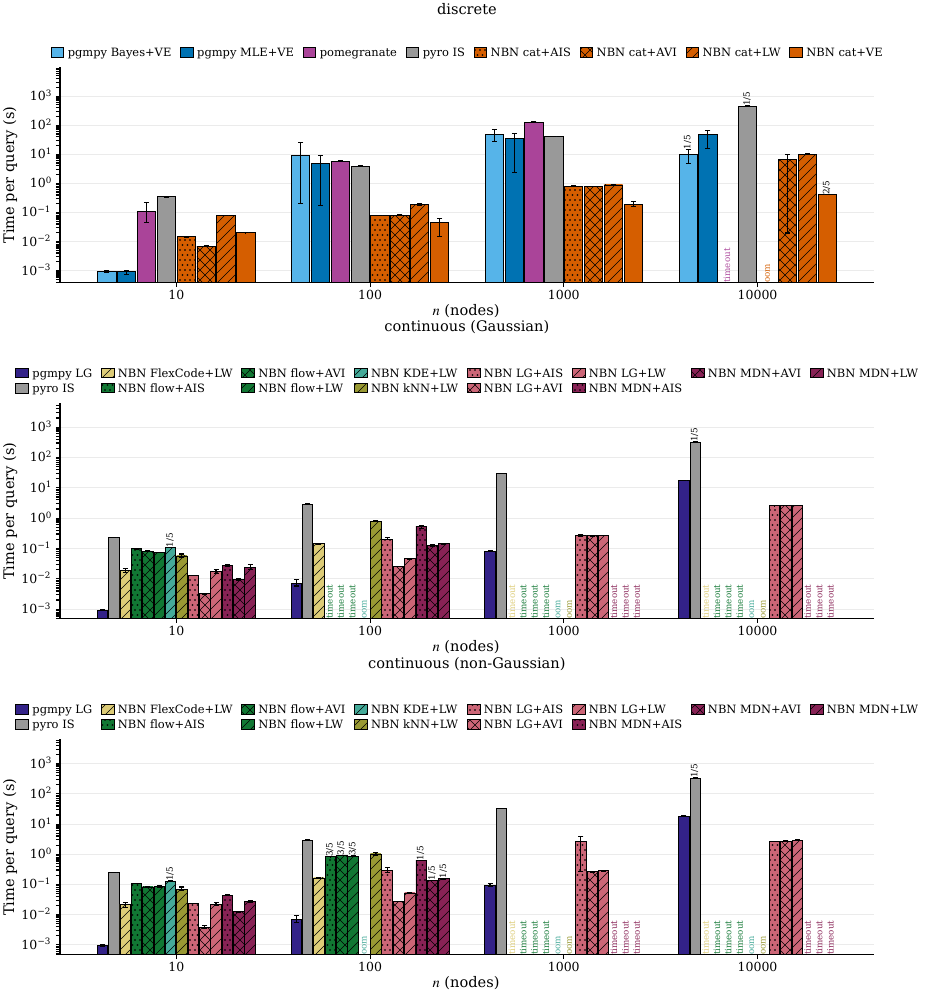}
\captionof{figure}{Scalability, all methods: time per query (s), full evidence against network size.}\label{fig:app_scalability_query_time_full}
\end{minipage}\par\bigskip

\noindent\begin{minipage}{\textwidth}\centering\footnotesize
\captionof{table}{Scalability, discrete networks: TV (lower is better), both evidence modes, against network size.}\label{tab:app_scalability_discrete_tv_per_node}\par\smallskip
\adjustbox{max width=\textwidth}{\begin{tabular}{lrrrr}
\toprule
Method & $n=10$ & $n=100$ & $n=1000$ & $n=10000$ \\
\midrule
\texttt{pgmpy-bayes-ve} & 0.0284$\pm$0.0041 & 0.0343$\pm$0.0076 & 0.037$\pm$0.0013 & 0.0212$\pm$0.0029 (1/5) \\
\texttt{pgmpy-mle-ve} & 0.0258$\pm$0.008 & \textbf{0.0256$\pm$0.0074} & 0.0343$\pm$0.0057 & \textbf{0.0291$\pm$0.0026} \\
\texttt{pomegranate-discrete-ve} & 0.0159$\pm$0.0034 (2/5) & nan & nan & timeout \\
\texttt{pyro-empirical-importance} & 0.123$\pm$0.015 & 0.164$\pm$0.0087 & 0.146$\pm$0.022 & timeout \\
\texttt{nbn-cat-ais} & 0.0317$\pm$0.004 & 0.0392$\pm$0.0034 & \textbf{0.0303$\pm$0.01} & oom \\
\texttt{nbn-cat-avi} & 0.0337$\pm$0.0071 & 0.0394$\pm$0.0046 & 0.0351$\pm$0.001 & 0.0847$\pm$0.079 \\
\texttt{nbn-cat-lw} & 0.0326$\pm$0.0037 & 0.0387$\pm$0.002 & 0.0409$\pm$0.0026 & 0.0389$\pm$0.0067 \\
\texttt{nbn-cat-ve} & \textbf{0.0256$\pm$0.0083} & 0.03$\pm$0.0018 & 0.0306$\pm$0.0029 & 0.0225$\pm$0.0028 (2/5) \\
\bottomrule
\end{tabular}}
\end{minipage}\par\bigskip

\noindent\begin{minipage}{\textwidth}\centering\footnotesize
\captionof{table}{Scalability, discrete networks: TV (lower is better), full evidence, against network size.}\label{tab:app_scalability_discrete_tv_per_node_full}\par\smallskip
\adjustbox{max width=\textwidth}{\begin{tabular}{lrrrr}
\toprule
Method & $n=10$ & $n=100$ & $n=1000$ & $n=10000$ \\
\midrule
\texttt{pgmpy-bayes-ve} & 0.0491$\pm$0.0055 & 0.0578$\pm$0.01 & 0.0645$\pm$0.0026 & 0.0323$\pm$0.0082 (1/5) \\
\texttt{pgmpy-mle-ve} & 0.0438$\pm$0.013 & \textbf{0.0406$\pm$0.018} & 0.0582$\pm$0.011 & \textbf{0.0484$\pm$0.0051} \\
\texttt{pomegranate-discrete-ve} & 0.0276$\pm$0.0087 (2/5) & nan & nan & timeout \\
\texttt{pyro-empirical-importance} & 0.12$\pm$0.012 & 0.188$\pm$0.024 & 0.151$\pm$0.019 & 0.141$\pm$0.046 (1/5) \\
\texttt{nbn-cat-ais} & 0.046$\pm$0.0082 & 0.0595$\pm$0.011 & \textbf{0.0498$\pm$0.019} & oom \\
\texttt{nbn-cat-avi} & 0.049$\pm$0.016 & 0.0583$\pm$0.0093 & 0.0551$\pm$0.0066 & 0.102$\pm$0.078 \\
\texttt{nbn-cat-lw} & 0.0495$\pm$0.0065 & 0.06$\pm$0.0022 & 0.0619$\pm$0.0046 & 0.0586$\pm$0.012 \\
\texttt{nbn-cat-ve} & \textbf{0.0435$\pm$0.014} & 0.0476$\pm$0.0035 & 0.0516$\pm$0.0081 & 0.0376$\pm$0.0067 (2/5) \\
\bottomrule
\end{tabular}}
\end{minipage}\par\bigskip

\noindent\begin{minipage}{\textwidth}\centering\footnotesize
\captionof{table}{Scalability, discrete networks: time per query in seconds, both evidence modes, against network size.}\label{tab:app_scalability_discrete_query_time}\par\smallskip
\adjustbox{max width=\textwidth}{\begin{tabular}{lrrrr}
\toprule
Method & $n=10$ & $n=100$ & $n=1000$ & $n=10000$ \\
\midrule
\texttt{pgmpy-bayes-ve} & 0.000709$\pm$7e-05 & 4.54$\pm$6.3 & 23.6$\pm$10 & 4.96$\pm$20 (1/5) \\
\texttt{pgmpy-mle-ve} & \textbf{0.000689$\pm$9.3e-05} & 2.43$\pm$2.2 & 18.9$\pm$11 & 23.8$\pm$13 \\
\texttt{pomegranate-discrete-ve} & 0.094$\pm$0.087 & 5.87$\pm$0.23 & 124$\pm$3.9 & timeout \\
\texttt{pyro-empirical-importance} & 0.359$\pm$0.0058 & 4.01$\pm$0.12 & 40.4$\pm$0.34 & timeout \\
\texttt{nbn-cat-ais} & 0.0146$\pm$0.0004 & 0.08$\pm$0.00099 & 0.808$\pm$0.019 & oom \\
\texttt{nbn-cat-avi} & 0.00704$\pm$0.00036 & 0.0806$\pm$0.0018 & 0.782$\pm$0.018 & \textbf{6.73$\pm$5.1} \\
\texttt{nbn-cat-lw} & 0.0435$\pm$0.00027 & 0.131$\pm$0.0095 & 0.831$\pm$0.015 & 10.1$\pm$0.035 \\
\texttt{nbn-cat-ve} & 0.0104$\pm$0.00012 & \textbf{0.0296$\pm$0.0059} & \textbf{0.113$\pm$0.016} & 0.212$\pm$0.024 (2/5) \\
\bottomrule
\end{tabular}}
\end{minipage}\par\bigskip

\noindent\begin{minipage}{\textwidth}\centering\footnotesize
\captionof{table}{Scalability, discrete networks: time per query in seconds, full evidence, against network size.}\label{tab:app_scalability_discrete_query_time_full}\par\smallskip
\adjustbox{max width=\textwidth}{\begin{tabular}{lrrrr}
\toprule
Method & $n=10$ & $n=100$ & $n=1000$ & $n=10000$ \\
\midrule
\texttt{pgmpy-bayes-ve} & 0.000942$\pm$7e-05 & 9.05$\pm$13 & 46.8$\pm$21 & 9.78$\pm$41 (1/5) \\
\texttt{pgmpy-mle-ve} & \textbf{0.000912$\pm$0.00013} & 4.83$\pm$4.3 & 34.5$\pm$26 & 47.5$\pm$26 \\
\texttt{pomegranate-discrete-ve} & 0.107$\pm$0.087 & 5.88$\pm$0.21 & 125$\pm$4.3 & timeout \\
\texttt{pyro-empirical-importance} & 0.344$\pm$0.0056 & 3.98$\pm$0.15 & 40.4$\pm$0.4 & 461$\pm$10 (1/5) \\
\texttt{nbn-cat-ais} & 0.0145$\pm$0.00059 & 0.0802$\pm$0.0013 & 0.807$\pm$0.022 & oom \\
\texttt{nbn-cat-avi} & 0.00697$\pm$0.00031 & 0.0805$\pm$0.0016 & 0.782$\pm$0.022 & \textbf{6.72$\pm$5} \\
\texttt{nbn-cat-lw} & 0.0796$\pm$0.0011 & 0.183$\pm$0.015 & 0.884$\pm$0.027 & 10.1$\pm$0.052 \\
\texttt{nbn-cat-ve} & 0.0201$\pm$0.00021 & \textbf{0.047$\pm$0.025} & \textbf{0.189$\pm$0.038} & 0.404$\pm$0.051 (2/5) \\
\bottomrule
\end{tabular}}
\end{minipage}\par\bigskip

\noindent\begin{minipage}{\textwidth}\centering\footnotesize
\captionof{table}{Scalability, linear-Gaussian networks: W1 (lower is better), both evidence modes, against network size.}\label{tab:app_scalability_continuous_lg_w1_per_node}\par\smallskip
\adjustbox{max width=\textwidth}{\begin{tabular}{lrrrr}
\toprule
Method & $n=10$ & $n=100$ & $n=1000$ & $n=10000$ \\
\midrule
\texttt{pgmpy-lg-predict} & 0.0353$\pm$0.0087 & \textbf{0.0515$\pm$0.0028} & \textbf{0.0455$\pm$0.0089} & 0.0482$\pm$0.013 \\
\texttt{pyro-empirical-importance} & 0.178$\pm$0.032 & 0.3$\pm$0.034 & 0.309$\pm$0.037 & 0.218$\pm$0.031 (1/5) \\
\texttt{nbn-flexcode-lw} & 0.0462$\pm$0.012 & 0.101$\pm$0.026 & timeout & timeout \\
\texttt{nbn-flow-ais} & 0.0494$\pm$0.0095 & timeout & timeout & timeout \\
\texttt{nbn-flow-avi} & 0.0932$\pm$0.073 & timeout & timeout & timeout \\
\texttt{nbn-flow-lw} & 0.0436$\pm$0.0035 & timeout & timeout & timeout \\
\texttt{nbn-kde-lw} & 0.0473$\pm$0.0058 (1/5) & oom & oom & oom \\
\texttt{nbn-knn-lw} & 0.0513$\pm$0.008 & 0.0813$\pm$0.018 & oom & oom \\
\texttt{nbn-lg-ais} & \textbf{0.032$\pm$0.0045} & 0.0594$\pm$0.0045 & 0.0584$\pm$0.019 & \textbf{0.0475$\pm$0.004} \\
\texttt{nbn-lg-avi} & 0.0378$\pm$0.0038 & 0.0578$\pm$0.0091 & 0.0546$\pm$0.014 & 0.0478$\pm$0.013 \\
\texttt{nbn-lg-lw} & 0.0348$\pm$0.0079 & 0.059$\pm$0.0065 & 0.0556$\pm$0.0081 & 0.0537$\pm$0.0037 \\
\texttt{nbn-mdn-ais} & 0.0488$\pm$0.014 & 0.0636$\pm$0.019 & timeout & timeout \\
\texttt{nbn-mdn-avi} & 0.0395$\pm$0.014 & 0.0585$\pm$0.016 & timeout & timeout \\
\texttt{nbn-mdn-lw} & 0.034$\pm$0.0072 & 0.0539$\pm$0.0047 & timeout & timeout \\
\bottomrule
\end{tabular}}
\end{minipage}\par\bigskip

\noindent\begin{minipage}{\textwidth}\centering\footnotesize
\captionof{table}{Scalability, linear-Gaussian networks: W1 (lower is better), full evidence, against network size.}\label{tab:app_scalability_continuous_lg_w1_per_node_full}\par\smallskip
\adjustbox{max width=\textwidth}{\begin{tabular}{lrrrr}
\toprule
Method & $n=10$ & $n=100$ & $n=1000$ & $n=10000$ \\
\midrule
\texttt{pgmpy-lg-predict} & 0.0303$\pm$0.013 & 0.0553$\pm$0.0076 & \textbf{0.0378$\pm$0.0054} & 0.0466$\pm$0.0056 \\
\texttt{pyro-empirical-importance} & 0.138$\pm$0.05 & 0.33$\pm$0.051 & 0.338$\pm$0.038 & 0.226$\pm$0.022 (1/5) \\
\texttt{nbn-flexcode-lw} & 0.0524$\pm$0.019 & 0.113$\pm$0.024 & timeout & timeout \\
\texttt{nbn-flow-ais} & 0.0435$\pm$0.013 & timeout & timeout & timeout \\
\texttt{nbn-flow-avi} & 0.104$\pm$0.084 & timeout & timeout & timeout \\
\texttt{nbn-flow-lw} & 0.0467$\pm$0.0098 & timeout & timeout & timeout \\
\texttt{nbn-kde-lw} & 0.0656$\pm$0.011 (1/5) & oom & oom & oom \\
\texttt{nbn-knn-lw} & 0.0557$\pm$0.017 & 0.102$\pm$0.033 & oom & oom \\
\texttt{nbn-lg-ais} & \textbf{0.028$\pm$0.0047} & 0.0637$\pm$0.012 & 0.0726$\pm$0.024 & \textbf{0.0454$\pm$0.0038} \\
\texttt{nbn-lg-avi} & 0.0331$\pm$0.0087 & 0.0654$\pm$0.0037 & 0.0603$\pm$0.013 & 0.0493$\pm$0.0077 \\
\texttt{nbn-lg-lw} & 0.0291$\pm$0.0077 & \textbf{0.0526$\pm$0.0014} & 0.0652$\pm$0.012 & 0.056$\pm$0.0094 \\
\texttt{nbn-mdn-ais} & 0.0399$\pm$0.0057 & 0.0694$\pm$0.019 & timeout & timeout \\
\texttt{nbn-mdn-avi} & 0.035$\pm$0.011 & 0.0558$\pm$0.018 & timeout & timeout \\
\texttt{nbn-mdn-lw} & 0.0326$\pm$0.0084 & 0.0534$\pm$0.0077 & timeout & timeout \\
\bottomrule
\end{tabular}}
\end{minipage}\par\bigskip

\noindent\begin{minipage}{\textwidth}\centering\footnotesize
\captionof{table}{Scalability, linear-Gaussian networks: time per query in seconds, both evidence modes, against network size.}\label{tab:app_scalability_continuous_lg_query_time}\par\smallskip
\adjustbox{max width=\textwidth}{\begin{tabular}{lrrrr}
\toprule
Method & $n=10$ & $n=100$ & $n=1000$ & $n=10000$ \\
\midrule
\texttt{pgmpy-lg-predict} & \textbf{0.000719$\pm$1.3e-05} & \textbf{0.00648$\pm$0.0014} & \textbf{0.0803$\pm$0.0019} & 17.3$\pm$0.16 \\
\texttt{pyro-empirical-importance} & 0.237$\pm$0.0036 & 2.86$\pm$0.11 & 30$\pm$0.29 & 324$\pm$5.4 (1/5) \\
\texttt{nbn-flexcode-lw} & 0.0146$\pm$0.0024 & 0.135$\pm$0.002 & timeout & timeout \\
\texttt{nbn-flow-ais} & 0.0989$\pm$0.0055 & timeout & timeout & timeout \\
\texttt{nbn-flow-avi} & 0.0793$\pm$0.00092 & timeout & timeout & timeout \\
\texttt{nbn-flow-lw} & 0.0728$\pm$0.00074 & timeout & timeout & timeout \\
\texttt{nbn-kde-lw} & 0.0893$\pm$0.011 (1/5) & oom & oom & oom \\
\texttt{nbn-knn-lw} & 0.0457$\pm$0.0091 & 0.788$\pm$0.011 & oom & oom \\
\texttt{nbn-lg-ais} & 0.0132$\pm$0.00029 & 0.202$\pm$0.015 & 0.264$\pm$0.01 & \textbf{2.56$\pm$0.0042} \\
\texttt{nbn-lg-avi} & 0.00272$\pm$4.5e-05 & 0.025$\pm$0.00018 & 0.26$\pm$0.0051 & 2.64$\pm$0.04 \\
\texttt{nbn-lg-lw} & 0.0103$\pm$0.0012 & 0.0357$\pm$0.00072 & 0.26$\pm$0.0022 & 2.57$\pm$0.014 \\
\texttt{nbn-mdn-ais} & 0.0272$\pm$0.00096 & 0.519$\pm$0.039 & timeout & timeout \\
\texttt{nbn-mdn-avi} & 0.00938$\pm$0.00079 & 0.126$\pm$0.0073 & timeout & timeout \\
\texttt{nbn-mdn-lw} & 0.0172$\pm$0.0024 & 0.134$\pm$0.0027 & timeout & timeout \\
\bottomrule
\end{tabular}}
\end{minipage}\par\bigskip

\noindent\begin{minipage}{\textwidth}\centering\footnotesize
\captionof{table}{Scalability, linear-Gaussian networks: time per query in seconds, full evidence, against network size.}\label{tab:app_scalability_continuous_lg_query_time_full}\par\smallskip
\adjustbox{max width=\textwidth}{\begin{tabular}{lrrrr}
\toprule
Method & $n=10$ & $n=100$ & $n=1000$ & $n=10000$ \\
\midrule
\texttt{pgmpy-lg-predict} & \textbf{0.000936$\pm$1.7e-05} & \textbf{0.0071$\pm$0.0018} & \textbf{0.0812$\pm$0.002} & 17.5$\pm$0.27 \\
\texttt{pyro-empirical-importance} & 0.228$\pm$0.0047 & 2.88$\pm$0.11 & 30$\pm$0.26 & 324$\pm$5.5 (1/5) \\
\texttt{nbn-flexcode-lw} & 0.0187$\pm$0.0026 & 0.143$\pm$0.0028 & timeout & timeout \\
\texttt{nbn-flow-ais} & 0.0987$\pm$0.0057 & timeout & timeout & timeout \\
\texttt{nbn-flow-avi} & 0.0816$\pm$0.0016 & timeout & timeout & timeout \\
\texttt{nbn-flow-lw} & 0.075$\pm$0.0012 & timeout & timeout & timeout \\
\texttt{nbn-kde-lw} & 0.109$\pm$0.0095 (1/5) & oom & oom & oom \\
\texttt{nbn-knn-lw} & 0.0568$\pm$0.0085 & 0.798$\pm$0.0098 & oom & oom \\
\texttt{nbn-lg-ais} & 0.013$\pm$0.00013 & 0.204$\pm$0.016 & 0.268$\pm$0.012 & \textbf{2.57$\pm$0.0098} \\
\texttt{nbn-lg-avi} & 0.00323$\pm$6.9e-05 & 0.0259$\pm$0.00027 & 0.262$\pm$0.005 & 2.61$\pm$0.059 \\
\texttt{nbn-lg-lw} & 0.018$\pm$0.0025 & 0.0457$\pm$0.0015 & 0.271$\pm$0.0031 & 2.59$\pm$0.025 \\
\texttt{nbn-mdn-ais} & 0.0279$\pm$0.0013 & 0.519$\pm$0.04 & timeout & timeout \\
\texttt{nbn-mdn-avi} & 0.00972$\pm$0.00074 & 0.127$\pm$0.0072 & timeout & timeout \\
\texttt{nbn-mdn-lw} & 0.024$\pm$0.0045 & 0.143$\pm$0.0048 & timeout & timeout \\
\bottomrule
\end{tabular}}
\end{minipage}\par\bigskip

\noindent\begin{minipage}{\textwidth}\centering\footnotesize
\captionof{table}{Scalability, non-Gaussian networks: W1 (lower is better), both evidence modes, against network size.}\label{tab:app_scalability_continuous_nongauss_w1_per_node}\par\smallskip
\adjustbox{max width=\textwidth}{\begin{tabular}{lrrrr}
\toprule
Method & $n=10$ & $n=100$ & $n=1000$ & $n=10000$ \\
\midrule
\texttt{pgmpy-lg-predict} & 0.111$\pm$0.054 & 0.136$\pm$0.049 & \textbf{0.0891$\pm$0.014} & 0.101$\pm$0.048 \\
\texttt{pyro-empirical-importance} & 0.228$\pm$0.076 & 0.22$\pm$0.032 & 0.203$\pm$0.029 & 0.24$\pm$0.046 (1/5) \\
\texttt{nbn-flexcode-lw} & 0.0687$\pm$0.029 & 0.0981$\pm$0.034 & timeout & timeout \\
\texttt{nbn-flow-ais} & 0.054$\pm$0.013 & 0.0694$\pm$0.0084 (3/5) & timeout & timeout \\
\texttt{nbn-flow-avi} & 0.0613$\pm$0.006 & 0.0682$\pm$0.0039 (3/5) & timeout & timeout \\
\texttt{nbn-flow-lw} & 0.0627$\pm$0.016 & 0.0576$\pm$0.0028 (3/5) & timeout & timeout \\
\texttt{nbn-kde-lw} & 0.0478$\pm$0.012 (1/5) & oom & oom & oom \\
\texttt{nbn-knn-lw} & 0.0691$\pm$0.019 & \textbf{0.0783$\pm$0.013} & oom & oom \\
\texttt{nbn-lg-ais} & 0.11$\pm$0.048 & 0.131$\pm$0.043 & 0.0899$\pm$0.018 & \textbf{0.0908$\pm$0.056} \\
\texttt{nbn-lg-avi} & 0.115$\pm$0.057 & 0.135$\pm$0.044 & 0.0942$\pm$0.0087 & 0.102$\pm$0.06 \\
\texttt{nbn-lg-lw} & 0.113$\pm$0.06 & 0.13$\pm$0.047 & 0.0934$\pm$0.0059 & 0.0999$\pm$0.05 \\
\texttt{nbn-mdn-ais} & \textbf{0.0458$\pm$0.006} & 0.0564$\pm$0.016 (1/5) & timeout & timeout \\
\texttt{nbn-mdn-avi} & 0.0487$\pm$0.0071 & 0.0453$\pm$0.006 (1/5) & timeout & timeout \\
\texttt{nbn-mdn-lw} & 0.0484$\pm$0.0092 & 0.0453$\pm$0.004 (1/5) & timeout & timeout \\
\bottomrule
\end{tabular}}
\end{minipage}\par\bigskip

\noindent\begin{minipage}{\textwidth}\centering\footnotesize
\captionof{table}{Scalability, non-Gaussian networks: W1 (lower is better), full evidence, against network size.}\label{tab:app_scalability_continuous_nongauss_w1_per_node_full}\par\smallskip
\adjustbox{max width=\textwidth}{\begin{tabular}{lrrrr}
\toprule
Method & $n=10$ & $n=100$ & $n=1000$ & $n=10000$ \\
\midrule
\texttt{pgmpy-lg-predict} & 0.115$\pm$0.061 & 0.142$\pm$0.047 & \textbf{0.083$\pm$0.0088} & 0.101$\pm$0.044 \\
\texttt{pyro-empirical-importance} & 0.232$\pm$0.059 & 0.193$\pm$0.017 & 0.175$\pm$0.024 & 0.209$\pm$0.036 (1/5) \\
\texttt{nbn-flexcode-lw} & 0.0714$\pm$0.026 & 0.0956$\pm$0.044 & timeout & timeout \\
\texttt{nbn-flow-ais} & 0.0517$\pm$0.0082 & 0.0853$\pm$0.012 (3/5) & timeout & timeout \\
\texttt{nbn-flow-avi} & 0.0578$\pm$0.009 & 0.0729$\pm$0.0081 (3/5) & timeout & timeout \\
\texttt{nbn-flow-lw} & 0.0596$\pm$0.011 & 0.0641$\pm$0.0042 (3/5) & timeout & timeout \\
\texttt{nbn-kde-lw} & 0.0516$\pm$0.0076 (1/5) & oom & oom & oom \\
\texttt{nbn-knn-lw} & 0.0755$\pm$0.019 & \textbf{0.0779$\pm$0.009} & oom & oom \\
\texttt{nbn-lg-ais} & 0.123$\pm$0.054 & 0.138$\pm$0.043 & 0.0864$\pm$0.018 & \textbf{0.0944$\pm$0.058} \\
\texttt{nbn-lg-avi} & 0.129$\pm$0.065 & 0.146$\pm$0.044 & 0.0923$\pm$0.0097 & 0.106$\pm$0.053 \\
\texttt{nbn-lg-lw} & 0.125$\pm$0.064 & 0.137$\pm$0.045 & 0.0917$\pm$0.0099 & 0.0992$\pm$0.041 \\
\texttt{nbn-mdn-ais} & \textbf{0.049$\pm$0.0048} & 0.0446$\pm$0.014 (1/5) & timeout & timeout \\
\texttt{nbn-mdn-avi} & 0.0575$\pm$0.013 & 0.0487$\pm$0.0053 (1/5) & timeout & timeout \\
\texttt{nbn-mdn-lw} & 0.0504$\pm$0.012 & 0.0414$\pm$0.0046 (1/5) & timeout & timeout \\
\bottomrule
\end{tabular}}
\end{minipage}\par\bigskip

\noindent\begin{minipage}{\textwidth}\centering\footnotesize
\captionof{table}{Scalability, non-Gaussian networks: time per query in seconds, both evidence modes, against network size.}\label{tab:app_scalability_continuous_nongauss_query_time}\par\smallskip
\adjustbox{max width=\textwidth}{\begin{tabular}{lrrrr}
\toprule
Method & $n=10$ & $n=100$ & $n=1000$ & $n=10000$ \\
\midrule
\texttt{pgmpy-lg-predict} & \textbf{0.00076$\pm$4.5e-05} & \textbf{0.00659$\pm$0.0018} & \textbf{0.0954$\pm$0.01} & 18.6$\pm$0.13 \\
\texttt{pyro-empirical-importance} & 0.258$\pm$0.0039 & 2.97$\pm$0.031 & 33.1$\pm$0.39 & 334$\pm$5.7 (1/5) \\
\texttt{nbn-flexcode-lw} & 0.0158$\pm$0.0023 & 0.154$\pm$0.0053 & timeout & timeout \\
\texttt{nbn-flow-ais} & 0.106$\pm$0.00086 & 0.835$\pm$0.0039 (3/5) & timeout & timeout \\
\texttt{nbn-flow-avi} & 0.0806$\pm$0.0033 & 0.895$\pm$0.0087 (3/5) & timeout & timeout \\
\texttt{nbn-flow-lw} & 0.0818$\pm$0.0044 & 0.87$\pm$0.0025 (3/5) & timeout & timeout \\
\texttt{nbn-kde-lw} & 0.107$\pm$0.014 (1/5) & oom & oom & oom \\
\texttt{nbn-knn-lw} & 0.0584$\pm$0.011 & 1.03$\pm$0.091 & oom & oom \\
\texttt{nbn-lg-ais} & 0.0232$\pm$0.00029 & 0.292$\pm$0.064 & 2.66$\pm$1.8 & 2.69$\pm$0.097 \\
\texttt{nbn-lg-avi} & 0.00319$\pm$0.00029 & 0.0257$\pm$0.00028 & 0.26$\pm$0.0053 & \textbf{2.67$\pm$0.033} \\
\texttt{nbn-lg-lw} & 0.0126$\pm$0.0018 & 0.0387$\pm$0.0016 & 0.277$\pm$0.0039 & 2.87$\pm$0.11 \\
\texttt{nbn-mdn-ais} & 0.045$\pm$0.0018 & 0.602$\pm$0.057 (1/5) & timeout & timeout \\
\texttt{nbn-mdn-avi} & 0.0117$\pm$0.00058 & 0.135$\pm$0.013 (1/5) & timeout & timeout \\
\texttt{nbn-mdn-lw} & 0.0198$\pm$0.0012 & 0.144$\pm$0.0046 (1/5) & timeout & timeout \\
\bottomrule
\end{tabular}}
\end{minipage}\par\bigskip

\noindent\begin{minipage}{\textwidth}\centering\footnotesize
\captionof{table}{Scalability, non-Gaussian networks: time per query in seconds, full evidence, against network size.}\label{tab:app_scalability_continuous_nongauss_query_time_full}\par\smallskip
\adjustbox{max width=\textwidth}{\begin{tabular}{lrrrr}
\toprule
Method & $n=10$ & $n=100$ & $n=1000$ & $n=10000$ \\
\midrule
\texttt{pgmpy-lg-predict} & \textbf{0.000981$\pm$8.2e-05} & \textbf{0.00693$\pm$0.0019} & \textbf{0.0935$\pm$0.0078} & 18.5$\pm$0.2 \\
\texttt{pyro-empirical-importance} & 0.248$\pm$0.0022 & 2.93$\pm$0.063 & 32.8$\pm$0.026 & 334$\pm$5.7 (1/5) \\
\texttt{nbn-flexcode-lw} & 0.022$\pm$0.0039 & 0.164$\pm$0.0052 & timeout & timeout \\
\texttt{nbn-flow-ais} & 0.107$\pm$0.00053 & 0.831$\pm$0.0018 (3/5) & timeout & timeout \\
\texttt{nbn-flow-avi} & 0.083$\pm$0.0041 & 0.901$\pm$0.011 (3/5) & timeout & timeout \\
\texttt{nbn-flow-lw} & 0.0847$\pm$0.0047 & 0.873$\pm$0.0048 (3/5) & timeout & timeout \\
\texttt{nbn-kde-lw} & 0.127$\pm$0.014 (1/5) & oom & oom & oom \\
\texttt{nbn-knn-lw} & 0.0707$\pm$0.0091 & 1.02$\pm$0.089 & oom & oom \\
\texttt{nbn-lg-ais} & 0.0236$\pm$0.00015 & 0.291$\pm$0.064 & 2.67$\pm$1.8 & \textbf{2.68$\pm$0.024} \\
\texttt{nbn-lg-avi} & 0.00389$\pm$0.00034 & 0.027$\pm$0.00029 & 0.264$\pm$0.011 & 2.7$\pm$0.059 \\
\texttt{nbn-lg-lw} & 0.0218$\pm$0.0031 & 0.0521$\pm$0.0023 & 0.287$\pm$0.0095 & 2.94$\pm$0.053 \\
\texttt{nbn-mdn-ais} & 0.0437$\pm$0.0018 & 0.613$\pm$0.066 (1/5) & timeout & timeout \\
\texttt{nbn-mdn-avi} & 0.0126$\pm$0.00052 & 0.137$\pm$0.014 (1/5) & timeout & timeout \\
\texttt{nbn-mdn-lw} & 0.0269$\pm$0.0014 & 0.154$\pm$0.0071 (1/5) & timeout & timeout \\
\bottomrule
\end{tabular}}
\end{minipage}\par\bigskip

\clearpage

\subsection{Batched queries}\label{app:speed}

\noindent\begin{minipage}{\textwidth}\centering\footnotesize
\includegraphics[width=\textwidth]{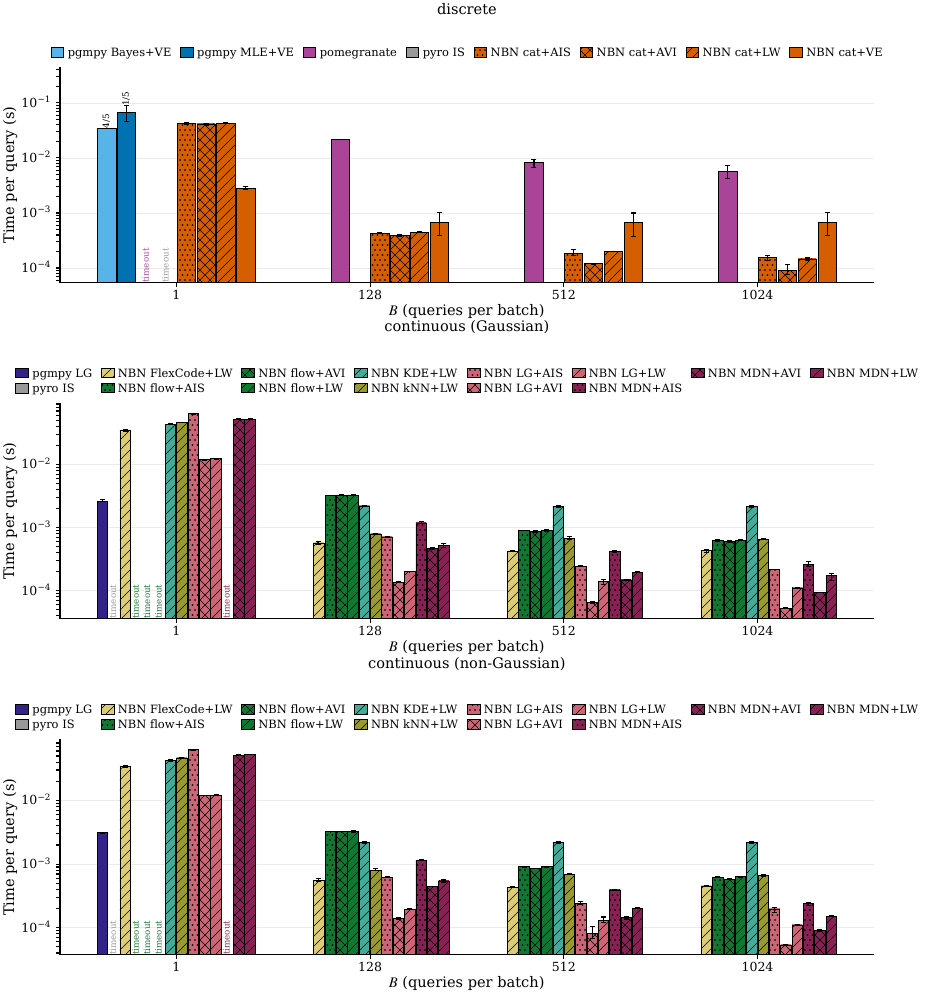}
\captionof{figure}{Batched queries, all methods: time per query (s), full evidence against query batch size $B$.}\label{fig:app_speed_query_time_full}
\end{minipage}\par\bigskip

\noindent\begin{minipage}{\textwidth}\centering\footnotesize
\includegraphics[width=\textwidth]{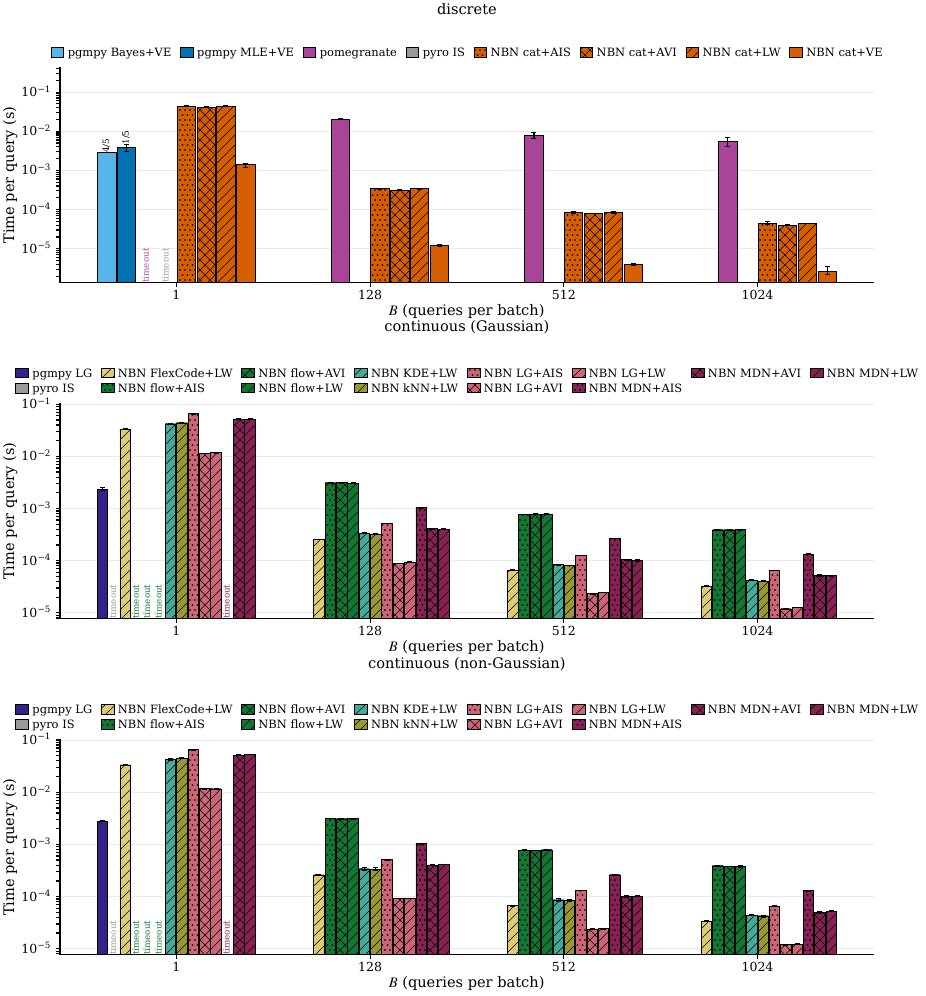}
\captionof{figure}{Batched queries, all methods: time per query (s), empty evidence against query batch size $B$.}\label{fig:app_speed_query_time_empty}
\end{minipage}\par\bigskip

\noindent\begin{minipage}{\textwidth}\centering\footnotesize
\captionof{table}{Batched queries, discrete networks: time per query in seconds, full evidence, against query batch size $B$.}\label{tab:app_speed_discrete_query_time_full}\par\smallskip
\adjustbox{max width=\textwidth}{\begin{tabular}{lrrrr}
\toprule
Method & $B=1$ & $B=128$ & $B=512$ & $B=1024$ \\
\midrule
\texttt{pgmpy-bayes-ve} & 0.0346$\pm$0 (4/5) & -- & -- & -- \\
\texttt{pgmpy-mle-ve} & 0.0671$\pm$0.025 (1/5) & -- & -- & -- \\
\texttt{pomegranate-discrete-ve} & timeout & 0.0216$\pm$0.00012 & 0.00829$\pm$0.0014 & 0.0058$\pm$0.0016 \\
\texttt{pyro-empirical-importance} & timeout & -- & -- & -- \\
\texttt{nbn-cat-ais} & 0.0425$\pm$0.0019 & 0.000431$\pm$4.3e-06 & 0.000187$\pm$2.4e-05 & 0.000154$\pm$1.6e-05 \\
\texttt{nbn-cat-avi} & 0.0417$\pm$0.0017 & \textbf{0.000394$\pm$1.2e-05} & \textbf{0.00012$\pm$4.5e-06} & \textbf{9.14e-05$\pm$1.9e-05} \\
\texttt{nbn-cat-lw} & 0.0425$\pm$0.0007 & 0.000446$\pm$9.4e-06 & 0.0002$\pm$1.2e-06 & 0.000146$\pm$6.4e-06 \\
\texttt{nbn-cat-ve} & \textbf{0.00278$\pm$0.00016} & 0.000688$\pm$0.00031 & 0.000681$\pm$0.00031 & 0.000679$\pm$0.00031 \\
\bottomrule
\end{tabular}}
\end{minipage}\par\bigskip

\noindent\begin{minipage}{\textwidth}\centering\footnotesize
\captionof{table}{Batched queries, discrete networks: time per query in seconds, empty evidence, against query batch size $B$.}\label{tab:app_speed_discrete_query_time_empty}\par\smallskip
\adjustbox{max width=\textwidth}{\begin{tabular}{lrrrr}
\toprule
Method & $B=1$ & $B=128$ & $B=512$ & $B=1024$ \\
\midrule
\texttt{pgmpy-bayes-ve} & 0.00281$\pm$0 (4/5) & -- & -- & -- \\
\texttt{pgmpy-mle-ve} & 0.00381$\pm$0.0012 (1/5) & -- & -- & -- \\
\texttt{pomegranate-discrete-ve} & timeout & 0.0206$\pm$0.00026 & 0.00802$\pm$0.0015 & 0.00535$\pm$0.0014 \\
\texttt{pyro-empirical-importance} & timeout & -- & -- & -- \\
\texttt{nbn-cat-ais} & 0.0437$\pm$0.0012 & 0.000341$\pm$9.6e-06 & 8.5e-05$\pm$4.7e-06 & 4.49e-05$\pm$5e-06 \\
\texttt{nbn-cat-avi} & 0.0413$\pm$0.0007 & 0.000316$\pm$9e-06 & 7.87e-05$\pm$1.8e-06 & 4.02e-05$\pm$7.7e-07 \\
\texttt{nbn-cat-lw} & 0.0436$\pm$0.0011 & 0.000343$\pm$9.5e-06 & 8.5e-05$\pm$4.6e-06 & 4.42e-05$\pm$6.7e-07 \\
\texttt{nbn-cat-ve} & \textbf{0.00138$\pm$0.00013} & \textbf{1.25e-05$\pm$5.3e-07} & \textbf{3.94e-06$\pm$1.8e-07} & \textbf{2.66e-06$\pm$6e-07} \\
\bottomrule
\end{tabular}}
\end{minipage}\par\bigskip

\noindent\begin{minipage}{\textwidth}\centering\footnotesize
\captionof{table}{Batched queries, linear-Gaussian networks: time per query in seconds, full evidence, against query batch size $B$.}\label{tab:app_speed_continuous_lg_query_time_full}\par\smallskip
\adjustbox{max width=\textwidth}{\begin{tabular}{lrrrr}
\toprule
Method & $B=1$ & $B=128$ & $B=512$ & $B=1024$ \\
\midrule
\texttt{pgmpy-lg-predict} & \textbf{0.00262$\pm$0.00011} & -- & -- & -- \\
\texttt{pyro-empirical-importance} & timeout & -- & -- & -- \\
\texttt{nbn-flexcode-lw} & 0.0341$\pm$0.00083 & 0.000569$\pm$3.4e-05 & 0.000425$\pm$8.9e-06 & 0.000428$\pm$2.1e-05 \\
\texttt{nbn-flow-ais} & timeout & 0.00324$\pm$2.5e-05 & 0.000894$\pm$9.4e-06 & 0.000625$\pm$1.5e-05 \\
\texttt{nbn-flow-avi} & timeout & 0.00325$\pm$5.4e-05 & 0.000867$\pm$3e-05 & 0.000593$\pm$2e-05 \\
\texttt{nbn-flow-lw} & timeout & 0.00326$\pm$6.9e-05 & 0.000896$\pm$2.2e-05 & 0.000627$\pm$1e-05 \\
\texttt{nbn-kde-lw} & 0.0428$\pm$0.00068 & 0.0022$\pm$4.9e-05 & 0.00217$\pm$6.7e-05 & 0.00217$\pm$5.4e-05 \\
\texttt{nbn-knn-lw} & 0.0459$\pm$0.00036 & 0.000792$\pm$2.1e-05 & 0.000682$\pm$2.6e-05 & 0.000649$\pm$1.7e-05 \\
\texttt{nbn-lg-ais} & 0.0631$\pm$0.00075 & 0.00071$\pm$6.8e-06 & 0.000244$\pm$3.4e-06 & 0.000217$\pm$2.6e-06 \\
\texttt{nbn-lg-avi} & 0.0119$\pm$0.00025 & \textbf{0.000136$\pm$2.1e-06} & \textbf{6.51e-05$\pm$1.8e-06} & \textbf{5.25e-05$\pm$1.2e-06} \\
\texttt{nbn-lg-lw} & 0.0123$\pm$0.0003 & 0.000199$\pm$7.8e-07 & 0.000142$\pm$1.5e-05 & 0.000111$\pm$4.6e-07 \\
\texttt{nbn-mdn-ais} & timeout & 0.00118$\pm$3.7e-05 & 0.000416$\pm$1.6e-05 & 0.000258$\pm$2.3e-05 \\
\texttt{nbn-mdn-avi} & 0.052$\pm$0.0015 & 0.000467$\pm$1.6e-05 & 0.000148$\pm$1.7e-06 & 9.37e-05$\pm$1.5e-06 \\
\texttt{nbn-mdn-lw} & 0.052$\pm$0.0015 & 0.000515$\pm$3.6e-05 & 0.000197$\pm$4.5e-06 & 0.000172$\pm$2.1e-05 \\
\bottomrule
\end{tabular}}
\end{minipage}\par\bigskip

\noindent\begin{minipage}{\textwidth}\centering\footnotesize
\captionof{table}{Batched queries, linear-Gaussian networks: time per query in seconds, empty evidence, against query batch size $B$.}\label{tab:app_speed_continuous_lg_query_time_empty}\par\smallskip
\adjustbox{max width=\textwidth}{\begin{tabular}{lrrrr}
\toprule
Method & $B=1$ & $B=128$ & $B=512$ & $B=1024$ \\
\midrule
\texttt{pgmpy-lg-predict} & \textbf{0.00236$\pm$0.00013} & -- & -- & -- \\
\texttt{pyro-empirical-importance} & timeout & -- & -- & -- \\
\texttt{nbn-flexcode-lw} & 0.033$\pm$0.00071 & 0.000257$\pm$1.5e-06 & 6.6e-05$\pm$5.6e-07 & 3.27e-05$\pm$4.3e-07 \\
\texttt{nbn-flow-ais} & timeout & 0.00309$\pm$3.8e-05 & 0.000773$\pm$1.3e-05 & 0.000388$\pm$6.6e-06 \\
\texttt{nbn-flow-avi} & timeout & 0.00313$\pm$4.1e-05 & 0.000784$\pm$1.6e-05 & 0.00039$\pm$3.6e-06 \\
\texttt{nbn-flow-lw} & timeout & 0.00308$\pm$6.7e-05 & 0.000773$\pm$2e-05 & 0.000393$\pm$1.2e-05 \\
\texttt{nbn-kde-lw} & 0.0419$\pm$0.00065 & 0.000336$\pm$9.2e-06 & 8.42e-05$\pm$1.7e-06 & 4.21e-05$\pm$1.1e-06 \\
\texttt{nbn-knn-lw} & 0.0439$\pm$0.00045 & 0.000324$\pm$4.1e-06 & 8.01e-05$\pm$5.9e-07 & 4.03e-05$\pm$1.1e-06 \\
\texttt{nbn-lg-ais} & 0.0654$\pm$0.0011 & 0.000515$\pm$6.7e-06 & 0.000126$\pm$8.5e-07 & 6.42e-05$\pm$6.5e-07 \\
\texttt{nbn-lg-avi} & 0.0114$\pm$0.00021 & \textbf{8.98e-05$\pm$5.8e-07} & \textbf{2.33e-05$\pm$5.4e-07} & \textbf{1.19e-05$\pm$4.7e-08} \\
\texttt{nbn-lg-lw} & 0.0118$\pm$0.00029 & 9.4e-05$\pm$1.1e-06 & 2.46e-05$\pm$2.8e-07 & 1.26e-05$\pm$1.5e-07 \\
\texttt{nbn-mdn-ais} & timeout & 0.00105$\pm$2.3e-05 & 0.000266$\pm$3.7e-06 & 0.000133$\pm$2.3e-06 \\
\texttt{nbn-mdn-avi} & 0.052$\pm$0.0014 & 0.000414$\pm$1.2e-05 & 0.000104$\pm$1.2e-06 & 5.25e-05$\pm$1.6e-06 \\
\texttt{nbn-mdn-lw} & 0.0517$\pm$0.0019 & 0.000402$\pm$1.3e-05 & 0.000101$\pm$3.3e-06 & 5.12e-05$\pm$1.6e-06 \\
\bottomrule
\end{tabular}}
\end{minipage}\par\bigskip

\noindent\begin{minipage}{\textwidth}\centering\footnotesize
\captionof{table}{Batched queries, non-Gaussian networks: time per query in seconds, full evidence, against query batch size $B$.}\label{tab:app_speed_continuous_nongauss_query_time_full}\par\smallskip
\adjustbox{max width=\textwidth}{\begin{tabular}{lrrrr}
\toprule
Method & $B=1$ & $B=128$ & $B=512$ & $B=1024$ \\
\midrule
\texttt{pgmpy-lg-predict} & \textbf{0.0031$\pm$6.8e-05} & -- & -- & -- \\
\texttt{pyro-empirical-importance} & timeout & -- & -- & -- \\
\texttt{nbn-flexcode-lw} & 0.0344$\pm$0.00079 & 0.000556$\pm$3.4e-05 & 0.000432$\pm$9.1e-06 & 0.000452$\pm$9.4e-06 \\
\texttt{nbn-flow-ais} & timeout & 0.00324$\pm$6.6e-06 & 0.000911$\pm$2.3e-05 & 0.00062$\pm$9e-06 \\
\texttt{nbn-flow-avi} & timeout & 0.00324$\pm$4.9e-06 & 0.000846$\pm$1.4e-05 & 0.000579$\pm$9.9e-06 \\
\texttt{nbn-flow-lw} & timeout & 0.00328$\pm$9e-05 & 0.000908$\pm$1.3e-05 & 0.000629$\pm$8.9e-06 \\
\texttt{nbn-kde-lw} & 0.0426$\pm$0.0015 & 0.00222$\pm$5.9e-05 & 0.00219$\pm$6.6e-05 & 0.0022$\pm$7.7e-05 \\
\texttt{nbn-knn-lw} & 0.0466$\pm$0.0011 & 0.0008$\pm$3e-05 & 0.000684$\pm$1.3e-05 & 0.000653$\pm$1.9e-05 \\
\texttt{nbn-lg-ais} & 0.0625$\pm$0.00057 & 0.000625$\pm$1.1e-05 & 0.00024$\pm$1.6e-05 & 0.000197$\pm$1.7e-05 \\
\texttt{nbn-lg-avi} & 0.012$\pm$0.00012 & \textbf{0.000141$\pm$3.7e-06} & \textbf{7.99e-05$\pm$1.9e-05} & \textbf{5.34e-05$\pm$2e-07} \\
\texttt{nbn-lg-lw} & 0.0121$\pm$0.00026 & 0.000196$\pm$1.2e-06 & 0.000129$\pm$1.4e-05 & 0.00011$\pm$1.6e-06 \\
\texttt{nbn-mdn-ais} & timeout & 0.00115$\pm$1.3e-05 & 0.000392$\pm$9.5e-06 & 0.000239$\pm$1.1e-05 \\
\texttt{nbn-mdn-avi} & 0.0515$\pm$0.00044 & 0.000444$\pm$5.3e-06 & 0.000144$\pm$5.8e-06 & 9.06e-05$\pm$1.9e-06 \\
\texttt{nbn-mdn-lw} & 0.0523$\pm$0.00056 & 0.000541$\pm$2.6e-05 & 0.000203$\pm$1.3e-06 & 0.000151$\pm$2.3e-06 \\
\bottomrule
\end{tabular}}
\end{minipage}\par\bigskip

\noindent\begin{minipage}{\textwidth}\centering\footnotesize
\captionof{table}{Batched queries, non-Gaussian networks: time per query in seconds, empty evidence, against query batch size $B$.}\label{tab:app_speed_continuous_nongauss_query_time_empty}\par\smallskip
\adjustbox{max width=\textwidth}{\begin{tabular}{lrrrr}
\toprule
Method & $B=1$ & $B=128$ & $B=512$ & $B=1024$ \\
\midrule
\texttt{pgmpy-lg-predict} & \textbf{0.00278$\pm$4.3e-05} & -- & -- & -- \\
\texttt{pyro-empirical-importance} & timeout & -- & -- & -- \\
\texttt{nbn-flexcode-lw} & 0.0332$\pm$0.00053 & 0.000261$\pm$4.1e-06 & 6.67e-05$\pm$2.2e-06 & 3.38e-05$\pm$6.7e-07 \\
\texttt{nbn-flow-ais} & timeout & 0.0031$\pm$9.2e-06 & 0.000775$\pm$9.2e-06 & 0.000387$\pm$4.8e-07 \\
\texttt{nbn-flow-avi} & timeout & 0.00307$\pm$8.2e-06 & 0.000775$\pm$4e-06 & 0.000383$\pm$1e-06 \\
\texttt{nbn-flow-lw} & timeout & 0.00311$\pm$8.4e-05 & 0.000785$\pm$1.7e-05 & 0.000385$\pm$1.5e-05 \\
\texttt{nbn-kde-lw} & 0.0421$\pm$0.0015 & 0.000338$\pm$2.4e-05 & 8.64e-05$\pm$5.3e-06 & 4.41e-05$\pm$1.2e-06 \\
\texttt{nbn-knn-lw} & 0.0446$\pm$0.00078 & 0.000336$\pm$2.2e-05 & 8.36e-05$\pm$4.3e-06 & 4.26e-05$\pm$1.9e-06 \\
\texttt{nbn-lg-ais} & 0.0647$\pm$0.00064 & 0.000512$\pm$1e-05 & 0.00013$\pm$5e-07 & 6.52e-05$\pm$1.1e-06 \\
\texttt{nbn-lg-avi} & 0.0116$\pm$0.00025 & 9.3e-05$\pm$1.1e-06 & \textbf{2.38e-05$\pm$6.9e-07} & \textbf{1.2e-05$\pm$1.8e-07} \\
\texttt{nbn-lg-lw} & 0.0116$\pm$0.0002 & \textbf{9.22e-05$\pm$1.7e-06} & 2.41e-05$\pm$5e-07 & 1.23e-05$\pm$2.8e-07 \\
\texttt{nbn-mdn-ais} & timeout & 0.00104$\pm$2.1e-05 & 0.00026$\pm$9.3e-07 & 0.00013$\pm$2.7e-06 \\
\texttt{nbn-mdn-avi} & 0.0515$\pm$0.00046 & 0.000397$\pm$5.5e-06 & 0.000102$\pm$3.6e-06 & 5.05e-05$\pm$2e-06 \\
\texttt{nbn-mdn-lw} & 0.0521$\pm$0.00017 & 0.000409$\pm$4.6e-06 & 0.000103$\pm$2.6e-06 & 5.22e-05$\pm$5.6e-07 \\
\bottomrule
\end{tabular}}
\end{minipage}\par\bigskip

\clearpage

\subsection{Real networks}\label{app:bnlearn}

\noindent\begin{minipage}{\textwidth}\centering\footnotesize
\includegraphics[width=\textwidth]{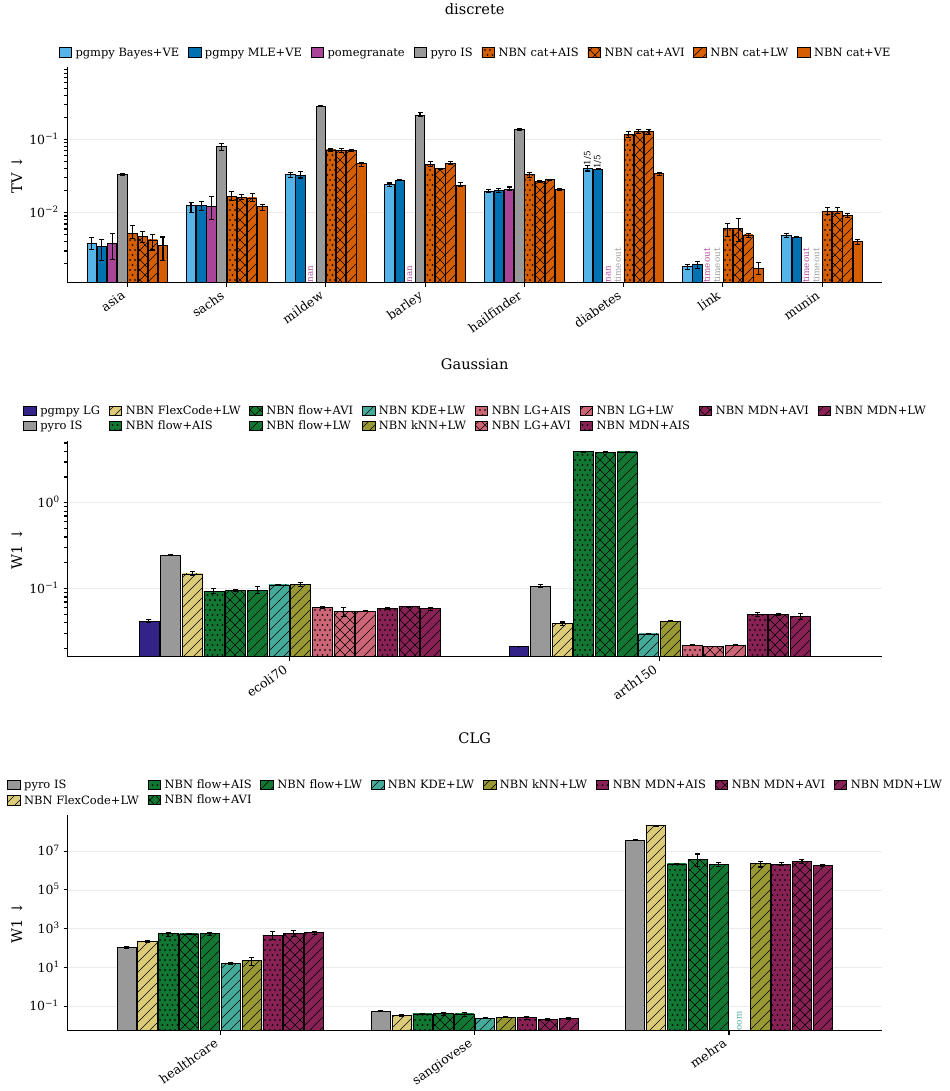}
\captionof{figure}{Real networks, all methods: accuracy (TV on discrete targets, W1 on continuous ones) against network.}\label{fig:app_bnlearn_accuracy}
\end{minipage}\par\bigskip

\noindent\begin{minipage}{\textwidth}\centering\footnotesize
\includegraphics[width=\textwidth]{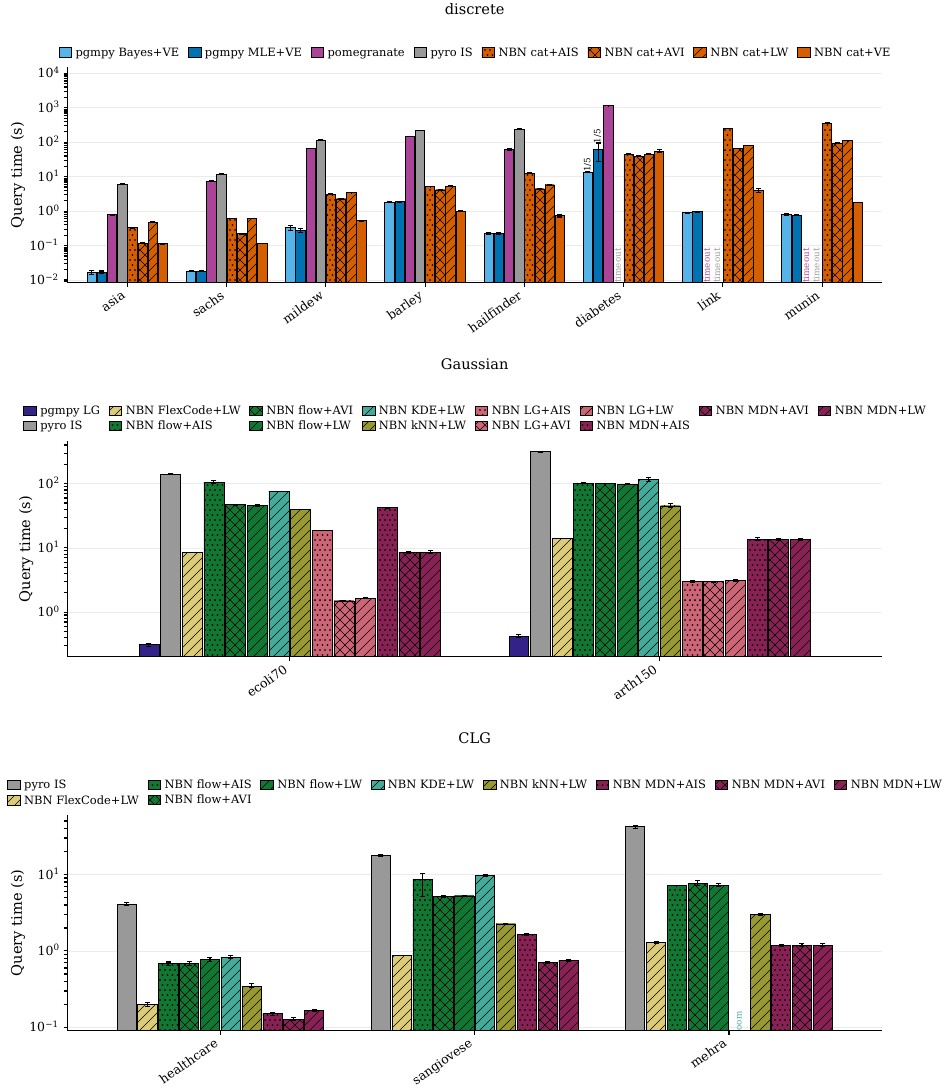}
\captionof{figure}{Real networks, all methods: total query time (s) against network.}\label{fig:app_bnlearn_total_query_time}
\end{minipage}\par\bigskip

\noindent\begin{minipage}{\textwidth}\centering\footnotesize
\captionof{table}{Real networks, discrete networks: TV (lower is better), against network.}\label{tab:app_bnlearn_discrete_tv_per_node}\par\smallskip
\adjustbox{max width=\textwidth}{\begin{tabular}{lrrrrrrrr}
\toprule
Method & asia & sachs & mildew & barley & hailfinder & diabetes & link & munin \\
\midrule
\texttt{pgmpy-bayes-ve} & 0.00383$\pm$0.00074 & 0.0124$\pm$0.0018 & 0.033$\pm$0.0024 & 0.0241$\pm$0.0013 & \textbf{0.0198$\pm$0.00092} & 0.0407$\pm$0.0049 (1/5) & 0.00183$\pm$0.00014 & 0.00488$\pm$0.00033 \\
\texttt{pgmpy-mle-ve} & \textbf{0.00343$\pm$0.0011} & 0.0128$\pm$0.0016 & \textbf{0.0325$\pm$0.0032} & 0.0275$\pm$0.00051 & 0.0203$\pm$0.0015 & 0.0393$\pm$0.0011 (1/5) & 0.00197$\pm$0.00024 & 0.00464$\pm$9.7e-05 \\
\texttt{pomegranate-discrete-ve} & 0.0038$\pm$0.0014 & 0.0122$\pm$0.0044 & nan & nan & 0.0208$\pm$0.0012 & nan & timeout & timeout \\
\texttt{pyro-empirical-importance} & 0.0336$\pm$0.00069 & 0.0818$\pm$0.0091 & 0.287$\pm$0.0041 & 0.215$\pm$0.014 & 0.136$\pm$0.0055 & timeout & timeout & timeout \\
\texttt{nbn-cat-ais} & 0.00524$\pm$0.0011 & 0.0168$\pm$0.0024 & 0.072$\pm$0.0027 & 0.0459$\pm$0.0036 & 0.0333$\pm$0.0025 & 0.118$\pm$0.011 & 0.00604$\pm$0.0011 & 0.0105$\pm$0.0012 \\
\texttt{nbn-cat-avi} & 0.00478$\pm$0.00076 & 0.0162$\pm$0.0015 & 0.0703$\pm$0.0048 & 0.0401$\pm$0.00064 & 0.0266$\pm$0.00054 & 0.128$\pm$0.008 & 0.00612$\pm$0.0021 & 0.0105$\pm$0.00093 \\
\texttt{nbn-cat-lw} & 0.00424$\pm$0.00096 & 0.0159$\pm$0.0022 & 0.0709$\pm$0.0016 & 0.0474$\pm$0.0019 & 0.0281$\pm$0.00052 & 0.129$\pm$0.0099 & 0.00496$\pm$0.00032 & 0.00929$\pm$0.00056 \\
\texttt{nbn-cat-ve} & 0.00354$\pm$0.0012 & \textbf{0.012$\pm$0.0011} & 0.0465$\pm$0.0027 & \textbf{0.024$\pm$0.0013} & 0.0206$\pm$0.00087 & \textbf{0.0343$\pm$0.0016} & \textbf{0.00174$\pm$0.00033} & \textbf{0.00401$\pm$0.0003} \\
\bottomrule
\end{tabular}}
\end{minipage}\par\bigskip

\noindent\begin{minipage}{\textwidth}\centering\footnotesize
\captionof{table}{Real networks, discrete networks: total query time in seconds, against network.}\label{tab:app_bnlearn_discrete_total_query_time}\par\smallskip
\adjustbox{max width=\textwidth}{\begin{tabular}{lrrrrrrrr}
\toprule
Method & asia & sachs & mildew & barley & hailfinder & diabetes & link & munin \\
\midrule
\texttt{pgmpy-bayes-ve} & \textbf{0.0165$\pm$0.0018} & \textbf{0.0183$\pm$8.6e-05} & 0.331$\pm$0.05 & 1.82$\pm$0.027 & \textbf{0.229$\pm$0.015} & 13.6$\pm$1.3 (1/5) & \textbf{0.9$\pm$0.05} & 0.804$\pm$0.056 \\
\texttt{pgmpy-mle-ve} & 0.0171$\pm$0.0014 & \textbf{0.0183$\pm$0.00051} & \textbf{0.269$\pm$0.041} & 1.88$\pm$0.1 & 0.232$\pm$0.016 & 60.8$\pm$1.1e+02 (1/5) & 0.965$\pm$0.04 & \textbf{0.763$\pm$0.02} \\
\texttt{pomegranate-discrete-ve} & 0.785$\pm$0.022 & 7.47$\pm$0.036 & 67.5$\pm$0.32 & 146$\pm$0.85 & 60.9$\pm$2.3 & 1.14e+03$\pm$8.4 & timeout & timeout \\
\texttt{pyro-empirical-importance} & 6.15$\pm$0.18 & 11.6$\pm$0.41 & 117$\pm$0.47 & 217$\pm$4.8 & 240$\pm$5 & timeout & timeout & timeout \\
\texttt{nbn-cat-ais} & 0.328$\pm$0.012 & 0.602$\pm$0.011 & 3.12$\pm$0.03 & 5.13$\pm$0.033 & 12.5$\pm$0.58 & 45.2$\pm$0.68 & 255$\pm$0.64 & 360$\pm$1.6 \\
\texttt{nbn-cat-avi} & 0.116$\pm$0.0033 & 0.219$\pm$0.0025 & 2.25$\pm$0.015 & 4.13$\pm$0.02 & 4.39$\pm$0.065 & \textbf{38.7$\pm$1.7} & 65.6$\pm$1.1 & 94.7$\pm$0.55 \\
\texttt{nbn-cat-lw} & 0.482$\pm$0.014 & 0.614$\pm$0.0071 & 3.39$\pm$0.02 & 5.35$\pm$0.015 & 5.74$\pm$0.1 & 44.8$\pm$1.1 & 79.8$\pm$1.1 & 111$\pm$0.29 \\
\texttt{nbn-cat-ve} & 0.113$\pm$0.0021 & 0.115$\pm$0.0013 & 0.531$\pm$0.023 & \textbf{1.01$\pm$0.02} & 0.733$\pm$0.058 & 55.9$\pm$5.2 & 4.09$\pm$0.6 & 1.8$\pm$0.05 \\
\bottomrule
\end{tabular}}
\end{minipage}\par\bigskip

\noindent\begin{minipage}{\textwidth}\centering\footnotesize
\captionof{table}{Real networks, Gaussian networks: W1 (lower is better), against network.}\label{tab:app_bnlearn_continuous_gauss_w1_per_node}\par\smallskip
\adjustbox{max width=\textwidth}{\begin{tabular}{lrr}
\toprule
Method & ecoli70 & arth150 \\
\midrule
\texttt{pgmpy-lg-predict} & \textbf{0.0418$\pm$0.002} & \textbf{0.021$\pm$7.5e-05} \\
\texttt{pyro-empirical-importance} & 0.245$\pm$0.0029 & 0.107$\pm$0.0031 \\
\texttt{nbn-flexcode-lw} & 0.148$\pm$0.0077 & 0.0389$\pm$0.0019 \\
\texttt{nbn-flow-ais} & 0.0923$\pm$0.0073 & 3.94$\pm$0.062 \\
\texttt{nbn-flow-avi} & 0.0942$\pm$0.0025 & 3.91$\pm$0.073 \\
\texttt{nbn-flow-lw} & 0.0942$\pm$0.0087 & 3.92$\pm$0.075 \\
\texttt{nbn-kde-lw} & 0.11$\pm$0.0018 & 0.0297$\pm$0.0002 \\
\texttt{nbn-knn-lw} & 0.111$\pm$0.0052 & 0.0418$\pm$0.00083 \\
\texttt{nbn-lg-ais} & 0.0603$\pm$0.0012 & 0.0218$\pm$0.00034 \\
\texttt{nbn-lg-avi} & 0.0541$\pm$0.0062 & 0.0212$\pm$1.1e-05 \\
\texttt{nbn-lg-lw} & 0.0547$\pm$0.0012 & 0.0219$\pm$0.0003 \\
\texttt{nbn-mdn-ais} & 0.059$\pm$0.002 & 0.0498$\pm$0.0021 \\
\texttt{nbn-mdn-avi} & 0.0615$\pm$0.00074 & 0.0503$\pm$0.0011 \\
\texttt{nbn-mdn-lw} & 0.0586$\pm$0.0029 & 0.0471$\pm$0.004 \\
\bottomrule
\end{tabular}}
\end{minipage}\par\bigskip

\noindent\begin{minipage}{\textwidth}\centering\footnotesize
\captionof{table}{Real networks, Gaussian networks: total query time in seconds, against network.}\label{tab:app_bnlearn_continuous_gauss_total_query_time}\par\smallskip
\adjustbox{max width=\textwidth}{\begin{tabular}{lrr}
\toprule
Method & ecoli70 & arth150 \\
\midrule
\texttt{pgmpy-lg-predict} & \textbf{0.314$\pm$0.02} & \textbf{0.425$\pm$0.021} \\
\texttt{pyro-empirical-importance} & 141$\pm$0.82 & 317$\pm$4.6 \\
\texttt{nbn-flexcode-lw} & 8.58$\pm$0.047 & 13.9$\pm$0.11 \\
\texttt{nbn-flow-ais} & 106$\pm$4.2 & 102$\pm$1.9 \\
\texttt{nbn-flow-avi} & 47$\pm$0.58 & 101$\pm$0.25 \\
\texttt{nbn-flow-lw} & 45.9$\pm$1.1 & 99$\pm$0.95 \\
\texttt{nbn-kde-lw} & 76$\pm$0.66 & 115$\pm$8.3 \\
\texttt{nbn-knn-lw} & 39.7$\pm$0.22 & 45.2$\pm$3.3 \\
\texttt{nbn-lg-ais} & 18.6$\pm$0.26 & 2.98$\pm$0.14 \\
\texttt{nbn-lg-avi} & 1.49$\pm$0.045 & 2.97$\pm$0.05 \\
\texttt{nbn-lg-lw} & 1.64$\pm$0.018 & 3.11$\pm$0.1 \\
\texttt{nbn-mdn-ais} & 42.4$\pm$0.4 & 13.5$\pm$0.85 \\
\texttt{nbn-mdn-avi} & 8.6$\pm$0.23 & 13.5$\pm$0.41 \\
\texttt{nbn-mdn-lw} & 8.55$\pm$0.37 & 13.6$\pm$0.6 \\
\bottomrule
\end{tabular}}
\end{minipage}\par\bigskip

\noindent\begin{minipage}{\textwidth}\centering\footnotesize
\captionof{table}{Real networks, CLG networks: W1 (lower is better), against network.}\label{tab:app_bnlearn_clg_w1_per_node}\par\smallskip
\adjustbox{max width=\textwidth}{\begin{tabular}{lrrr}
\toprule
Method & healthcare & sangiovese & mehra \\
\midrule
\texttt{pyro-empirical-importance} & 106$\pm$15 & 0.0583$\pm$0.0027 & 3.74e+07$\pm$1.9e+06 \\
\texttt{nbn-flexcode-lw} & 221$\pm$21 & 0.0344$\pm$0.0027 & 2.03e+08$\pm$1.9e+07 \\
\texttt{nbn-flow-ais} & 572$\pm$1.3e+02 & 0.0412$\pm$0.004 & 2.21e+06$\pm$1.6e+05 \\
\texttt{nbn-flow-avi} & 543$\pm$39 & 0.042$\pm$0.005 & 3.56e+06$\pm$2.8e+06 \\
\texttt{nbn-flow-lw} & 581$\pm$1.1e+02 & 0.0412$\pm$0.0084 & 1.97e+06$\pm$5.7e+05 \\
\texttt{nbn-kde-lw} & \textbf{16.6$\pm$1} & 0.0252$\pm$0.0017 & oom \\
\texttt{nbn-knn-lw} & 23.4$\pm$11 & 0.028$\pm$0.0019 & 2.41e+06$\pm$7.1e+05 \\
\texttt{nbn-mdn-ais} & 461$\pm$2.3e+02 & 0.0271$\pm$0.0044 & 2.17e+06$\pm$4.1e+05 \\
\texttt{nbn-mdn-avi} & 576$\pm$1.9e+02 & \textbf{0.022$\pm$0.0029} & 2.97e+06$\pm$6.8e+05 \\
\texttt{nbn-mdn-lw} & 625$\pm$90 & 0.0242$\pm$0.0032 & \textbf{1.89e+06$\pm$2e+05} \\
\bottomrule
\end{tabular}}
\end{minipage}\par\bigskip

\noindent\begin{minipage}{\textwidth}\centering\footnotesize
\captionof{table}{Real networks, CLG networks: total query time in seconds, against network.}\label{tab:app_bnlearn_clg_total_query_time}\par\smallskip
\adjustbox{max width=\textwidth}{\begin{tabular}{lrrr}
\toprule
Method & healthcare & sangiovese & mehra \\
\midrule
\texttt{pyro-empirical-importance} & 4.03$\pm$0.19 & 17.8$\pm$0.6 & 42.2$\pm$1.9 \\
\texttt{nbn-flexcode-lw} & 0.199$\pm$0.011 & 0.873$\pm$0.0092 & 1.3$\pm$0.037 \\
\texttt{nbn-flow-ais} & 0.688$\pm$0.032 & 8.52$\pm$2.6 & 7.12$\pm$0.023 \\
\texttt{nbn-flow-avi} & 0.688$\pm$0.046 & 5.16$\pm$0.15 & 7.71$\pm$0.51 \\
\texttt{nbn-flow-lw} & 0.771$\pm$0.048 & 5.24$\pm$0.11 & 7.24$\pm$0.26 \\
\texttt{nbn-kde-lw} & 0.827$\pm$0.037 & 9.78$\pm$0.23 & oom \\
\texttt{nbn-knn-lw} & 0.345$\pm$0.021 & 2.25$\pm$0.052 & 2.98$\pm$0.098 \\
\texttt{nbn-mdn-ais} & 0.152$\pm$0.0077 & 1.67$\pm$0.047 & 1.2$\pm$0.041 \\
\texttt{nbn-mdn-avi} & \textbf{0.126$\pm$0.0056} & \textbf{0.705$\pm$0.02} & \textbf{1.18$\pm$0.055} \\
\texttt{nbn-mdn-lw} & 0.166$\pm$0.0064 & 0.747$\pm$0.023 & 1.19$\pm$0.042 \\
\bottomrule
\end{tabular}}
\end{minipage}\par\bigskip

\clearpage

\subsection{Parameter learning}\label{app:param_learning}

\noindent\begin{minipage}{\textwidth}\centering\footnotesize
\includegraphics[width=\textwidth]{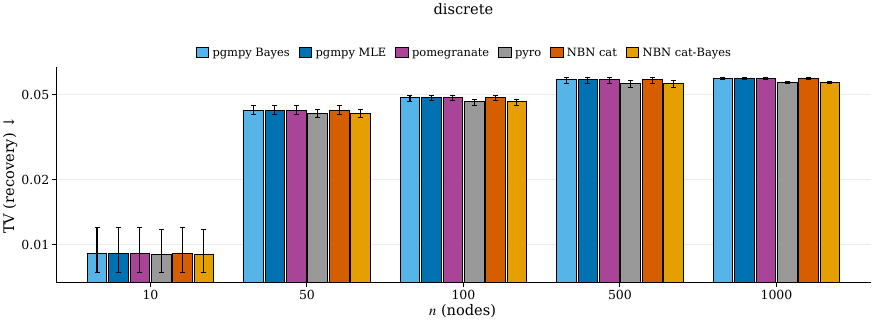}
\captionof{figure}{Parameter learning, all methods: parameter-recovery TV against network size.}\label{fig:app_param_learning_param_recovery_tv}
\end{minipage}\par\bigskip

\noindent\begin{minipage}{\textwidth}\centering\footnotesize
\includegraphics[width=\textwidth]{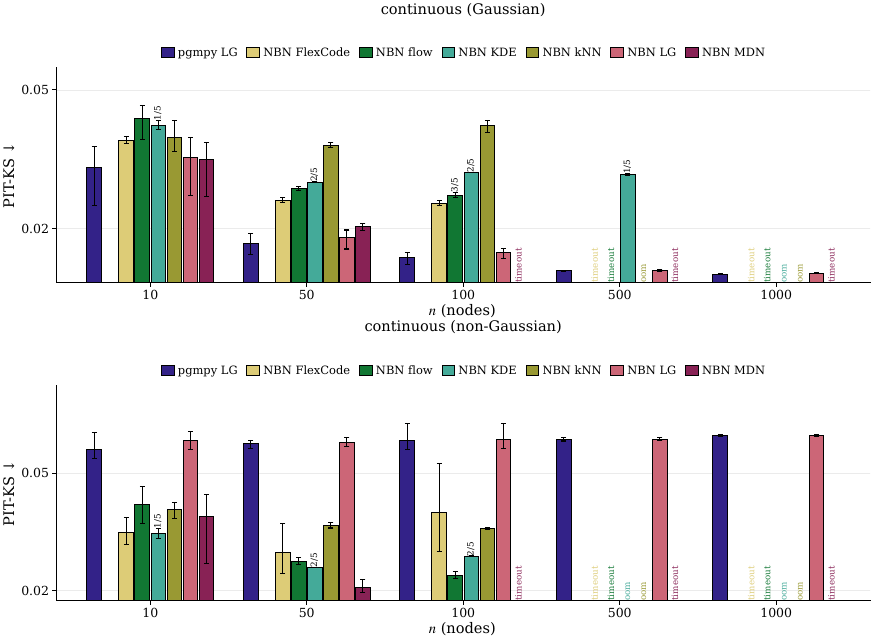}
\captionof{figure}{Parameter learning, all methods: calibration error (PIT-KS) against network size.}\label{fig:app_param_learning_calibration_pit_ks}
\end{minipage}\par\bigskip

\noindent\begin{minipage}{\textwidth}\centering\footnotesize
\includegraphics[width=\textwidth]{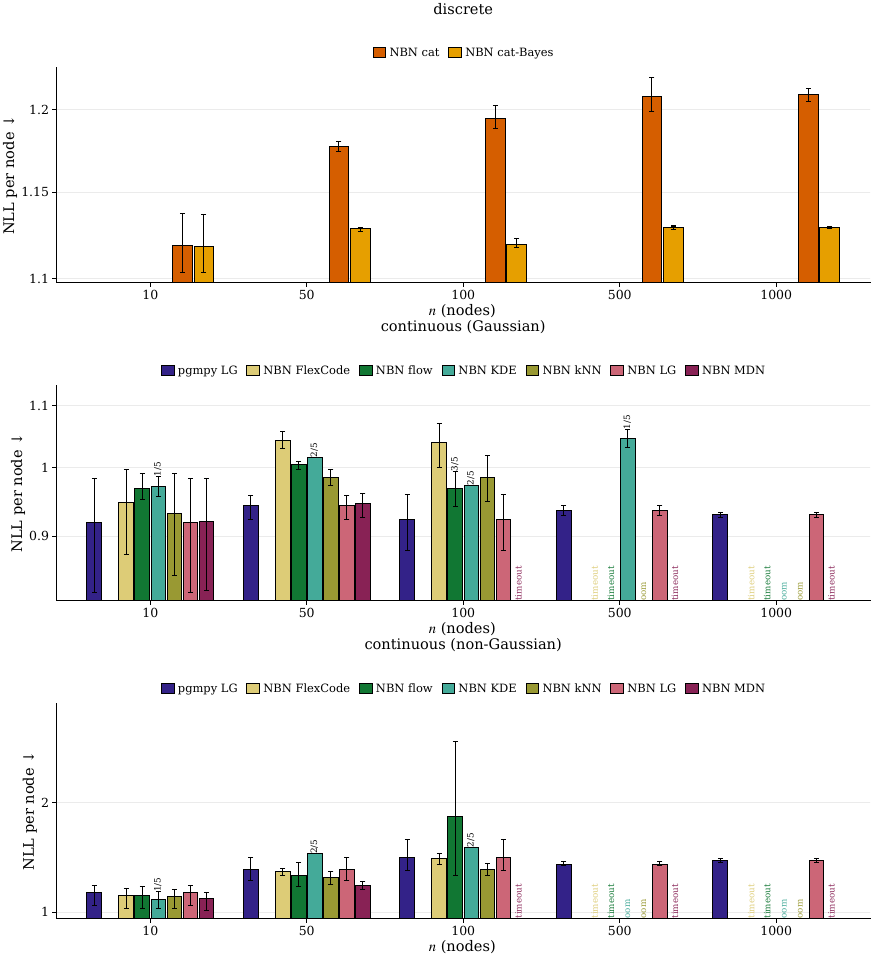}
\captionof{figure}{Parameter learning, all methods: held-out negative log-likelihood per node against network size.}\label{fig:app_param_learning_nll_per_node}
\end{minipage}\par\bigskip

\noindent\begin{minipage}{\textwidth}\centering\footnotesize
\includegraphics[width=\textwidth]{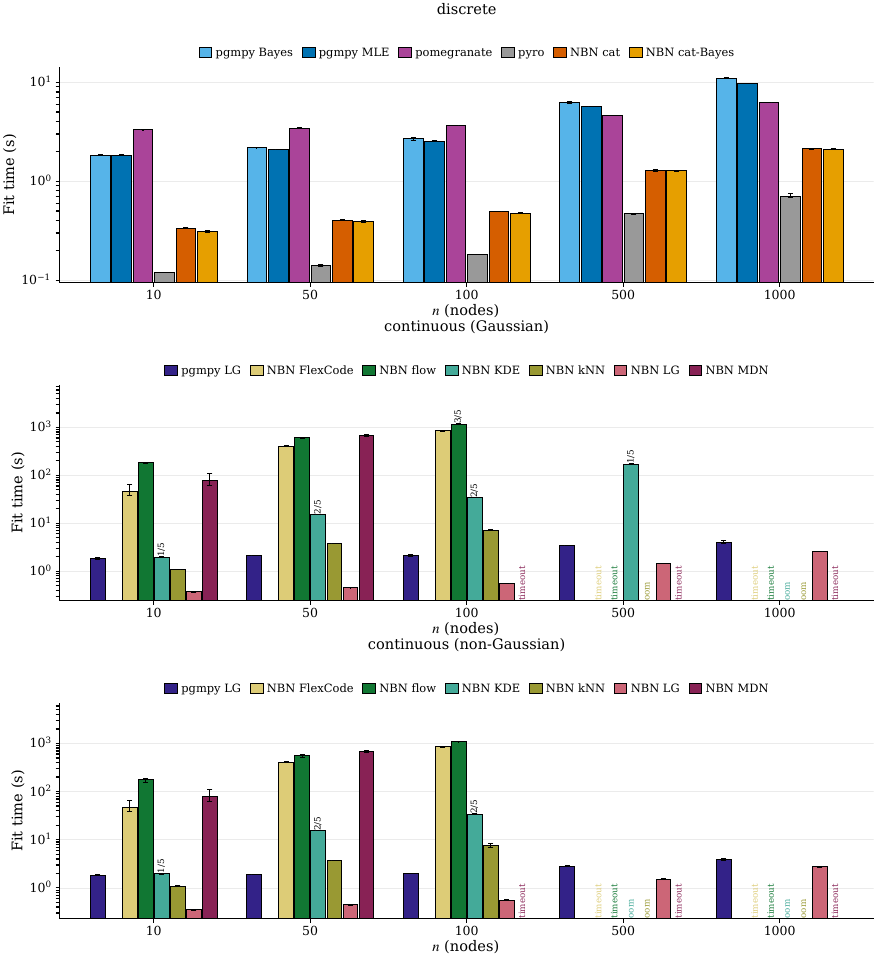}
\captionof{figure}{Parameter learning, all methods: fit time (s) against network size.}\label{fig:app_param_learning_fit_time}
\end{minipage}\par\bigskip

\noindent\begin{minipage}{\textwidth}\centering\footnotesize
\captionof{table}{Parameter learning, discrete networks: parameter-recovery TV (lower is better), against network size.}\label{tab:app_param_learning_discrete_param_recovery_tv}\par\smallskip
\adjustbox{max width=\textwidth}{\begin{tabular}{lrrrrr}
\toprule
Method & $n=10$ & $n=50$ & $n=100$ & $n=500$ & $n=1000$ \\
\midrule
\texttt{pgmpy-bayes} & 0.00907$\pm$0.0023 & 0.0423$\pm$0.002 & 0.0484$\pm$0.0015 & 0.0585$\pm$0.0018 & 0.0592$\pm$0.00063 \\
\texttt{pgmpy-mle} & 0.00908$\pm$0.0023 & 0.0423$\pm$0.002 & 0.0484$\pm$0.0015 & 0.0585$\pm$0.0018 & 0.0592$\pm$0.00063 \\
\texttt{pomegranate-discrete-ve} & 0.00908$\pm$0.0023 & 0.0423$\pm$0.002 & 0.0484$\pm$0.0015 & 0.0585$\pm$0.0018 & 0.0592$\pm$0.00063 \\
\texttt{pyro-empirical} & \textbf{0.00897$\pm$0.0021} & \textbf{0.0408$\pm$0.0017} & \textbf{0.0463$\pm$0.0016} & \textbf{0.0561$\pm$0.002} & \textbf{0.0567$\pm$0.00066} \\
\texttt{nbn-cat} & 0.00908$\pm$0.0023 & 0.0423$\pm$0.002 & 0.0484$\pm$0.0015 & 0.0585$\pm$0.0018 & 0.0592$\pm$0.00063 \\
\texttt{nbn-cat-bayes} & \textbf{0.00897$\pm$0.0021} & \textbf{0.0408$\pm$0.0017} & \textbf{0.0463$\pm$0.0016} & \textbf{0.0561$\pm$0.002} & \textbf{0.0567$\pm$0.00066} \\
\bottomrule
\end{tabular}}
\end{minipage}\par\bigskip

\noindent\begin{minipage}{\textwidth}\centering\footnotesize
\captionof{table}{Parameter learning, discrete networks: held-out negative log-likelihood per node (lower is better), against network size.}\label{tab:app_param_learning_discrete_nll_per_node}\par\smallskip
\adjustbox{max width=\textwidth}{\begin{tabular}{lrrrrr}
\toprule
Method & $n=10$ & $n=50$ & $n=100$ & $n=500$ & $n=1000$ \\
\midrule
\texttt{nbn-cat} & \textbf{1.12$\pm$0.017} & 1.18$\pm$0.003 & 1.19$\pm$0.0069 & 1.21$\pm$0.011 & 1.21$\pm$0.0042 \\
\texttt{nbn-cat-bayes} & \textbf{1.12$\pm$0.017} & \textbf{1.13$\pm$0.0012} & \textbf{1.12$\pm$0.0026} & \textbf{1.13$\pm$0.00093} & \textbf{1.13$\pm$0.00059} \\
\bottomrule
\end{tabular}}
\end{minipage}\par\bigskip

\noindent\begin{minipage}{\textwidth}\centering\footnotesize
\captionof{table}{Parameter learning, discrete networks: fit time in seconds, against network size.}\label{tab:app_param_learning_discrete_fit_time}\par\smallskip
\adjustbox{max width=\textwidth}{\begin{tabular}{lrrrrr}
\toprule
Method & $n=10$ & $n=50$ & $n=100$ & $n=500$ & $n=1000$ \\
\midrule
\texttt{pgmpy-bayes} & 1.83$\pm$0.015 & 2.18$\pm$0.0091 & 2.72$\pm$0.092 & 6.26$\pm$0.18 & 11$\pm$0.16 \\
\texttt{pgmpy-mle} & 1.85$\pm$0.037 & 2.09$\pm$0.0058 & 2.54$\pm$0.016 & 5.73$\pm$0.023 & 9.75$\pm$0.079 \\
\texttt{pomegranate-discrete-ve} & 3.31$\pm$0.0011 & 3.44$\pm$0.025 & 3.66$\pm$0.0033 & 4.63$\pm$0.016 & 6.26$\pm$0.011 \\
\texttt{pyro-empirical} & \textbf{0.12$\pm$0.0011} & \textbf{0.14$\pm$0.0024} & \textbf{0.182$\pm$0.0012} & \textbf{0.469$\pm$0.0074} & \textbf{0.711$\pm$0.032} \\
\texttt{nbn-cat} & 0.336$\pm$0.0024 & 0.401$\pm$0.0046 & 0.495$\pm$0.0039 & 1.3$\pm$0.03 & 2.13$\pm$0.02 \\
\texttt{nbn-cat-bayes} & 0.311$\pm$0.0056 & 0.391$\pm$0.0055 & 0.473$\pm$0.0041 & 1.27$\pm$0.0083 & 2.11$\pm$0.011 \\
\bottomrule
\end{tabular}}
\end{minipage}\par\bigskip

\noindent\begin{minipage}{\textwidth}\centering\footnotesize
\captionof{table}{Parameter learning, linear-Gaussian networks: PIT-KS (lower is better), against network size.}\label{tab:app_param_learning_continuous_lg_calibration_pit_ks}\par\smallskip
\adjustbox{max width=\textwidth}{\begin{tabular}{lrrrrr}
\toprule
Method & $n=10$ & $n=50$ & $n=100$ & $n=500$ & $n=1000$ \\
\midrule
\texttt{pgmpy-lg} & \textbf{0.0299$\pm$0.0056} & \textbf{0.0181$\pm$0.0013} & \textbf{0.0165$\pm$0.00067} & \textbf{0.0151$\pm$2.8e-05} & \textbf{0.0148$\pm$2.9e-05} \\
\texttt{nbn-flexcode} & 0.0358$\pm$0.00089 & 0.0241$\pm$0.00034 & 0.0237$\pm$0.0004 & timeout & timeout \\
\texttt{nbn-flow} & 0.0413$\pm$0.0045 & 0.026$\pm$0.00034 & 0.0249$\pm$0.00022 (3/5) & timeout & timeout \\
\texttt{nbn-kde} & 0.0397$\pm$0.0018 (1/5) & 0.0272$\pm$0.00022 (2/5) & 0.0289$\pm$0.0019 (2/5) & 0.0285$\pm$0.0043 (1/5) & oom \\
\texttt{nbn-knn} & 0.0366$\pm$0.0038 & 0.0346$\pm$0.00049 & 0.0394$\pm$0.0017 & oom & oom \\
\texttt{nbn-lg} & 0.032$\pm$0.0058 & 0.0189$\pm$0.0012 & 0.0171$\pm$0.00056 & 0.0152$\pm$7.5e-05 & 0.0149$\pm$6.4e-05 \\
\texttt{nbn-mdn} & 0.0315$\pm$0.0054 & 0.0203$\pm$0.00048 & timeout & timeout & timeout \\
\bottomrule
\end{tabular}}
\end{minipage}\par\bigskip

\noindent\begin{minipage}{\textwidth}\centering\footnotesize
\captionof{table}{Parameter learning, linear-Gaussian networks: held-out negative log-likelihood per node (lower is better), against network size.}\label{tab:app_param_learning_continuous_lg_nll_per_node}\par\smallskip
\adjustbox{max width=\textwidth}{\begin{tabular}{lrrrrr}
\toprule
Method & $n=10$ & $n=50$ & $n=100$ & $n=500$ & $n=1000$ \\
\midrule
\texttt{pgmpy-lg} & \textbf{0.919$\pm$0.078} & \textbf{0.943$\pm$0.018} & \textbf{0.924$\pm$0.04} & \textbf{0.937$\pm$0.0076} & \textbf{0.931$\pm$0.0041} \\
\texttt{nbn-flexcode} & 0.948$\pm$0.061 & 1.04$\pm$0.014 & 1.04$\pm$0.035 & timeout & timeout \\
\texttt{nbn-flow} & 0.968$\pm$0.02 & 1$\pm$0.006 & 0.968$\pm$0.013 (3/5) & timeout & timeout \\
\texttt{nbn-kde} & 0.971$\pm$0.039 (1/5) & 1.02$\pm$0.018 (2/5) & 0.973$\pm$0.013 (2/5) & 1.05$\pm$0.024 (1/5) & oom \\
\texttt{nbn-knn} & 0.933$\pm$0.072 & 0.985$\pm$0.012 & 0.984$\pm$0.034 & oom & oom \\
\texttt{nbn-lg} & \textbf{0.919$\pm$0.078} & \textbf{0.943$\pm$0.018} & \textbf{0.924$\pm$0.04} & \textbf{0.937$\pm$0.0076} & \textbf{0.931$\pm$0.0041} \\
\texttt{nbn-mdn} & 0.921$\pm$0.078 & 0.947$\pm$0.017 & timeout & timeout & timeout \\
\bottomrule
\end{tabular}}
\end{minipage}\par\bigskip

\noindent\begin{minipage}{\textwidth}\centering\footnotesize
\captionof{table}{Parameter learning, linear-Gaussian networks: fit time in seconds, against network size.}\label{tab:app_param_learning_continuous_lg_fit_time}\par\smallskip
\adjustbox{max width=\textwidth}{\begin{tabular}{lrrrrr}
\toprule
Method & $n=10$ & $n=50$ & $n=100$ & $n=500$ & $n=1000$ \\
\midrule
\texttt{pgmpy-lg} & 1.83$\pm$0.06 & 2.14$\pm$0.025 & 2.16$\pm$0.095 & 3.44$\pm$0.04 & 4.02$\pm$0.25 \\
\texttt{nbn-flexcode} & 47$\pm$14 & 400$\pm$10 & 852$\pm$19 & timeout & timeout \\
\texttt{nbn-flow} & 182$\pm$5.5 & 605$\pm$16 & 1.16e+03$\pm$16 (3/5) & timeout & timeout \\
\texttt{nbn-kde} & 1.96$\pm$0.095 (1/5) & 15.2$\pm$0.23 (2/5) & 34$\pm$0.77 (2/5) & 172$\pm$1.3 (1/5) & oom \\
\texttt{nbn-knn} & 1.08$\pm$0.013 & 3.76$\pm$0.028 & 7.11$\pm$0.16 & oom & oom \\
\texttt{nbn-lg} & \textbf{0.37$\pm$0.0049} & \textbf{0.449$\pm$0.0022} & \textbf{0.547$\pm$0.0073} & \textbf{1.45$\pm$0.0014} & \textbf{2.57$\pm$0.018} \\
\texttt{nbn-mdn} & 77.8$\pm$23 & 674$\pm$27 & timeout & timeout & timeout \\
\bottomrule
\end{tabular}}
\end{minipage}\par\bigskip

\noindent\begin{minipage}{\textwidth}\centering\footnotesize
\captionof{table}{Parameter learning, non-Gaussian networks: PIT-KS (lower is better), against network size.}\label{tab:app_param_learning_continuous_nongauss_calibration_pit_ks}\par\smallskip
\adjustbox{max width=\textwidth}{\begin{tabular}{lrrrrr}
\toprule
Method & $n=10$ & $n=50$ & $n=100$ & $n=500$ & $n=1000$ \\
\midrule
\texttt{pgmpy-lg} & 0.0603$\pm$0.0062 & 0.063$\pm$0.0021 & 0.0646$\pm$0.0068 & \textbf{0.065$\pm$0.00079} & \textbf{0.067$\pm$0.00051} \\
\texttt{nbn-flexcode} & \textbf{0.0314$\pm$0.0033} & 0.027$\pm$0.0054 & 0.0368$\pm$0.013 & timeout & timeout \\
\texttt{nbn-flow} & 0.0392$\pm$0.0056 & 0.0252$\pm$0.00067 & \textbf{0.0225$\pm$0.00058} & timeout & timeout \\
\texttt{nbn-kde} & 0.0313$\pm$0.0024 (1/5) & 0.0241$\pm$0.00052 (2/5) & 0.0262$\pm$0.00022 (2/5) & oom & oom \\
\texttt{nbn-knn} & 0.0376$\pm$0.0024 & 0.0333$\pm$0.00077 & 0.0325$\pm$0.00026 & oom & oom \\
\texttt{nbn-lg} & 0.0644$\pm$0.0047 & 0.0637$\pm$0.0023 & 0.0652$\pm$0.0064 & \textbf{0.065$\pm$0.00068} & \textbf{0.067$\pm$0.00047} \\
\texttt{nbn-mdn} & 0.0356$\pm$0.0088 & \textbf{0.0205$\pm$0.001} & timeout & timeout & timeout \\
\bottomrule
\end{tabular}}
\end{minipage}\par\bigskip

\noindent\begin{minipage}{\textwidth}\centering\footnotesize
\captionof{table}{Parameter learning, non-Gaussian networks: held-out negative log-likelihood per node (lower is better), against network size.}\label{tab:app_param_learning_continuous_nongauss_nll_per_node}\par\smallskip
\adjustbox{max width=\textwidth}{\begin{tabular}{lrrrrr}
\toprule
Method & $n=10$ & $n=50$ & $n=100$ & $n=500$ & $n=1000$ \\
\midrule
\texttt{pgmpy-lg} & 1.13$\pm$0.071 & 1.31$\pm$0.095 & 1.42$\pm$0.14 & \textbf{1.36$\pm$0.015} & \textbf{1.39$\pm$0.016} \\
\texttt{nbn-flexcode} & 1.11$\pm$0.068 & 1.3$\pm$0.027 & 1.41$\pm$0.048 & timeout & timeout \\
\texttt{nbn-flow} & 1.11$\pm$0.076 & 1.26$\pm$0.096 & 1.84$\pm$0.84 & timeout & timeout \\
\texttt{nbn-kde} & 1.08$\pm$0.087 (1/5) & 1.45$\pm$3.4 (2/5) & 1.51$\pm$0.032 (2/5) & oom & oom \\
\texttt{nbn-knn} & 1.11$\pm$0.067 & 1.25$\pm$0.053 & \textbf{1.31$\pm$0.048} & oom & oom \\
\texttt{nbn-lg} & 1.13$\pm$0.071 & 1.31$\pm$0.095 & 1.42$\pm$0.14 & \textbf{1.36$\pm$0.015} & \textbf{1.39$\pm$0.016} \\
\texttt{nbn-mdn} & \textbf{1.09$\pm$0.06} & \textbf{1.19$\pm$0.031} & timeout & timeout & timeout \\
\bottomrule
\end{tabular}}
\end{minipage}\par\bigskip

\noindent\begin{minipage}{\textwidth}\centering\footnotesize
\captionof{table}{Parameter learning, non-Gaussian networks: fit time in seconds, against network size.}\label{tab:app_param_learning_continuous_nongauss_fit_time}\par\smallskip
\adjustbox{max width=\textwidth}{\begin{tabular}{lrrrrr}
\toprule
Method & $n=10$ & $n=50$ & $n=100$ & $n=500$ & $n=1000$ \\
\midrule
\texttt{pgmpy-lg} & 1.84$\pm$0.033 & 1.92$\pm$0.02 & 2.01$\pm$0.016 & 2.8$\pm$0.059 & 3.93$\pm$0.15 \\
\texttt{nbn-flexcode} & 47.3$\pm$14 & 404$\pm$14 & 858$\pm$18 & timeout & timeout \\
\texttt{nbn-flow} & 178$\pm$16 & 553$\pm$41 & 1.08e+03$\pm$10 & timeout & timeout \\
\texttt{nbn-kde} & 1.97$\pm$0.1 (1/5) & 15.4$\pm$0.23 (2/5) & 34.3$\pm$2 (2/5) & oom & oom \\
\texttt{nbn-knn} & 1.1$\pm$0.0083 & 3.75$\pm$0.0084 & 7.57$\pm$0.77 & oom & oom \\
\texttt{nbn-lg} & \textbf{0.356$\pm$0.0075} & \textbf{0.449$\pm$0.013} & \textbf{0.553$\pm$0.01} & \textbf{1.51$\pm$0.031} & \textbf{2.76$\pm$0.028} \\
\texttt{nbn-mdn} & 79.4$\pm$24 & 679$\pm$24 & timeout & timeout & timeout \\
\bottomrule
\end{tabular}}
\end{minipage}\par\bigskip

\clearpage

\subsection{Sample efficiency}\label{app:learning_curves}

\noindent\begin{minipage}{\textwidth}\centering\footnotesize
\includegraphics[width=\textwidth]{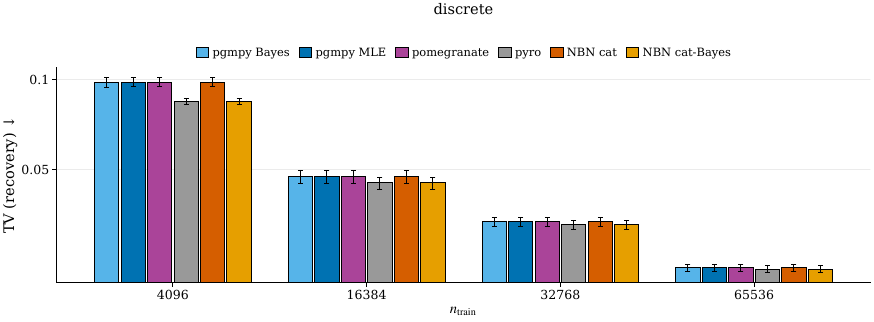}
\captionof{figure}{Sample efficiency, all methods: parameter-recovery TV against training-set size.}\label{fig:app_learning_curves_param_recovery_tv}
\end{minipage}\par\bigskip

\noindent\begin{minipage}{\textwidth}\centering\footnotesize
\includegraphics[width=\textwidth]{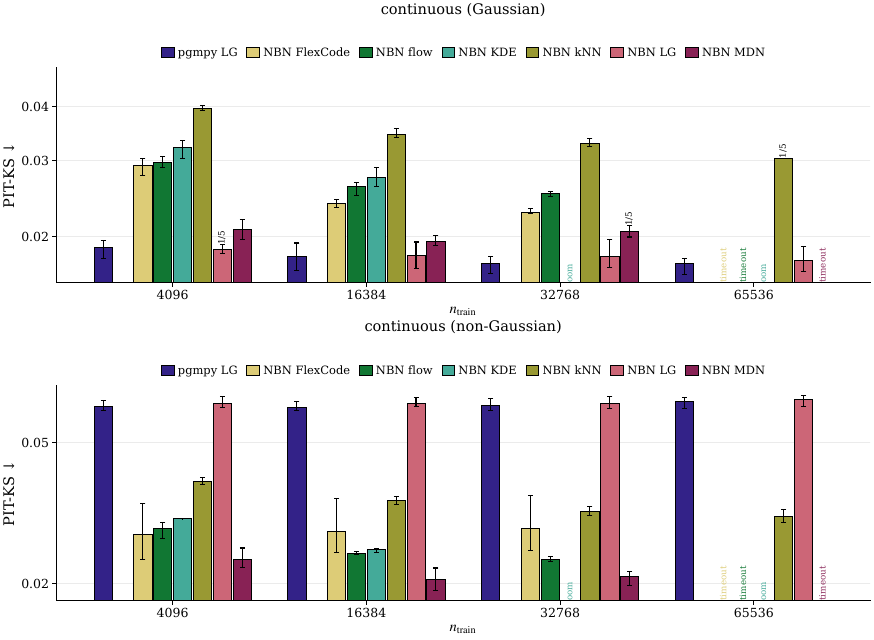}
\captionof{figure}{Sample efficiency, all methods: calibration error (PIT-KS) against training-set size.}\label{fig:app_learning_curves_calibration_pit_ks}
\end{minipage}\par\bigskip

\noindent\begin{minipage}{\textwidth}\centering\footnotesize
\includegraphics[width=\textwidth]{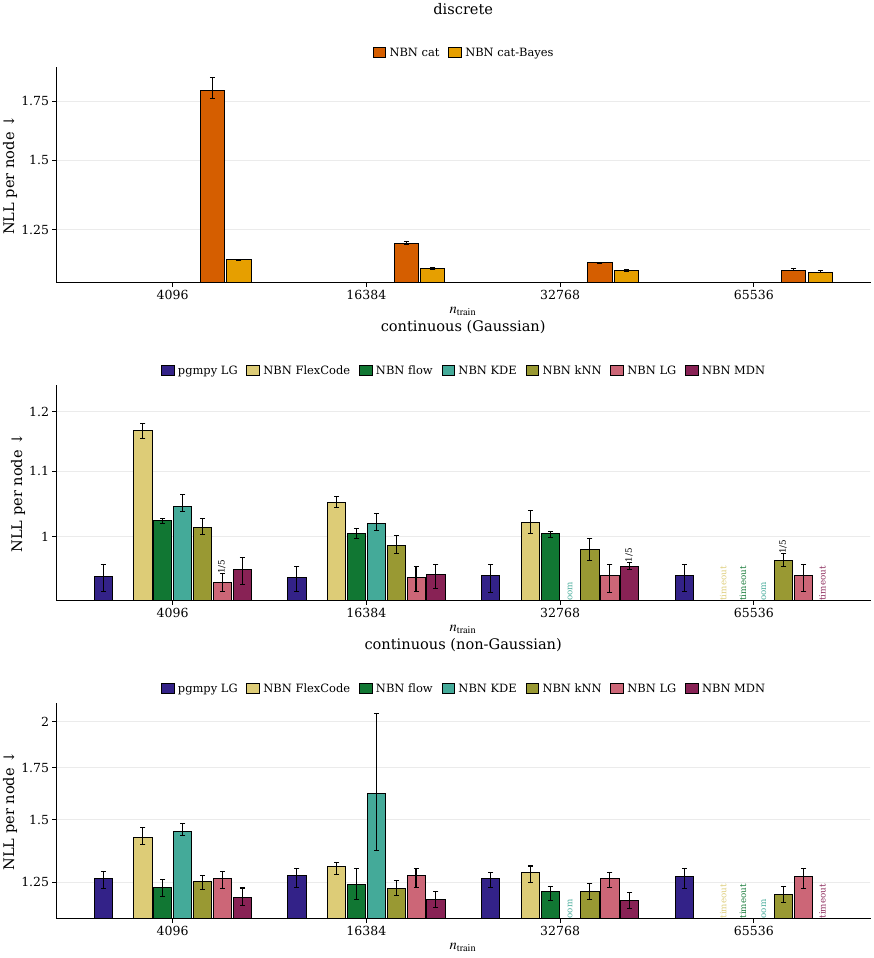}
\captionof{figure}{Sample efficiency, all methods: held-out negative log-likelihood per node against training-set size.}\label{fig:app_learning_curves_nll_per_node}
\end{minipage}\par\bigskip

\noindent\begin{minipage}{\textwidth}\centering\footnotesize
\includegraphics[width=\textwidth]{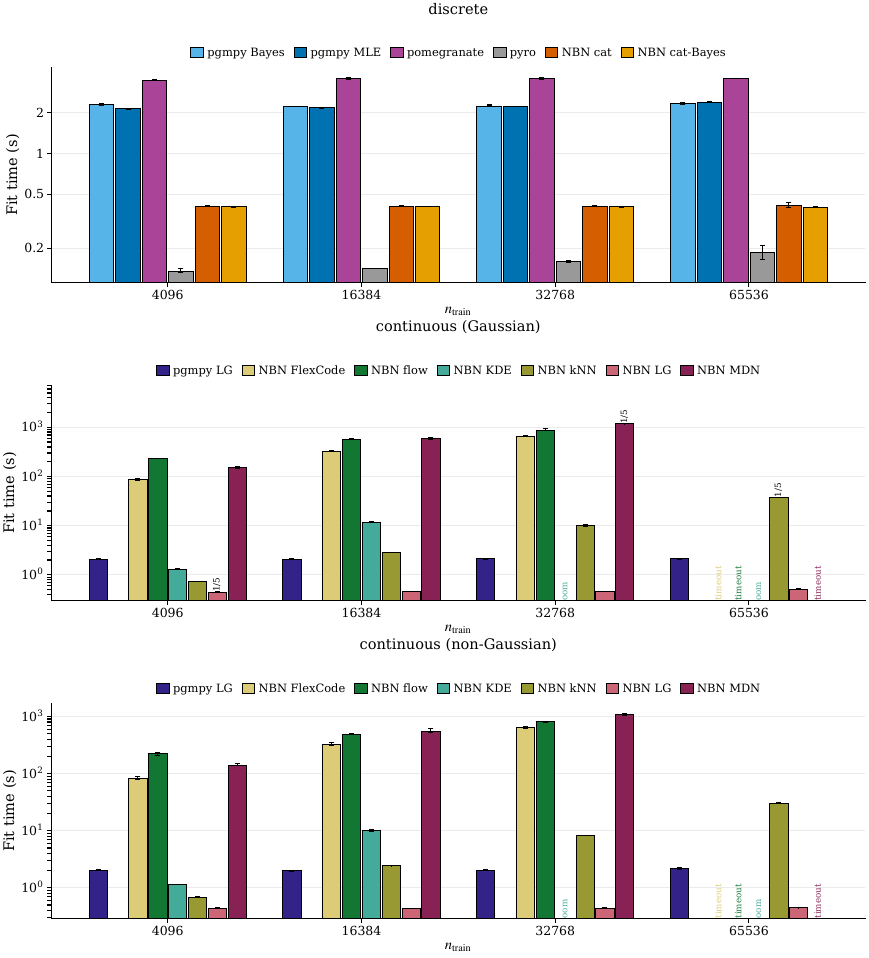}
\captionof{figure}{Sample efficiency, all methods: fit time (s) against training-set size.}\label{fig:app_learning_curves_fit_time}
\end{minipage}\par\bigskip

\noindent\begin{minipage}{\textwidth}\centering\footnotesize
\captionof{table}{Sample efficiency, discrete networks: parameter-recovery TV (lower is better), against training-set size.}\label{tab:app_learning_curves_discrete_param_recovery_tv}\par\smallskip
\adjustbox{max width=\textwidth}{\begin{tabular}{lrrrr}
\toprule
Method & $n_{tr}=4096$ & $n_{tr}=16384$ & $n_{tr}=32768$ & $n_{tr}=65536$ \\
\midrule
\texttt{pgmpy-bayes} & 0.0977$\pm$0.0035 & 0.0474$\pm$0.0023 & 0.0335$\pm$0.0011 & 0.0236$\pm$0.00064 \\
\texttt{pgmpy-mle} & 0.0979$\pm$0.0035 & 0.0474$\pm$0.0023 & 0.0336$\pm$0.0011 & 0.0236$\pm$0.00064 \\
\texttt{pomegranate-discrete-ve} & 0.0979$\pm$0.0035 & 0.0474$\pm$0.0023 & 0.0336$\pm$0.0011 & 0.0236$\pm$0.00064 \\
\texttt{pyro-empirical} & \textbf{0.0847$\pm$0.0019} & \textbf{0.0453$\pm$0.0021} & \textbf{0.0328$\pm$0.001} & \textbf{0.0233$\pm$0.00062} \\
\texttt{nbn-cat} & 0.0979$\pm$0.0035 & 0.0474$\pm$0.0023 & 0.0336$\pm$0.0011 & 0.0236$\pm$0.00064 \\
\texttt{nbn-cat-bayes} & \textbf{0.0847$\pm$0.0019} & \textbf{0.0453$\pm$0.0021} & \textbf{0.0328$\pm$0.001} & \textbf{0.0233$\pm$0.00062} \\
\bottomrule
\end{tabular}}
\end{minipage}\par\bigskip

\noindent\begin{minipage}{\textwidth}\centering\footnotesize
\captionof{table}{Sample efficiency, discrete networks: held-out negative log-likelihood per node (lower is better), against training-set size.}\label{tab:app_learning_curves_discrete_nll_per_node}\par\smallskip
\adjustbox{max width=\textwidth}{\begin{tabular}{lrrrr}
\toprule
Method & $n_{tr}=4096$ & $n_{tr}=16384$ & $n_{tr}=32768$ & $n_{tr}=65536$ \\
\midrule
\texttt{nbn-cat} & 1.8$\pm$0.049 & 1.21$\pm$0.0047 & 1.15$\pm$0.0015 & \textbf{1.12$\pm$0.0029} \\
\texttt{nbn-cat-bayes} & \textbf{1.16$\pm$0.0017} & \textbf{1.13$\pm$0.0031} & \textbf{1.12$\pm$0.0035} & \textbf{1.12$\pm$0.0031} \\
\bottomrule
\end{tabular}}
\end{minipage}\par\bigskip

\noindent\begin{minipage}{\textwidth}\centering\footnotesize
\captionof{table}{Sample efficiency, discrete networks: fit time in seconds, against training-set size.}\label{tab:app_learning_curves_discrete_fit_time}\par\smallskip
\adjustbox{max width=\textwidth}{\begin{tabular}{lrrrr}
\toprule
Method & $n_{tr}=4096$ & $n_{tr}=16384$ & $n_{tr}=32768$ & $n_{tr}=65536$ \\
\midrule
\texttt{pgmpy-bayes} & 2.29$\pm$0.025 & 2.22$\pm$0.0038 & 2.25$\pm$0.035 & 2.34$\pm$0.023 \\
\texttt{pgmpy-mle} & 2.14$\pm$0.0092 & 2.18$\pm$0.02 & 2.23$\pm$0.013 & 2.39$\pm$0.017 \\
\texttt{pomegranate-discrete-ve} & 3.5$\pm$0.0068 & 3.56$\pm$0.049 & 3.57$\pm$0.063 & 3.59$\pm$0.021 \\
\texttt{pyro-empirical} & \textbf{0.135$\pm$0.0045} & \textbf{0.142$\pm$0.0004} & \textbf{0.16$\pm$0.0036} & \textbf{0.187$\pm$0.022} \\
\texttt{nbn-cat} & 0.407$\pm$0.0024 & 0.41$\pm$0.00093 & 0.408$\pm$0.003 & 0.413$\pm$0.019 \\
\texttt{nbn-cat-bayes} & 0.404$\pm$0.0046 & 0.406$\pm$0.0029 & 0.404$\pm$0.002 & 0.401$\pm$0.0052 \\
\bottomrule
\end{tabular}}
\end{minipage}\par\bigskip

\noindent\begin{minipage}{\textwidth}\centering\footnotesize
\captionof{table}{Sample efficiency, linear-Gaussian networks: PIT-KS (lower is better), against training-set size.}\label{tab:app_learning_curves_continuous_lg_calibration_pit_ks}\par\smallskip
\adjustbox{max width=\textwidth}{\begin{tabular}{lrrrr}
\toprule
Method & $n_{tr}=4096$ & $n_{tr}=16384$ & $n_{tr}=32768$ & $n_{tr}=65536$ \\
\midrule
\texttt{pgmpy-lg} & \textbf{0.0189$\pm$0.00091} & \textbf{0.018$\pm$0.0013} & \textbf{0.0173$\pm$0.0008} & \textbf{0.0173$\pm$0.00073} \\
\texttt{nbn-flexcode} & 0.0292$\pm$0.0013 & 0.0238$\pm$0.00052 & 0.0228$\pm$0.00029 & timeout \\
\texttt{nbn-flow} & 0.0297$\pm$0.0009 & 0.0261$\pm$0.00093 & 0.0252$\pm$0.00034 & timeout \\
\texttt{nbn-kde} & 0.0323$\pm$0.0015 & 0.0275$\pm$0.0014 & oom & oom \\
\texttt{nbn-knn} & 0.0397$\pm$0.00051 & 0.0346$\pm$0.00085 & 0.0329$\pm$0.00076 & 0.0304$\pm$0.00049 (1/5) \\
\texttt{nbn-lg} & 0.0187$\pm$0.00084 (1/5) & 0.0181$\pm$0.0013 & 0.018$\pm$0.0014 & 0.0176$\pm$0.0012 \\
\texttt{nbn-mdn} & 0.0208$\pm$0.0011 & 0.0195$\pm$0.00055 & 0.0206$\pm$0.00096 (1/5) & timeout \\
\bottomrule
\end{tabular}}
\end{minipage}\par\bigskip

\noindent\begin{minipage}{\textwidth}\centering\footnotesize
\captionof{table}{Sample efficiency, linear-Gaussian networks: held-out negative log-likelihood per node (lower is better), against training-set size.}\label{tab:app_learning_curves_continuous_lg_nll_per_node}\par\smallskip
\adjustbox{max width=\textwidth}{\begin{tabular}{lrrrr}
\toprule
Method & $n_{tr}=4096$ & $n_{tr}=16384$ & $n_{tr}=32768$ & $n_{tr}=65536$ \\
\midrule
\texttt{pgmpy-lg} & \textbf{0.943$\pm$0.019} & \textbf{0.942$\pm$0.017} & \textbf{0.944$\pm$0.019} & \textbf{0.945$\pm$0.019} \\
\texttt{nbn-flexcode} & 1.17$\pm$0.013 & 1.05$\pm$0.0087 & 1.02$\pm$0.017 & timeout \\
\texttt{nbn-flow} & 1.02$\pm$0.0043 & 1.01$\pm$0.0073 & 1$\pm$0.0043 & timeout \\
\texttt{nbn-kde} & 1.05$\pm$0.013 & 1.02$\pm$0.013 & oom & oom \\
\texttt{nbn-knn} & 1.01$\pm$0.012 & 0.988$\pm$0.013 & 0.981$\pm$0.015 & 0.966$\pm$0.013 (1/5) \\
\texttt{nbn-lg} & 0.935$\pm$0.015 (1/5) & \textbf{0.942$\pm$0.017} & \textbf{0.944$\pm$0.019} & \textbf{0.945$\pm$0.019} \\
\texttt{nbn-mdn} & 0.953$\pm$0.019 & 0.946$\pm$0.017 & 0.958$\pm$0.011 (1/5) & timeout \\
\bottomrule
\end{tabular}}
\end{minipage}\par\bigskip

\noindent\begin{minipage}{\textwidth}\centering\footnotesize
\captionof{table}{Sample efficiency, linear-Gaussian networks: fit time in seconds, against training-set size.}\label{tab:app_learning_curves_continuous_lg_fit_time}\par\smallskip
\adjustbox{max width=\textwidth}{\begin{tabular}{lrrrr}
\toprule
Method & $n_{tr}=4096$ & $n_{tr}=16384$ & $n_{tr}=32768$ & $n_{tr}=65536$ \\
\midrule
\texttt{pgmpy-lg} & 2.06$\pm$0.049 & 2.06$\pm$0.041 & 2.14$\pm$0.043 & 2.12$\pm$0.057 \\
\texttt{nbn-flexcode} & 86$\pm$4.3 & 331$\pm$9.1 & 664$\pm$25 & timeout \\
\texttt{nbn-flow} & 232$\pm$3.9 & 573$\pm$5.5 & 886$\pm$42 & timeout \\
\texttt{nbn-kde} & 1.29$\pm$0.023 & 11.9$\pm$0.049 & oom & oom \\
\texttt{nbn-knn} & \textbf{0.737$\pm$0.0044} & 2.86$\pm$0.034 & 9.99$\pm$0.39 & 37.7$\pm$0.53 (1/5) \\
\texttt{nbn-lg} & 0.451$\pm$0.0072 (1/5) & \textbf{0.462$\pm$0.0039} & \textbf{0.464$\pm$0.00023} & \textbf{0.509$\pm$0.015} \\
\texttt{nbn-mdn} & 153$\pm$7.3 & 589$\pm$20 & 1.18e+03$\pm$12 (1/5) & timeout \\
\bottomrule
\end{tabular}}
\end{minipage}\par\bigskip

\noindent\begin{minipage}{\textwidth}\centering\footnotesize
\captionof{table}{Sample efficiency, non-Gaussian networks: PIT-KS (lower is better), against training-set size.}\label{tab:app_learning_curves_continuous_nongauss_calibration_pit_ks}\par\smallskip
\adjustbox{max width=\textwidth}{\begin{tabular}{lrrrr}
\toprule
Method & $n_{tr}=4096$ & $n_{tr}=16384$ & $n_{tr}=32768$ & $n_{tr}=65536$ \\
\midrule
\texttt{pgmpy-lg} & 0.0632$\pm$0.0021 & 0.063$\pm$0.002 & 0.0635$\pm$0.0024 & 0.0653$\pm$0.0022 \\
\texttt{nbn-flexcode} & 0.0275$\pm$0.0051 & 0.028$\pm$0.0052 & 0.0287$\pm$0.0054 & timeout \\
\texttt{nbn-flow} & 0.0286$\pm$0.0015 & 0.0244$\pm$0.00023 & 0.0234$\pm$0.00035 & timeout \\
\texttt{nbn-kde} & 0.0305$\pm$2.7e-05 & 0.025$\pm$0.00032 & oom & oom \\
\texttt{nbn-knn} & 0.0388$\pm$0.00089 & 0.0345$\pm$0.0009 & 0.032$\pm$0.00092 & \textbf{0.0311$\pm$0.0013} \\
\texttt{nbn-lg} & 0.0646$\pm$0.0022 & 0.0645$\pm$0.002 & 0.0647$\pm$0.0025 & 0.0663$\pm$0.0024 \\
\texttt{nbn-mdn} & \textbf{0.0234$\pm$0.0015} & \textbf{0.0206$\pm$0.0015} & \textbf{0.021$\pm$0.00093} & timeout \\
\bottomrule
\end{tabular}}
\end{minipage}\par\bigskip

\noindent\begin{minipage}{\textwidth}\centering\footnotesize
\captionof{table}{Sample efficiency, non-Gaussian networks: held-out negative log-likelihood per node (lower is better), against training-set size.}\label{tab:app_learning_curves_continuous_nongauss_nll_per_node}\par\smallskip
\adjustbox{max width=\textwidth}{\begin{tabular}{lrrrr}
\toprule
Method & $n_{tr}=4096$ & $n_{tr}=16384$ & $n_{tr}=32768$ & $n_{tr}=65536$ \\
\midrule
\texttt{pgmpy-lg} & 1.26$\pm$0.03 & 1.27$\pm$0.035 & 1.27$\pm$0.029 & 1.27$\pm$0.037 \\
\texttt{nbn-flexcode} & 1.42$\pm$0.037 & 1.31$\pm$0.022 & 1.29$\pm$0.031 & timeout \\
\texttt{nbn-flow} & 1.23$\pm$0.03 & 1.24$\pm$0.055 & 1.22$\pm$0.026 & timeout \\
\texttt{nbn-kde} & 1.45$\pm$0.025 & 1.62$\pm$0.34 & oom & oom \\
\texttt{nbn-knn} & 1.25$\pm$0.026 & 1.23$\pm$0.026 & 1.22$\pm$0.027 & \textbf{1.21$\pm$0.028} \\
\texttt{nbn-lg} & 1.26$\pm$0.03 & 1.27$\pm$0.035 & 1.27$\pm$0.029 & 1.27$\pm$0.037 \\
\texttt{nbn-mdn} & \textbf{1.2$\pm$0.03} & \textbf{1.19$\pm$0.028} & \textbf{1.19$\pm$0.029} & timeout \\
\bottomrule
\end{tabular}}
\end{minipage}\par\bigskip

\noindent\begin{minipage}{\textwidth}\centering\footnotesize
\captionof{table}{Sample efficiency, non-Gaussian networks: fit time in seconds, against training-set size.}\label{tab:app_learning_curves_continuous_nongauss_fit_time}\par\smallskip
\adjustbox{max width=\textwidth}{\begin{tabular}{lrrrr}
\toprule
Method & $n_{tr}=4096$ & $n_{tr}=16384$ & $n_{tr}=32768$ & $n_{tr}=65536$ \\
\midrule
\texttt{pgmpy-lg} & 2.04$\pm$0.044 & 1.99$\pm$0.018 & 2.04$\pm$0.016 & 2.2$\pm$0.096 \\
\texttt{nbn-flexcode} & 83.4$\pm$4.5 & 324$\pm$17 & 633$\pm$19 & timeout \\
\texttt{nbn-flow} & 222$\pm$13 & 488$\pm$4.8 & 816$\pm$20 & timeout \\
\texttt{nbn-kde} & 1.15$\pm$0.02 & 10.2$\pm$0.45 & oom & oom \\
\texttt{nbn-knn} & 0.681$\pm$0.012 & 2.46$\pm$0.0066 & 8.31$\pm$0.054 & 30.2$\pm$0.49 \\
\texttt{nbn-lg} & \textbf{0.442$\pm$0.0072} & \textbf{0.44$\pm$0.0015} & \textbf{0.44$\pm$0.0089} & \textbf{0.452$\pm$0.0021} \\
\texttt{nbn-mdn} & 142$\pm$8.3 & 552$\pm$42 & 1.09e+03$\pm$44 & timeout \\
\bottomrule
\end{tabular}}
\end{minipage}\par\bigskip

\clearpage